\DeclareMathSymbol{\shortminus}{\mathbin}{AMSa}{"39}
\DeclareMathOperator*{\argmin}{argmin}
\newcommand{\app}{{App.~}}
\newcommand{\appl}{{Appendix~}}
\newcommand{\fig}{{Fig.~}}
\newcommand{\tab}{{Table~}}
\newcommand{\prop}{{Prop.~}}
\newcommand{\thm}{{Thm~}}
\newcommand{\cor}{{Corr~}}
\newcommand{\lem}{{Lemma~}}
\newcommand{\ba}{\bm{a}}
\newcommand{\bx}{\bm{x}}
\newcommand{\by}{\bm{y}}
\newcommand{\bw}{\bm{w}}
\newcommand{\sQ}{\textsf{Q}}
\newcommand{\bW}{\bm{W}}
\newcommand{\bmm}{\bm{m}}
\newcommand{\X}{\mathbb{X}}
\newcommand{\R}{\mathbb{R}}
\newcommand{\E}{\mathbb{E}}
\newcommand{\bX}{{\mathbf X}}
\newcommand{\hQ}{\bm{Q}}
\newcommand{\hW}{{\bW}}
\newcommand{\hV}{{V}}
\newcommand{\cm}{\mathbbm{m}}
\newcommand{\iX}{\mathbb{X}}
\newcommand{\iE}{\mathbb{E}}
\newcommand{\var}{\mathrm{var}} 
\newcommand{\Dm}{\Delta m}
\newcommand{\lmat}{{V}}
\newcommand{\rmat}{{U}}
\newcommand{\Bl}{{B}}
\newcommand{\Al}{{A}}
\newcommand{\ri}{{l}}
\newcommand{\rn}{{n}}
\newcommand{\rl}{{l}}
\newcommand{\tL}{{N}}
\newcommand{\mL}{K}
\newcommand{\ttL}{{L}}
\newcommand{\omverm}{O\hspace{-2pt}\left(\hspace{-2pt}\frac{1}{\cm}\hspace{-2pt}\right)}
\newcommand{\omvermK}{O\left(\frac{\mL}{\cm}\right)}
\newcommand{\omvermq}{O\left(\frac{q}{\cm}\right)}
\newcommand{\repI}{F}
\newcommand{\nunu}{{t'}}
\newcommand{\rhorho}{{t''}}
\newcommand{\SXX}{\Sigma_{\scriptscriptstyle XX}}
\newcommand{\SYX}{\Sigma_{\scriptscriptstyle YX}}
\definecolor{airforceblue}{rgb}{0.36, 0.54, 0.66}
\newcommand{\hlt}[1]{{\color{airforceblue} #1}}
\newcommand{\texp}[1]{\quad\mathrm{\hlt{(#1)}}}
\newcommand{\texq}[1]{\mathrm{\hlt{(#1)}}}
\newcommand{\comment}[1]{{}}
\begin{document}

\title{Principal Components Bias in Over-parameterized Linear Models, and its Manifestation in Deep Neural Networks}

\author{\name Guy Hacohen \email guy.hacohen@mail.huji.ac.il \\
       \addr The School of Computer Science and Engineering\\
       Edmond and Lily Safra Center for Brain Sciences\\
       The Hebrew University of Jerusalem\\
       Jerusalem, Israel \\
       \AND
       \name Daphna Weinshall \email daphna@mail.huji.ac.il \\
       \addr The School of Computer Science and Engineering\\
       The Hebrew University of Jerusalem\\
       Jerusalem, Israel \\}

\editor{Christoph Lampert}

\maketitle

\begin{abstract}
Recent work suggests that convolutional neural networks of different architectures learn to classify images in the same order. To understand this phenomenon, we revisit the over-parametrized deep linear network model. Our analysis reveals that, when the hidden layers are wide enough, the convergence rate of this model's parameters is exponentially faster along the directions of the larger principal components of the data, at a rate governed by the corresponding singular values. We term this convergence pattern the \emph{Principal Components bias (PC-bias)}. Empirically, we show how the \emph{PC-bias} streamlines the order of learning of both linear and non-linear networks, more prominently at earlier stages of learning. We then compare our results to the simplicity bias, showing that both biases can be seen independently, and affect the order of learning in different ways. Finally, we discuss how the \emph{PC-bias} may explain some benefits of early stopping and its connection to PCA, and why deep networks converge more slowly with random labels.
\end{abstract}

\begin{keywords}
Deep linear networks, Learning dynamics, PC-bias, Simplicity bias, Learning order.
\end{keywords}

\section{Introduction}
\label{sec:intro}

The dynamics of learning in deep neural networks is an intriguing subject, not yet sufficiently understood. Recent empirical studies of learning dynamics \citep{hacohen2020let,pliushch2021deep,DBLP:conf/acl/ChoshenHWA22} showed that neural networks memorize the training examples of natural datasets in a consistent order, and further impose a consistent order on the successful recognition of unseen examples. Below we call this effect \emph{Learning Order Constancy} (LOC). Currently, the characteristics of visual data, which may explain this phenomenon, remain unclear. Surprisingly, this universal order persists despite the variability introduced into the training of different models and architectures.

To understand this phenomenon, we start by analyzing the deep linear network model \citep{DBLP:journals/corr/SaxeMG13,saxe2019mathematical}, defined by the concatenation of linear operators in a multi-class classification setting. Accordingly, in Section~\ref{sec:theo-res} we prove that the convergence of the weights of deep linear networks is governed by the eigendecomposition of the raw data, which is blind to the labels of the data, in a phenomenon we term \emph{PC-bias}. These results are valid when the hidden layers are wide enough, a generalization of the known behavior of the single-layer convex linear model.  
In Section~\ref{sec:results}, we empirically show that this pattern of convergence is indeed observed in deep linear networks of rather a moderate width, validating the plausibility of our assumptions. We continue by showing that the \emph{LOC-effect} in deep linear networks is determined solely by their \emph{PC-bias}. We prove a similar (weaker) result for the non-linear two-layer ReLU model trained on a binary classification problem, introduced by \citet{DBLP:conf/nips/Allen-ZhuLL19}.

In Section~\ref{sec:pc_bias_explain_loc}, we extend the study empirically to non-linear networks and investigate the relationship between the \emph{PC-bias} and the \emph{LOC-effect} in general deep networks. We first show that the order by which examples are learned by linear networks is highly correlated with the order induced by prevalent deep CNN models. We then show directly that the learning order of non-linear CNN models is affected by the principal components decomposition of the data. Moreover, the \emph{LOC-effect} diminishes when the data is whitened, indicating a tight connection between the \emph{PC-bias} and the \emph{LOC-effect}.

Our results are reminiscent of another phenomenon, termed \emph{Spectral bias} (see Section~\ref{sec:simplicity}), which associates the learning dynamics of neural networks with the Fourier decomposition of functions in the hypothesis space. In Section~\ref{sec:empricial-freq-bias} we investigate the relation between the \emph{PC-bias}, \emph{spectral bias}, and the \emph{LOC-effect}. We find that the \emph{LOC-effect} is very robust: (i) when we neutralize the \emph{spectral bias} by using low complexity models such as deep linear networks, the effect is still observed; (ii) when we neutralize the \emph{PC-bias} by using whitened data, the \emph{LOC-effect} persists. We hypothesize that at the beginning of learning, the learning dynamics of neural models is governed by the eigendecomposition of the raw data. As learning proceeds, control of the dynamics slowly shifts to other factors.

The PC-bias has implications beyond the \emph{LOC-effect}, as expanded in Section~\ref{sec:implication}:

\paragraph{(i) Early stopping.}
It is often observed that when training deep networks with real data, the highest generalization accuracy is obtained before convergence. Consequently, early stopping is often prescribed to improve generalization.
Following the commonly used assumption that in natural images the lowest principal components correspond to noise \citep{torralba2003statistics}, our results predict the benefits of early stopping, and relate it to PCA. In Section~\ref{sec:implication} we investigate the relevance of this conclusion to real non-linear networks \citep[see][for complementary accounts]{ronen2019convergence,li2020gradient}.

\paragraph{(ii) Slower convergence with random labels.}
\citet{DBLP:conf/iclr/ZhangBHRV17} showed that neural networks can learn any label assignment. However, training with random label assignments is known to converge slower as compared to training with the original labels \citep{DBLP:conf/iclr/KruegerBJAKMBFC17}. We report a similar phenomenon when training deep linear networks. Our analysis shows that when the principal eigenvectors are correlated with class identity, as is often the case in natural images, the loss decreases faster when given true label assignments as against random label assignments. In Section~\ref{sec:implication} we investigate this hypothesis empirically in linear and  non-linear networks.

\paragraph{(iii) Weight initialization.}
Different weight initialization schemes have been proposed to stabilize the learning and minimize the hazard of "exploding gradients" \citep[for example,][]{glorot2010understanding,He_2015_ICCV}. Our analysis (see \appl\ref{app:random}) identifies a related variant, which eliminates the hazard when all the hidden layers are roughly of equal width. In the deep linear model, it can be proven that the proposed normalization variant in a sense minimizes repeated gradient amplification.

\section{Scientific Background and Previous Work}
\label{sec:previous-work}

Below we review related work, and discuss the relation of our work to this prior art.

\subsection{Deep Over-Parametrized Neural Networks and Other Simple Models}
\label{sec:background}

A large body of work concerns the analysis of deep over-parameterized linear networks. While not a universal approximator, this model is nevertheless trained by minimizing a non-convex objective function with a multitude of equally valued global minima. The investigation of such networks is often employed to shed light on the learning dynamics when complex geometric landscapes are explored by GD \citep{fukumizu1998effect,arora2018optimization,DBLP:conf/nips/WuWM19,du2019width,du2019gradient,DBLP:conf/iclr/HuXP20,DBLP:conf/iclr/YunKM21}. Note that while such networks provably achieve a simple low-rank solution \citep{DBLP:conf/iclr/JiT19,DBLP:conf/nips/DuHL18} when given linearly separable data, in our work this strict assumption is not needed.

Early on \citet{baldi1989neural} characterized the optimization landscape of the over-parameterized linear model and its relation to PCA. More recent work suggests that with sufficient over-parameterization, this landscape is well behaved \citep{DBLP:conf/nips/Kawaguchi16,zhou2018critical} and all its local minima are global \citep{DBLP:journals/corr/abs-1712-01473}. Deep linear networks are also used to study biases induced by architecture or optimization \citep{DBLP:conf/iclr/JiT19,wu2019towards}. 

Several related studies investigated the dynamics of learning in over-parameterized models by way of spectral analysis. However, while we employ the eigenvectors of the data covariance matrix\footnote{$X$ and $Y$ denote the matrices whose columns are the data points and one-hot label vectors respectively, see notations in Section~\ref{sec:theo-res}.} $\SXX = X X^\top$, most of the behavior identified in these studies is driven by the eigenvectors of the cross-covariance matrix $\SYX = Y X^\top$. This difference is crucial, as $\SXX$ is blind to the class labels, unlike $\SYX$. In addition, quite a few of the studies reviewed below assumed a shallow 2-layer network, while our analysis accommodates over-parametrized networks of any depth, further showing that convergence depends on the eigenvectors of $\SXX$ rather than $\SYX$ without any additional assumption. 

More specifically, \citet{saxe2019mathematical} analyzed the evolution of learning dynamics in a shallow two-layered linear model, showing that convergence is guided by the eigenvectors of $\SYX$. This analysis assumed $\SXX = I_d$, thus obscuring the dependence of convergence on the raw data eigenvectors of $\SXX$, as all the eigenvalues are now identical by assumption. Likewise, \citet{arora2018optimization} also assumed that $\SXX = I_d$ while investigating continuous gradients of the deep linear model and allowing for a more general loss. Although there is some superficial resemblance between their pre-conditioning and our \emph{gradient scale matrices} as defined below, they are defined quite differently and have different properties.

Similarly to \citet{saxe2019mathematical}, \citet{gidel2019implicit} also investigated a shallow 2-layers linear model, but no longer assumed that $\SXX = I_d$. Instead, they assumed that $\SXX$ and $\SYX$ can be jointly decomposed, having the same eigenvectors. Under these conditions, they were able to show that  convergence is guided by the eigenvectors of $\SYX$, which correspond to the eigenvectors of $\SXX$ by the joint decomposition assumption. However, $\SXX$ is now linked to the class labels by assumption, which is not the case in most datasets. Recently, \citet{nguyen2021analysis} showed that the learning dynamics of two-layered non-linear auto-encoders converge faster along the eigenvectors of $\SXX$. While their results resemble ours, unlike us they focus on auto-encoders and assume that the data is sampled from a Gaussian distribution.



Relations between shallow and deep over-parameterized linear networks are often studied through the lens of gradient flow \citep{arora2018optimization,DBLP:conf/iclr/AroraCGH19}. \citet{DBLP:conf/iclr/AroraCGH19} derived an equation for the gradient-flow of over-parameterized deep linear networks, which was later expanded to shallow ReLU networks by \citet{DBLP:conf/nips/WilliamsTPSZB19}. \citet{DBLP:journals/corr/abs-1910-05505} showed that this gradient flow can be re-interpreted as a Riemannian gradient flow of matrices of some fixed ranked, hinting at a strong connection between the optimization process of shallow and deep linear networks, in accordance with our results.

Another line of related work was pioneered by \citet{DBLP:conf/nips/JacotHG18}, who showed how neural networks can be analyzed using kernel theory and introduced the Neural Tangent Kernel (NTK). In this framework, it was shown that convergence is fastest along the largest \textbf{kernel's} principal components. In the one-layer linear model, this result implies that convergence depends on the eigenvectors of $\SXX$, but to the best of our knowledge, it was not extended to the over-parameterized deep model analyzed here. While similar results to ours may be achieved in the future by using kernel theory, here we provide direct proof, thus bypassing possible limitations of the kernel theory \citep{DBLP:conf/nips/ChizatOB19,yehudai2019power}. This direct proof further enables us to examine the behavior outside the limit of infinite width, and examine our assumptions using empirical methods.

\subsection{Related Research Paradigms}
\label{sec:simplicity}

Diverse empirical observations seem to support the hypothesis that neural networks start by learning a simple model, which then gains complexity as learning proceeds \citep{DBLP:conf/nips/GunasekarLSS18,soudry2018implicit,DBLP:conf/nips/HuXAP20,DBLP:conf/nips/KalimerisKNEYBZ19,DBLP:conf/iclr/GissinSD20,DBLP:conf/iclr/HeckelS20,ulyanov2018deep,DBLP:conf/iclr/PerezCL19,conf_ijcai_CaoFWZG21,DBLP:journals/corr/abs-2006-07356}. This phenomenon is sometimes called \emph{simplicity bias} \citep{dingle2018input,DBLP:conf/nips/ShahTR0N20}.

In a related line of work, \citet{rahaman2019spectral} empirically demonstrated that the complexity of classifiers learned by ReLU networks increases with time.  \citet{ronen2019convergence,basri2020frequency} showed theoretically, by way of analyzing elementary neural network models, that these models first fit the data with low-frequency functions, then gradually acquire higher frequencies to improve the fit. Nevertheless, the \emph{spectral bias} and \emph{PC-bias} are inherently different. Indeed, the eigendecomposition of raw images is closely related to the Fourier analysis of images as long as the statistical properties of images are (approximately) translation-invariant \citep{simoncelli2001natural,torralba2003statistics}. Still, the \emph{PC-bias} is guided by spectral properties of the raw data and is therefore blind to class labels. In contrast, the \emph{spectral bias}, as well as the related \emph{frequency bias} that has been shown to characterize NTK models \citep{basri2020frequency}, are all guided by spectral properties of the learned hypothesis, which crucially depend on label assignment. 

Other related research paradigms include curriculum learning \citep{bengio2009curriculum,hacohen2019power,weinshall2020theory}, self-paced learning \citep{kumar2010self,tullis2011effectiveness}, hard data mining \citep{fu2017easy}, and active learning \citep{krogh1994neural,hacohen2022active, yehuda2022}. In these frameworks, the research focus is on the order in which data is presented to the learner. Differently, our focus is on characterizing the order by which data is learned without additional guidance to this effect.

\section{Theoretical Analysis}
\label{sec:theo-res}
In this section, we analyze the over-parameterized deep linear networks model, in which the principal component bias fully governs the learning dynamics.

\subsection{Notations and Definitions}
Let $\X = \{(\bx_i, \by_i)\}_{i=1}^n$ denote the training data, where $\bx_i\in\R^q$ denotes the i-th data point and one-hot vector $\by_i\in\{0,1\}^K$ denotes its corresponding label. Let $\frac{1}{n_i}\bmm_i$ denote the centroid (mean) of class $i$ with $n_i$ points $\frac{1}{n_i}\bmm_i = \frac{1}{n_i}\sum_{j=1}^{n}{\bx_j\mathbbm{1}_{\left[y_j=i\right]}}$, and $M=[\bmm_1\ldots \bmm_K]^\top$ the matrix whose rows are vectors $\bmm_i^\top$. Finally, let $X$ and $Y$ denote the matrices whose $i^{th}$ column is $\bx_i$ and $\by_i$ respectively. $\SXX=X X^\top$ and $\SYX=Y X^\top$ denote the covariance matrix of $X$ and cross-covariance of $X$ and $Y$ respectively. We note that $\SXX$ captures the structure of the data irrespective of class identity, and that $\SYX=M$.

\subsubsection{Definitions}

\begin{definition}[Principal coordinate system]
\label{def:canon}   
The coordinate system obtained by rotating the data in $\R^q$ by an orthonormal matrix $U^\top$\!\!, where $SVD(\SXX) \!= \!UDU^\top$\!. 
\end{definition}
In this system $\Sigma_{\scriptscriptstyle XX}\!=\!D$, a diagonal matrix whose elements are the singular values of $X X^\top$, arranged in decreasing order $d_1\geq d_2\geq \ldots \geq d_q \geq 0$. 

We analyze a deep linear network with $L$ layers, whose parameters are computed by minimizing (using gradient descent) the following loss $L(X,Y)$
\begin{align}
L(X,Y) =  \frac{1}{2} \Vert \hW X-Y \Vert_F^2, \quad\quad \hW \coloneqq \prod_{l=L}^1 W_l, \label{eq:multi-loss}
\quad W_l\in\R^{m_l\times m_{l-1}}.
\end{align}
Above $m_l$ denotes the number of neurons in layer $l$, where $m_0 =q$ and $m_L=K$. 

\begin{definition}[Compact representation]
Given a deep linear network $\by=W_L\cdot\ldots \cdot W_1 \bx$, its \textit{compact representation} is denoted $\hW\in\R^{K\times q}$ where $\hW=\prod_{l=L}^1 W_l$.
\end{definition}

\begin{definition}[Gradient matrix]
\label{def:err_mat}
For a deep linear network whose compact representation is $\hW$, the gradient matrix of $L(X,Y)$ with respect to $W$ is $G_r = \hW\SXX-\SYX$. 
\end{definition}
In the principal coordinate system, $G_r = \hW D-M$.

\begin{definition}[STD initialization]
\label{def:1}
Given a sequence of random matrices $\{W_\rl\}_{\rl=1}^\ttL$ where $W_\rl\in\R^{m_\rl\times m_{\rl-1}}$, and all the elements in $W_\rl$ are chosen i.i.d from a distribution with mean 0 and variance $\sigma^2_\rl$, define 
\begin{equation*}
 \sigma^2_\rl = \frac{2}{m_{\rl-1}+m_\rl} ~\forall \rl\in[2\ldots\ttL-1], \qquad\sigma^2_1=\frac{1}{m_1},\qquad \sigma^2_\ttL=\frac{1}{m_{\ttL-1}}.
\end{equation*}
\end{definition}
Def.~\ref{def:1} is a variant of the Glorot initialization \citep{glorot2010understanding}. For more analytical and empirical results on \emph{STD initialization}, refer to \app\ref{app:sec:evolution} and \app\ref{app:sec:weight-initialization} respectively.

\subsubsection{Asymptotic Analysis}

Our analysis involves two asymptotic quantities: the learning rate (or gradient step size) $\mu$, and the network's width $\cm$ ($\cm$ denotes the width of the smallest hidden layer). Accordingly, our analysis neglects terms of magnitude $O(\mu^2)$ and $\omverm$, and employs convergence in probability when $\cm\to\infty$. In \fig\ref{fig:gradient_scale_validation}, we show the plausibility of these assumptions, as the predicted dynamics is seen to hold even when the width $\cm$ is smaller than the input size, and the step size $\mu$ permits convergence in a relatively short time. 

To simplify the presentation and to allow for modular analysis, we use the following conventions: \begin{inparaenum}[(i)]
\item $O(\mu^2)$ and $\omverm$ denote terms which are upper bounded by $c\mu^2$ when $\mu$ is small and $c\frac{1}{\cm}$ when $\cm$ is large respectively, where $c\in\R$ denotes a fixed constant that does \emph{not} depend on neither $\mu$ nor $\cm$. 
\item Depending on the context, if matrices are involved then $O(\mu^2)$ and $O(\frac{1}{\cm})$ denote full rank matrices with element-wise asymptotic quantities defined as above. 
\item The notation $\xrightarrow{p}$ denotes convergence in probability when $\cm\to\infty$, where the probabilistic lower bound on width $\cm$ does \emph{not} depend on $\mu$. When matrices are involved, convergence in probability occurs element-wise.
\end{inparaenum}

The treatment of the two asymptotic quantities $\mu$ and $\cm$ is consolidated in the formal analysis of the network's dynamics, as detailed in the proof of \thm\ref{thm:delB} in \appl\ref{sec:random-mat-dyn}, and the proofs of Thms~\ref{thm:convergence-rate}-\ref{thm:teles} in \appl\ref{app:sec:evolution}. Importantly, the analysis is valid at each time step $t$, where the only thing that changes with time is the magnitude of the constants governing the asymptotic behavior.


\subsection{The Dynamics of Deep Over-Parametrized Linear Networks}
\label{sec:deep}

Extending the analysis described in \citet{du2019width}, we derive here the temporal dynamics of $\hW$, denoted $\hW^{(t)}$, as it changes with each GD step.

\begin{proposition}
\label{thm:1}
Let $G_r^{(t)}$ (Def.~\ref{def:err_mat}) denote the gradient matrix at time ${t}$. Let $\Bl_{\ri}^{(t)}$ and $\Al_{\ri}^{(t)}$ denote the gradient scale matrices, which are defined as follows
\begin{equation}
\label{eq:gradient-scale}
\begin{split}
\Bl_{\ri}^{(t)} \coloneqq \Big(\prod_{j=l}^{1} W_j^{(t)}\Big )^\top \Big(\prod_{j=l}^{1} W_j^{(t)}\Big ) \in\R^{q\times q},\qquad
\Al_{\ri}^{(t)} \coloneqq \Big(\prod_{j=\ttL}^{\ri+1} W_j^{(t)}\Big )  \Big(\prod_{j=\ttL}^{\ri+1} W_j^{(t)}\Big )^\top \in\R^{K\times K}.
\end{split}
\end{equation}
The compact representation $\hW^{(t)}$ obeys the following dynamics:
\begin{equation}
\label{eq:canon_deep_linear}
\hW^{(t+1)}=\hW^{(t)} - \mu \sum_{\ri=1}^L \Al_{\ri}^{(t)} \cdot G_r^{(t)} \cdot \Bl_{\ri-1}^{(t)} +O(\mu^2).
\end{equation}
\end{proposition}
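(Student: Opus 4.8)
The plan is to extend the per-layer computation of \citet{du2019width}: write the gradient of $L(\iX)$ with respect to each factor $W_\rl$ in closed form, plug the resulting gradient-descent update into the product defining $\hW^{(t+1)}$, and expand that product of $\ttL$ factors to first order in $\mu$.

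First I would factor the compact representation through layer $\rl$, writing $\hW^{(t)} = P_\rl^{(t)}\, W_\rl^{(t)}\, Q_\rl^{(t)}$ with $P_\rl^{(t)} \coloneqq \prod_{j=\ttL}^{\rl+1} W_j^{(t)}$ and $Q_\rl^{(t)} \coloneqq \prod_{j=\rl-1}^{1} W_j^{(t)}$ (empty products being identities). Since $\nabla_{\hW}\,\tfrac12\Vert \hW X - Y\Vert_F^2 = (\hW X - Y)X^\top = \hW\SXX - \SYX = G_r$, the chain rule gives $\partial L/\partial W_\rl = (P_\rl^{(t)})^\top G_r^{(t)} (Q_\rl^{(t)})^\top$, hence the update $W_\rl^{(t+1)} = W_\rl^{(t)} - \mu\,(P_\rl^{(t)})^\top G_r^{(t)} (Q_\rl^{(t)})^\top$.

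Next I would substitute this into $\hW^{(t+1)} = \prod_{\rl=\ttL}^{1} W_\rl^{(t+1)}$ and expand. Taking the identity part $W_\rl^{(t)}$ from every factor reproduces $\hW^{(t)}$; taking the $-\mu(\cdots)$ correction from exactly one factor $\rl$ (and $W_j^{(t)}$ from all the rest) yields
\[
-\mu\, P_\rl^{(t)} (P_\rl^{(t)})^\top\, G_r^{(t)}\, (Q_\rl^{(t)})^\top Q_\rl^{(t)} \;=\; -\mu\, \Al_\rl^{(t)}\, G_r^{(t)}\, \Bl_{\rl-1}^{(t)},
\]
where the last step is precisely the definition \eqref{eq:gradient-scale}, since $P_\rl (P_\rl)^\top = \Al_\rl$ and $(Q_\rl)^\top Q_\rl = \Bl_{\rl-1}$. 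Summing over $\rl$ gives the displayed first-order term of \eqref{eq:canon_deep_linear}.

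All remaining terms take the $-\mu(\cdots)$ correction from two or more distinct factors, so each is a fixed matrix times $\mu^k$ with $k\ge 2$. The last step is to collect these into the remainder: at the fixed time step $t$ the partial products of the $W_j^{(t)}$ and $G_r^{(t)}$ have fixed finite norms, so each such term is bounded entrywise by $c\mu^2$ for small $\mu$, with $c$ independent of $\mu$ --- exactly the $O(\mu^2)$ convention used in the paper. The only real work is the bookkeeping (keeping the order and transposes of the partial products straight, and checking the remainder bound is uniform in $\mu$); it is routine rather than deep, and I expect it to be the main, if mild, obstacle. Note that the width $\cm$ does not appear in this proposition at all --- the $\omverm$-type terms enter only later, once $\Al_\rl^{(t)}$ and $\Bl_\rl^{(t)}$ themselves are analyzed.
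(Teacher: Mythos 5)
Your proof is correct and follows essentially the same route as the paper's: differentiate the loss with respect to each factor $W_\rl$ to get $\partial L/\partial W_\rl = (\prod_{j=\ttL}^{\rl+1}W_j)^\top G_r (\prod_{j=\rl-1}^{1}W_j)^\top$, substitute the resulting gradient-descent update into the product $\prod_{\rl=\ttL}^1 W_\rl^{(t+1)}$, and expand to first order in $\mu$, identifying the coefficient of each single-factor correction as $\Al_\rl^{(t)} G_r^{(t)} \Bl_{\rl-1}^{(t)}$. Your indexing, the derivative step, and the $O(\mu^2)$ bookkeeping all match the paper's argument.
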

The proof can be found in \appl\ref{app:deep-thm1}. Note that $\Bl_{0}^{(t)}=\Al_{\ttL}^{(t)}=I$, $\Bl_{\ttL}^{(t)}={\hW^{(t)}}^\top\hW^{(t)}$, and $\Al_{0}^{(t)}=\hW^{(t)}{\hW^{(t)}}^\top$.

\paragraph{Gradient scale matrices.} When the number of hidden layers is $0$ ($L=1$), both gradient scale matrices reduce to the identity matrix and the dynamics in (\ref{eq:canon_deep_linear}) is reduced to the known result that $\hW^{(t+1)}=\hW^{(t)} - \mu G_r^{(t)}$. Recall, however, that our focus is the over-parameterized linear model with $L>1$, in which the loss is not convex. Since the difference between the convex linear model and the over-parametrized deep model boils down to these matrices, our convergence analysis henceforth focuses on the dynamics of the \emph{gradient scale matrices}. In accordance, we analyze the evolution of the \emph{gradient scale matrices} as learning proceeds.


\begin{theorem}
\label{thm:AlsBls}
Let $\hW$ denote the compact representation of a deep linear network, where $\cm=\min\left({m_1,...,m_{L-1}}\right)$ denotes the size of its smallest hidden layer and $\cm\geq \{m_0,m_L\}$. At each layer $l$, assume weight initialization $W^{(0)}_l$ obtained by sampling from a distribution with mean $0$ and variance $\sigma_l^2$, normalized as specified in Def.~\ref{def:1}. Let $\left(\Bl^{(t)}_\ri(\cm)\right)_{\cm=1}^\infty$ and $\left( \Al^{(t)}_\ri(\cm)\right)_{\cm=1}^\infty$ denote two sequences of gradient scale matrices as defined in (\ref{eq:gradient-scale}), where the $\cm^{th}$ element of each series corresponds to a network whose smallest hidden layer has $\cm$ neurons.  Let $\xrightarrow{p}$ denote element-wise convergence in probability as $\cm\to\ \infty$. Then $\forall {t},l$:
\begin{equation}
\label{eq:basic-thm}
\begin{split}
&\Bl^{(t)}_\rl(\cm) \xrightarrow{p} I+O(\mu^2)~~\forall\rl\in[1\ldots \ttL-1],\qquad\Bl^{(t)}_{\ttL}(\cm) \xrightarrow{p} \frac{K}{\cm}I + O(\mu^2)\xrightarrow{p}O(\mu^2),\\
&\var[\Bl^{(t)}_\rl(\cm) ]= O(\mu^2)+\omverm \qquad\forall\rl,
\end{split}
\end{equation}
and
\begin{equation}
\label{eq:basic-thm-A}
\begin{split}
&\Al^{(t)}_\rl(\cm) \xrightarrow{p} I+O(\mu^2)~~\forall\rl\in[1\ldots \ttL-1],\qquad\Al^{(t)}_{0}(\cm) \xrightarrow{p} \frac{q}{\cm}I + O(\mu^2)\xrightarrow{p}O(\mu^2),\\
&\var[\Al^{(t)}_\rl(\cm) ]= O(\mu^2)+\omverm \qquad\forall\rl.
\end{split}
\end{equation}
\end{theorem}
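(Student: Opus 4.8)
The plan is to prove the estimates for $\Bl_\rl$ and $\Al_\rl$ by induction on the time step $t$. The two families are mirror images of one another -- reverse the order of the layers and exchange $\hW^{\top}\hW$ for $\hW\hW^{\top}$ -- so I would write out the argument only for $\Bl_\rl$. For the base case $t=0$, $\Bl_\rl^{(0)}=\big(\prod_{j=\rl}^{1}W_j^{(0)}\big)^{\top}\big(\prod_{j=\rl}^{1}W_j^{(0)}\big)$ is a product of the independent initialization matrices, so the claim becomes a concentration statement for products of random matrices. First I would compute the mean by peeling off layers from the inside: since $\E\big[W_j^{(0)\top}A\,W_j^{(0)}\big]=\sigma_j^2\operatorname{tr}(A)\,I$ when $A$ is a multiple of the identity, iterating gives $\E[\Bl_\rl^{(0)}]=\big(\prod_{j=1}^{\rl}m_j\sigma_j^2\big)I$; substituting the STD values of $\sigma_j^2$ from Def.~\ref{def:1} makes each factor equal to $1$, so $\E[\Bl_\rl^{(0)}]=I$ for $\rl\le\ttL-1$, whereas for $\rl=\ttL$ the surviving factor $m_\ttL\sigma_\ttL^2=m_L/\cm$ gives $\E[\Bl_\ttL^{(0)}]=\tfrac{m_L}{\cm}I$. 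Then I would bound the entrywise variance by expanding the second moment of a product entry as a sum over index pairings: the \emph{matched} pairing reproduces the squared mean, and every other pairing carries at least one additional free summation index and is hence suppressed by a factor $\omverm$ after normalization, so $\var[\Bl_\rl^{(0)}]=\omverm$. Chebyshev's inequality then delivers the element-wise convergence $\Bl_\rl^{(0)}\xrightarrow{p}\E[\Bl_\rl^{(0)}]$ with a width threshold that does not depend on $\mu$ (none appears at $t=0$).

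For the inductive step I would assume the estimates at time $t$ and push them to $t+1$ via the one-step dynamics of the gradient scale matrices (Thms~\ref{thm:delB}--\ref{thm:delA} in \appl\ref{sec:random-mat-dyn}). Concretely, the GD update $W_k^{(t+1)}=W_k^{(t)}-\mu\,\partial_{W_k}L(\iX)$, with $\partial_{W_k}L=\big(\prod_{j=\ttL}^{k+1}W_j^{(t)}\big)^{\top}G_r^{(t)}\big(\prod_{j=k-1}^{1}W_j^{(t)}\big)^{\top}$, propagates to first order into $\prod_{j=\rl}^{1}W_j^{(t+1)}=\prod_{j=\rl}^{1}W_j^{(t)}-\mu\sum_{k=1}^{\rl}\big(\prod_{j=\rl}^{k+1}W_j^{(t)}\big)\,\partial_{W_k}L\,\big(\prod_{j=k-1}^{1}W_j^{(t)}\big)+O(\mu^2)$, hence into a first-order formula for $\Bl_\rl^{(t+1)}=\big(\prod_{j=\rl}^1 W_j^{(t+1)}\big)^{\top}\big(\prod_{j=\rl}^1 W_j^{(t+1)}\big)$ in which every $O(\mu)$ term has $G_r^{(t)}$ surrounded on both sides by contiguous products of $W_j^{(t)}$'s and by sub-network gradient scale matrices. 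Invoking the inductive hypothesis -- both for the full network and, proved identically, for the contiguous sub-networks, together with the fact that GD preserves the balancedness $W_{k+1}^{(t)\top}W_{k+1}^{(t)}\approx W_k^{(t)}W_k^{(t)\top}$ up to $O(\mu^2)$ -- each sub-network gradient scale matrix becomes $I$ and the surviving $W_j^{(t)}$-products telescope, leaving $\hW^{(t)\top}G_r^{(t)}$ (resp.\ $G_r^{(t)\top}\hW^{(t)}$) up to mean-zero fluctuations of order $O(\cm^{-1/2})$ and an $O(\mu^2)$ drift; thus the whole $O(\mu)$ correction collapses to $-\mu\,\rl\,\big(\hW^{(t)\top}G_r^{(t)}+G_r^{(t)\top}\hW^{(t)}\big)$ modulo $O(\mu^2)+\omverm$. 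The crucial point is that $\hW^{(t)}$ is itself small at each fixed $t$: $\hW^{(0)\top}\hW^{(0)}=\Bl_\ttL^{(0)}\xrightarrow{p}0$ by the base case, and by the dynamics of $\hW$ (Prop.~\ref{thm:1}) it moves by only $O(\mu)$ per step with a constant that may grow with $t$ but not with $\cm$; hence $\hW^{(t)\top}\hW^{(t)}=\omverm+O(\mu^2)$ and $\mu\,\hW^{(t)\top}G_r^{(t)}=O(\mu\,\cm^{-1/2})+O(\mu^2)=\omverm+O(\mu^2)$ by AM--GM. Collecting, $\Bl_\rl^{(t+1)}=\Bl_\rl^{(t)}+O(\mu^2)+(\text{a mean-zero term of variance }\omverm)$, which with the inductive hypothesis yields $\Bl_\rl^{(t+1)}\xrightarrow{p}I+O(\mu^2)$ for $\rl\le\ttL-1$ and $\xrightarrow{p}\tfrac{m_L}{\cm}I$ for $\rl=\ttL$, and $\var[\Bl_\rl^{(t+1)}]\le 2\var[\Bl_\rl^{(t)}]+2\var[\text{correction}]=O(\mu^2)+\omverm$, with constants that are allowed to grow with $t$. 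The same computation with the layer order reversed gives the statements for $\Al_\rl$, including $\Al_0^{(t)}\xrightarrow{p}\tfrac{m_0}{\cm}I$.

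The hard part will be keeping the two asymptotic scales cleanly separated throughout the induction: I would need to show that the step-$t$ concentration used to replace each contiguous product of weights by $I$ and $\hW^{(t)\top}\hW^{(t)}$ by $0$ combines with the $O(\mu)$-accurate dynamics so that (a) the width threshold guaranteeing the step-$(t{+}1)$ concentration still does not depend on $\mu$, and (b) the variance does not accumulate uncontrollably, i.e.\ the $t$-dependent constants stay finite and $\cm$-free. A secondary, purely combinatorial difficulty is the pairing count in the base-case variance estimate, which must be carried out uniformly over the matrix entries to extract the $\omverm$ rate. Everything else -- the mean computation, the telescoping $\big(\prod_{j\le\rl}W_j\big)^{\top}\big(\prod_{j>\rl}W_j\big)^{\top}=\hW^{\top}$, and the $\Bl\leftrightarrow\Al$ symmetry -- should be routine bookkeeping.
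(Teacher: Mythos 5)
Your proposal follows the same skeleton as the paper's proof: induction on the time step, with the $t=0$ base case handled by a random-matrix concentration argument (mean by peeling off layers from the inside as in Thm~\ref{thm:qtq}, variance by pairing/moment bookkeeping as in Thm~\ref{thm:var}, and Chebyshev to get the convergence in probability as in Thm~\ref{thm:cheby}), the inductive step via the one-step dynamics of the gradient scale matrices (Thms~\ref{thm:delB}--\ref{thm:delA}), and a strengthened induction hypothesis that also tracks the smallness of $\hW^{(t)}$ so that the would-be $O(\mu)$ correction actually contributes only $O(\mu^2)+\omverm$. The one real divergence is in how you collapse the inner $O(\mu)$ terms: you invoke approximate balancedness $W_{k+1}^{(t)\top}W_{k+1}^{(t)}\approx W_k^{(t)}W_k^{(t)\top}$ (approximately true at STD initialization and preserved by GD up to $O(\mu^2)$) to telescope the sub-network products, whereas the paper inserts Moore--Penrose pseudo-inverses (Lemma~\ref{lem:mat-flip}) so that the products reorganize directly into the gradient scale matrices $\Bl^\rl[\Bl^j]^{-1}[\Bl^\ttL\SXX - \hW^\top\SYX]\Bl^{j-1}$, at which point the inductive hypothesis applies immediately. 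The pseudo-inverse route is a bit cleaner because it needs only the concentration of the $\Bl^\rl$'s that the induction already supplies, avoiding a separate approximate-balancedness bookkeeping (which would itself need the same concentration estimates at $t=0$ plus a per-step accumulation bound). Your identification of the genuine difficulty --- keeping the $\mu$- and $\cm$-scales cleanly decoupled so that the width threshold stays $\mu$-free and the $t$-dependent constants stay $\cm$-free --- is exactly what the paper's Chebyshev-with-chosen-thresholds argument in the proof of Thm~\ref{thm:delB} is designed to handle.
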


\paragraph{Proof sketch.} The proof proceeds by induction on time $t$. To begin with, we recall that all the weight matrices $\{W^{(0)}_\rl\}_{\rl=1}^\ttL$ are initialized by sampling from a distribution with mean $0$ and variance $\sigma_l^2 = O(\frac{1}{\cm})$. In \appl\ref{sec:random-mat} we prove some statistical properties of such matrices, and their corresponding \emph{gradient scale matrices} $B_\ri^{(0)}$ and $A_\ri^{(0)}$. This analysis shows that (\ref{eq:basic-thm}) and (\ref{eq:basic-thm-A}) are true at $t=0$. To prove the induction step, we resort to results stated in \appl\ref{sec:random-mat-dyn}, where we analyze random matrices that are defined similarly to the gradient scale matrices, and whose dynamics is consistent with the update rule defined in (\ref{eq:canon_deep_linear}). The main result can be used to show directly that if the theorem's assertion holds for $B_{\ri}^{(t)}$, it also holds for $B_{\ri}^{(t+1)}$. A complete proof and details can be found in \appl\ref{app:sec:evolution}.

\paragraph{Relation to the one-layer linear model.} From \thm\ref{thm:AlsBls}, if the hidden layers are sufficiently wide (as measured by $\cm$), we can assume that $B_{\ri}^{(t)}(\cm) \approx I$ and $A_{\ri}^{(t)}(\cm) \approx I$ $\forall \ri$. In this case, the dynamics of the over-parameterized model is identical to the dynamics of the convex linear model, $\hW^{(t+1)}=\hW^{(t)} - \mu G_r^{(t)}$. This is shown more formally below in \thm\ref{thm:convergence-rate}.


\paragraph{Note about convergence, and convergence rate.} 
In \thm\ref{thm:AlsBls} we prove two  asymptotic results: $\Bl^{(t)}_\rl(\cm) \xrightarrow{p} I$ and $\Al^{(t)}_\rl(\cm) \xrightarrow{p} I$ up to small deviations of magnitude $O(\mu^2)$. This is true $\forall \rl\in[1\ldots \ttL-1]$ and $\forall t$, irrespective of the convergence of the network. The only thing that changes with $t$ is the magnitude of the constants governing this asymptotic behavior, hidden in the notations $\omverm$ and $O(\mu^2)$, whose magnitude increases with time. For a fixed network, there will come a time when the asymptotic analysis is longer applicable, and this may happen before or after the network has converged to its final solution.

Importantly, these constants are significantly different in the two sequences. Specifically, in (\ref{eq:basic-thm}) and (\ref{eq:basic-thm-A}) we see that the convergence of $\Bl^{(t)}_\ttL(\cm)$ is governed by $\omvermK$, while the convergence of $\Al^{(0)}_\rl(\cm)$ is governed by $\omvermq$. Typically $q\gg \mL$, as $q$ denotes the dimension of the data space which can be fairly large, while $K$ denotes the typically small number of classes. Considered in the context of the proof of \thm\ref{thm:AlsBls} by induction, we note that these constants are amplified in each iteration $t$. As a result, when $\cm$ is large but fixed, $\Al^{(t)}_\rl(\cm)$ is expected to deviate from $I$ sooner than $\Bl^{(t)}_\rl(\cm)$ as the GD steps are reiterated. 

\paragraph{Empirical validation.}
To understand the practical significance of the observations stated above, about the difference in convergence rate between the sequences $\Bl^{(t)}_\rl(\cm)$ and $\Al^{(t)}_\rl(\cm)$, we resort to simulations whose results are shown in \fig\ref{fig:gradient_scale_validation}. These empirical results, recounting linear networks with 4 hidden layers of width 1024, clearly show that during a significant part of the training both \emph{gradient scale matrices} remain approximately $I$. The difference between the convergence rate of $B_{\ri}^{(t)}$ and $A_{\ri}^{(t)}$ is seen later on, when $A_{\ri}^{(t)}$ starts to deviate from $I$ shortly before the networks have reached their maximal test accuracy, while $B_{\ri}^{(t)}$ remains essentially the same throughout.

\begin{figure}[thb!]
\begin{center}
  
    \begin{subfigure}{.4\textwidth}
      \centering
      \includegraphics[width=1\linewidth]{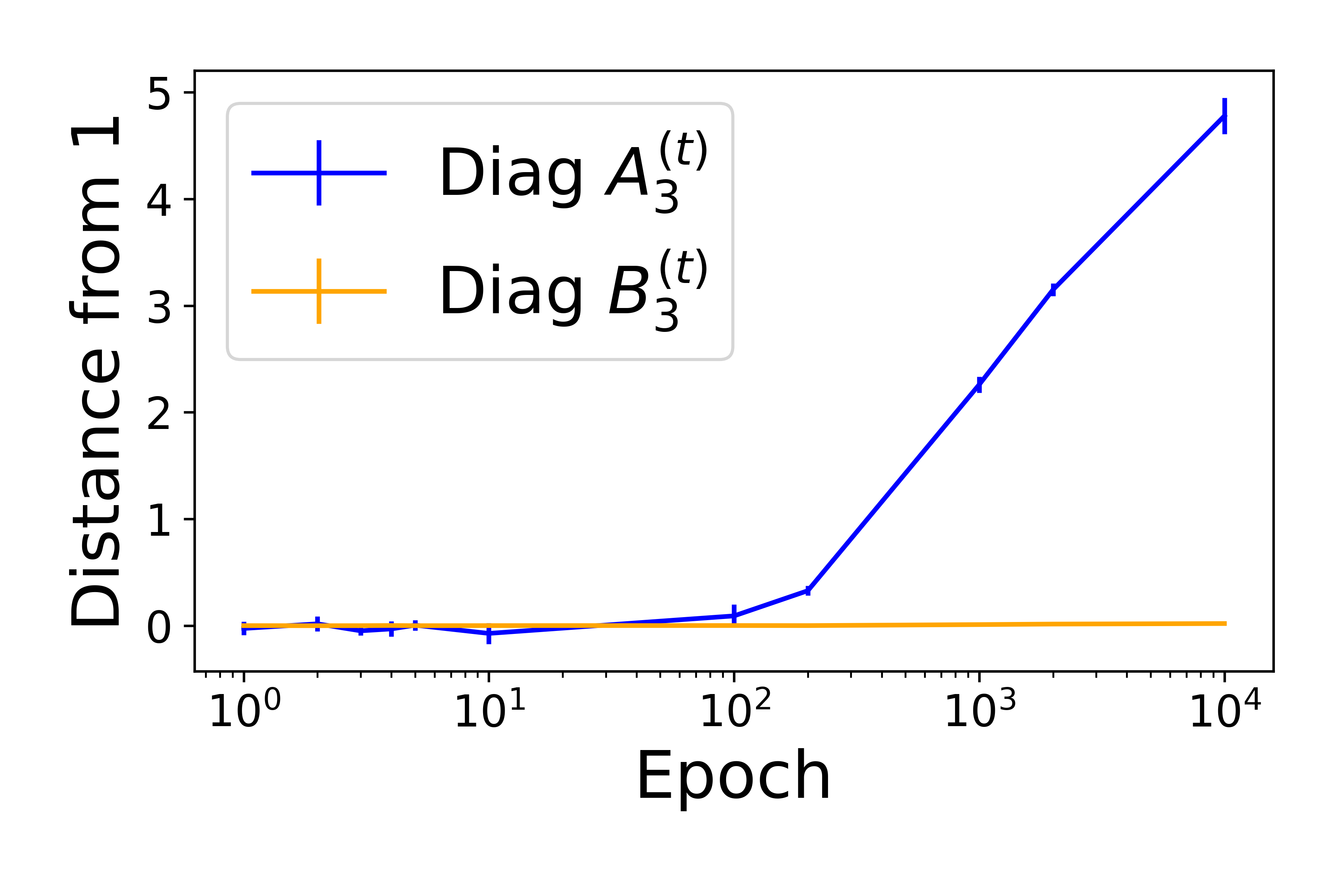}
     \caption{Diagonal elements, layer 3}
    \end{subfigure}
    \hspace{1.5cm}
    \begin{subfigure}{.4\textwidth}
      \centering
      \includegraphics[width=1\linewidth]{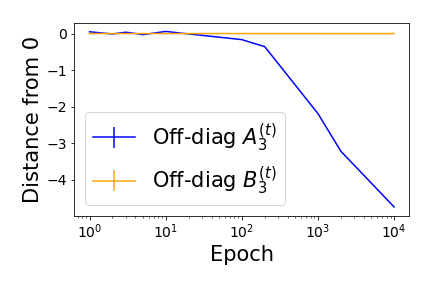}
     \caption{Off-diagonal elements, layer 3}
    \end{subfigure}

\end{center}

\caption{The dynamics of $B_3^{(t)}$ and $A^{(t)}_3$ when training $10$ $5$-layered linear networks on the small mammals dataset (see \app\ref{app:datasets}). (a) The empirical $L_2$-distance of the diagonal elements of $B_3^{(t)}$ and $A^{(t)}_3$ from\protect\footnotemark $~1$.  (b) The empirical $L_2$-distance of the off-diagonal elements of $B_3^{(t)}$ and $A^{(t)}_3$ from 0. The networks reach maximal test accuracy in epoch $t=100$, before the divergence of $A^{(t)}_3$. In these experiments $B_3^{(t)}$ never reaches the point of divergence. These result are typical of all the layers in the networks, see Fig.~\ref{fig:app:gradient_scale_validation} in \appl\ref{app:divergence}.}
\label{fig:gradient_scale_validation}
\end{figure}

\footnotetext{More precisely, the distance is computed respectively to their analytical values of $\alpha_3^{(t)}$ and $\beta_3^{(t)}$, computed as $diag\left(A^{(t)}_3-\alpha_3^{(t)}I\right)$ and  $diag\left(B_3^{(t)} - \beta_3^{(t)}I\right)$, where $\alpha_i^{(t)}, \beta_i^{(t)}$ are defined in \thm\ref{thm:qtq}, \S\ref{sec:random-mat}.}

\subsection{Weight Evolution}
\label{sec:evolution}

Next, we investigate the implications of \thm\ref{thm:AlsBls} regarding the evolution of the compact representation $\hW$. In \thm\ref{thm:convergence-rate}, we show that at the beginning of training, when the assumption that both $B^{(t)}_l\approx I$ and $A^{(t)}_l\approx I$ is applicable, this evolution resembles the single-layer linear model and is governed by the eigendecomposition of the data. In \thm\ref{thm:teles}, we show that later on, when only $B^{(t)}_l\approx I$ is applicable and the evolution no longer resembles the single-layer linear model, it is still governed by the eigendecomposition of the data.

More specifically, let $\bw_j^{opt}$ denote the $j^\mathrm{th}$ column of the optimal solution of (\ref{eq:multi-loss}), and $\bw_j^{(0)}$ the $j^\mathrm{th}$ column of the initial weight matrix. \thm\ref{thm:convergence-rate} states that if the hidden layers of the network are all wider than a certain fixed width $\hat\cm$ and if the learning rate is slower than $\hat\mu$, then at the beginning of learning (time $t\in[0\ldots\hat {t}]$), the $j^\mathrm{th}$ column of the compact representation $\hW^{(t)}$ is roughly the following linear combination of $\bw_j^{(0)}$ and $\bw_j^{opt}$:
\begin{equation}
\label{eq:linear-comb}
\bw_j^{(t+1)}\approx \lambda_j^{t} \bw_j^{(0)} + [1- \lambda_j^{t}]\bw_j^{opt}, \quad\quad \lambda_j = 1-\mu d_j L.
\end{equation}
with high probability (at least $(1-\delta)$) and low error (at most $\varepsilon$). Constant $\lambda_j$ depends on the $j^\mathrm{th}$ singular value of the data $d_j$ and the number of layers $L$. 

Result (\ref{eq:linear-comb}) is reminiscent of the well-understood dynamics of training the convex one-layer linear model. It is composed of two additive terms, revealing two parallel and separate processes:
\begin{inparaenum}[(i)]
\item The dependence on random initialization tends to 0 exponentially fast as a function of time ${t}$, when $\lambda_j^{t}$ tends to $0$. 
\item The optimal solution is reached exponentially fast as a function of time ${t}$, when $1-\lambda_j^{t}$ tends to $1$. 
\end{inparaenum}
In either case, convergence is fastest for the largest singular eigenvalue, or the first column of $\hW$, and slowest for the smallest singular value. This behavior is visualized in Fig.~\ref{fig:dw_linear_nets}, which shows that the decrease in the variance of weight estimation between networks is indeed faster for larger singular values. 

\begin{figure}[ht!]
\begin{minipage}{.6\textwidth}
    \vspace{.32cm}
    \begin{subfigure}{.49\textwidth}
      \centering
      \includegraphics[width=\linewidth]{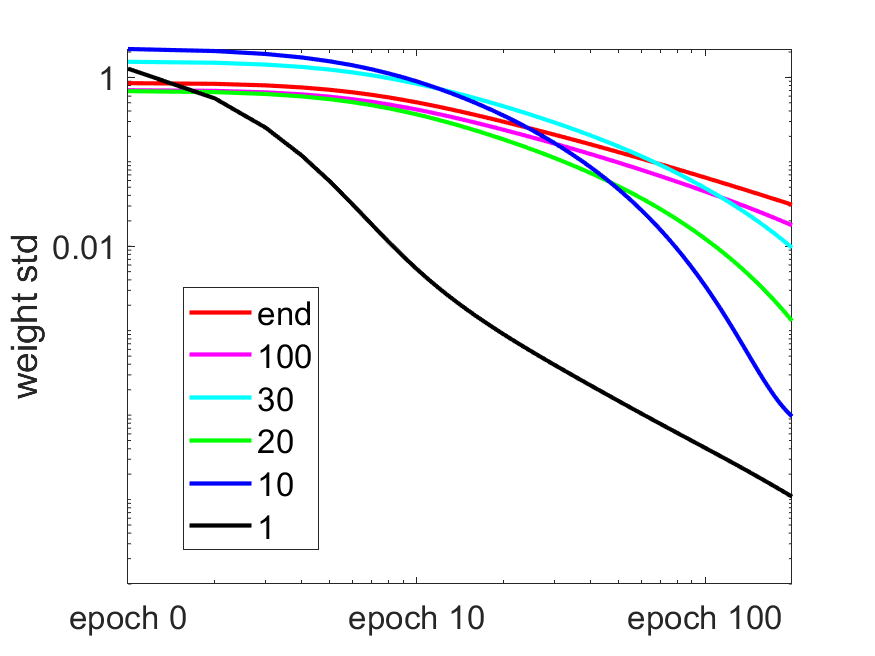}
      \caption{2-layer linear network}
      \label{fig:dw_linear_nets}
    \end{subfigure}
    \begin{subfigure}{.49\textwidth}
      \centering
      \includegraphics[width=1\linewidth]{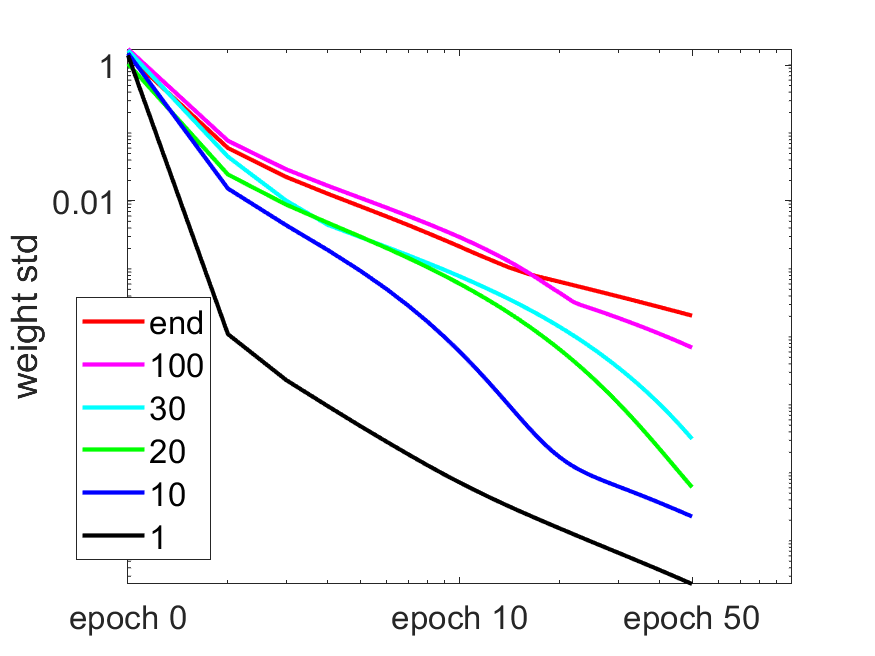}
      \caption{2-layer ReLU network}
      \label{fig:dw_relu_nets}
    \end{subfigure}
\caption{Empirical evaluation of the dependence of convergence rate on the eigendecomposition in a \emph{binary} classification problem. Here the classifier is a vector, denoted $\bw$. Each line corresponds to a specific principal eigenvector, plotting in log-log scale the std of element $w_i$ ($i=1,10,20,30,100,3072$, see legend) over 10 independently trained networks.}
\label{fig:std-relu}
\end{minipage}
\hspace{0.3cm}
\begin{minipage}{.39\textwidth}
    \includegraphics[width=1\textwidth]{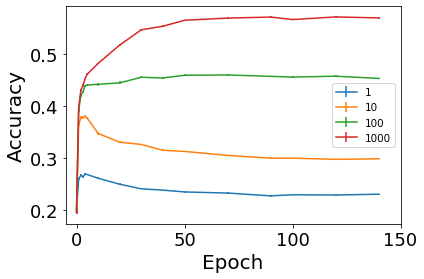}
    \caption{Mean accuracy of 10 st-VGG networks evaluated on test data projected using PCA to $\{1,10,100,1000\}$ dimensions (see text for more details).}  
    \label{fig:acc_on_pca_components}
\end{minipage}
\end{figure}

Formally, the theorem can be stated as follows (see proof in \appl\ref{app:sec:evolution}):
\begin{theorem}
\label{thm:convergence-rate}
Let $\bw_j^{(t)}$ denote the $j^\mathrm{th}$ column of the compact representation matrix $\hW^{(t)}$, and $\bw_j^{opt}$ the $j^\mathrm{th}$ column of the optimal solution of (\ref{eq:multi-loss}). Assume that the data is rotated to its principal coordinate system, and let $d_j$ denote the $j^\mathrm{th}$ singular value of the data. Then there exists $\hat {t}$ such that $\forall \delta, \varepsilon$ and $\forall {t}\leq\hat {t}$, $\exists \hat \cm,\hat\mu$ such that $\forall \mu<\hat\mu, \cm\geq\hat\cm$
\begin{equation*}
Prob\bigg (\Big\Vert\bw_j^{(t+1)}- \big [ \lambda_j^{t} \bw_j^{(0)} + [1- \lambda_j^{t}]\bw_j^{opt}\big ]\Big\Vert < \varepsilon\bigg ) > (1-\delta), \quad\quad \lambda_j = 1-\mu d_j L.
\end{equation*}
\end{theorem}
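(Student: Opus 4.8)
The plan is to collapse the deep dynamics of $\hW^{(t)}$ onto the convex one-layer dynamics, solve the resulting per-column affine recursion in closed form, and then control the accumulated approximation error over the finitely many steps $t\le\hat t$; the whole argument rests on Proposition~\ref{thm:1} and Theorem~\ref{thm:AlsBls}. First I would pass to the principal coordinate system (as the theorem permits), so that $\SXX=D$ and $G_r^{(t)}=\hW^{(t)}D-M$, and write the exact compact dynamics
\begin{equation*}
\hW^{(t+1)}=\hW^{(t)}-\mu\sum_{\ri=1}^{L}\Al_{\ri}^{(t)}\,G_r^{(t)}\,\Bl_{\ri-1}^{(t)}+O(\mu^2).
\end{equation*}
The matrices occurring here are $\Al_{\ri}^{(t)}$ for $\ri\in[1\ldots L]$ and $\Bl_{\ri-1}^{(t)}$ for $\ri-1\in[0\ldots L-1]$, so \emph{none} of them is one of the two ``vanishing'' gradient-scale matrices $\Al_{0}^{(t)},\Bl_{L}^{(t)}$. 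Hence, using $\Bl_0=\Al_L=I$ exactly together with Theorem~\ref{thm:AlsBls} (which covers the remaining indices), every factor satisfies $\Al_{\ri}^{(t)}\xrightarrow{p}I+O(\mu^2)$ and $\Bl_{\ri-1}^{(t)}\xrightarrow{p}I+O(\mu^2)$, and the sum collapses: $\sum_{\ri=1}^L\Al_{\ri}^{(t)}G_r^{(t)}\Bl_{\ri-1}^{(t)}\xrightarrow{p}L\,G_r^{(t)}+O(\mu^2)$, so $\hW^{(t+1)}\xrightarrow{p}\hW^{(t)}-\mu L\,(\hW^{(t)}D-M)+O(\mu^2)$.

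Next I would read this off column by column. The optimal solution of (\ref{eq:multi-loss}) satisfies the normal equations $\hW^{opt}D=M$, so $d_j\bw_j^{opt}$ equals the $j^{\mathrm{th}}$ column of $M$, and therefore the $j^{\mathrm{th}}$ column of $G_r^{(t)}$ is exactly $d_j\big(\bw_j^{(t)}-\bw_j^{opt}\big)$. This turns the matrix update into the scalar‑weighted affine recursion
\begin{equation*}
\bw_j^{(t+1)}=\lambda_j\,\bw_j^{(t)}+(1-\lambda_j)\,\bw_j^{opt}+\bm{\eta}_j^{(t)},\qquad \lambda_j=1-\mu d_j L,
\end{equation*}
where the residual $\bm{\eta}_j^{(t)}$ is, once $\cm$ is large and with probability close to $1$, of size $O(\mu^2)+O(1/\cm)$ — but with a constant that grows with $t$. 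I would then introduce the reference sequence defined by the \emph{exact} recursion $\tilde\bw_j^{(t+1)}=\lambda_j\tilde\bw_j^{(t)}+(1-\lambda_j)\bw_j^{opt}$ with $\tilde\bw_j^{(0)}=\bw_j^{(0)}$; unrolling it (a geometric series for the fixed‑point offset) yields exactly the claimed affine combination of $\bw_j^{(0)}$ and $\bw_j^{opt}$, the power of $\lambda_j$ counting the completed GD steps.

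The remaining work is the error estimate. Subtracting the two recursions, $\bm{e}_j^{(t)}:=\bw_j^{(t)}-\tilde\bw_j^{(t)}$ obeys $\bm{e}_j^{(t+1)}=\lambda_j\bm{e}_j^{(t)}+\bm{\eta}_j^{(t)}$ with $\bm{e}_j^{(0)}=0$, hence $\bm{e}_j^{(t)}=\sum_{k=0}^{t-1}\lambda_j^{\,t-1-k}\bm{\eta}_j^{(k)}$ and, since $0\le\lambda_j\le 1$ for $\mu$ small, $\Vert\bm{e}_j^{(t)}\Vert\le\sum_{k=0}^{t-1}\Vert\bm{\eta}_j^{(k)}\Vert$. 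This is precisely why $\hat t$ is quantified first: once $\hat t$ is fixed, the finite constants of Theorem~\ref{thm:AlsBls} over the range $t\le\hat t$ are dominated by a single $C=C(\hat t)$, so $\Vert\bm{\eta}_j^{(k)}\Vert\le C\mu^2+(\text{a quantity}\to 0\text{ as }\cm\to\infty)$ uniformly in $k\le\hat t$. Given $\delta,\varepsilon$ and $t\le\hat t$, I would pick $\hat\mu$ with $\hat t\,C\,\hat\mu^2<\varepsilon/2$ and then $\hat\cm$ (independent of $\mu$, as in Theorem~\ref{thm:AlsBls}) large enough that a union bound over the at most $\hat t$ steps keeps the aggregate stochastic part below $\varepsilon/2$ with probability $\ge 1-\delta$; combining gives the stated high‑probability bound.

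The hard part is this last bookkeeping rather than the algebra. Theorem~\ref{thm:AlsBls} only pins down $\Al^{(t)},\Bl^{(t)}$ up to $O(\mu^2)+O(1/\cm)$ deviations whose hidden constants are amplified at every step, so one must (i) freeze the horizon $\hat t$ before quantifying over $\mu$ and $\cm$ in order to get a $t$‑uniform constant, (ii) exploit the $\lambda_j$‑contraction to keep the accumulated residual at the harmless scale $\sim\hat t\,C\mu^2$, and (iii) run a side induction showing $\hW^{(t)}$ (hence $G_r^{(t)}$) stays bounded along the trajectory for $t\le\hat t$, which is what licenses replacing $\Al_{\ri}^{(t)}G_r^{(t)}\Bl_{\ri-1}^{(t)}$ by $G_r^{(t)}$ with only an $O(\mu^2)$ error. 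A minor degenerate case is $d_j=0$: there $\lambda_j=1$, the recursion is stationary, and the closed form holds trivially with $\bw_j^{opt}$ chosen arbitrarily on $\ker\SXX$.
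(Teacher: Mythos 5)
Your proposal is correct and follows the paper's own route: collapse the deep dynamics to $\hW^{(t+1)}\approx\hW^{(t)}-\mu L\,G_r^{(t)}$ using \prop\ref{thm:1} together with \thm\ref{thm:AlsBls} (bounding $\Vert G_r^{(t)}\Vert$ by the decreasing loss and the deviations $\Delta_{A\rl},\Delta_{B\rl}$ by a small tight constant), then pass to the principal coordinate system where $\bw_j^{opt}=\bmm_j/d_j$ and unroll the resulting per-column affine recursion with $\lambda_j=1-\mu d_j L$. You spell out the error-accumulation bookkeeping more explicitly than the paper does (the recursion $\bm{e}_j^{(t+1)}=\lambda_j\bm{e}_j^{(t)}+\bm{\eta}_j^{(t)}$, the $\hat t$-before-$\mu,\cm$ quantifier order, and the union bound over the $\hat t$ steps), but the approach and the key lemmas invoked are the same.
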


\paragraph{Proof sketch.} We first prove by induction on time $t$ that $\forall t$, $\hW^{(t+1)} = \hW^{(t)} - \mu LG_r^{(t)}$ with high probability and small error. We then shift to the principal coordinate system, and note that this is true separately for each column of matrix $\hW$. The evolution of each column is then shown to be a telescopic series, whose solution is $\bw_j^{(t)}= \lambda_j^{t} \bw_j^{(0)} + [1- \lambda_j^{t}]\bw_j^{opt}$ for $\lambda_j = 1-\mu L d_j$, with high probability and small error.

Next we state \thm\ref{thm:teles}, which remains valid for longer (see footnote~\ref{ft:13} below) as it does not depend on the convergence of $\Al^{(t)}_\rl$. This is the case discussed in \thm\ref{thm:AlsBls} and illustrated in \fig\ref{fig:gradient_scale_validation}, when $\Bl^{(t)}_\rl\approx I$ remains approximately true while $\Al^{(t)}_\rl\neq I$. More specifically, let $A^{(t)}=\sum_{\ri=1}^L A_{\ri}^{(t)}$. \thm\ref{thm:teles} asserts that if the hidden layers of the network are all wider than $\breve\cm$ and if the learning rate is slower than $\breve\mu$, then at time $t\in[0\ldots\breve{t}]$,\footnote{\label{ft:13}Presumably, in comparison to \thm\ref{thm:convergence-rate} and if we fix the network's width $\breve m=\hat m$, then $\breve{t}\gg\hat {t}$.} the $j^\mathrm{th}$ column of the compact representation $\hW^{(t)}$ is approximately the following
\begin{equation*}
\bw_j^{(t+1)} \approx \prod_{\nunu=1}^{{t}}(I-\mu d_j A^{(\nunu)})\bw_j^{(0)}
+ \mu d_j\left [\sum_{\nunu=1}^{{t}}  \prod_{\rhorho=\nunu+1}^{{t}} (I-\mu d_j A^{(\rhorho )})A^{(\nunu)}\right ] \bw_j^{opt},
\end{equation*}
with high probability (at least $(1-\delta)$) and low error (at most $\varepsilon$).  Note that the $j^\mathrm{th}$ column of the compact representation $\hW^{(t)}$ is still a linear combination of $\bw_j^{(0)}$ and $\bw_j^{opt}$, but now the coefficients are much more complex and depend on $A_{\ri}^{(t)}$.

Formally, the theorem can be stated as follows (see proof in \appl\ref{app:sec:evolution}):
\begin{theorem}
\label{thm:teles}
Let $\bw_j^{(t)}$ denote the $j^\mathrm{th}$ column of the compact representation matrix $\hW^{(t)}$, and $\bw_j^{opt}$ the $j^\mathrm{th}$ column of the optimal solution of (\ref{eq:multi-loss}). Assume that the data is rotated to its principal coordinate system, and let $d_j$ denote the $j^\mathrm{th}$ singular value of the data. Then there exists $\breve{t}$ such that $\forall \delta, \varepsilon$ and $\forall {t}\leq\breve{t}$, $\exists \breve \cm,\breve\mu$ such that $\forall \mu<\breve\mu, \cm\geq\breve\cm$
\begin{equation*}
\begin{split}
Prob\Bigg (\bigg\Vert \bw_j^{(t+1)} -&\bigg [ \prod_{\nunu=1}^{{t}}(I-\mu d_j A^{(\nunu)})\bw_j^{(0)}
+ \\
&\mu d_j\Big (\sum_{\nunu=1}^{{t}}  \prod_{\rhorho=\nunu+1}^{{t}} (I-\mu d_j A^{(\rhorho )})A^{(\nunu)}\Big ) \bw_j^{opt} \bigg ]\bigg\Vert < \varepsilon\Bigg ) > (1-\delta).
\end{split}
\end{equation*}
\end{theorem}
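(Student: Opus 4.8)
The plan is to replay the argument behind Theorem~\ref{thm:convergence-rate}, but to keep the gradient scale matrices $\Al_\ri^{(t)}$ rather than collapsing them to $I$; only the $B$-sequence gets linearized. First I would start from the exact dynamics of Proposition~\ref{thm:1}, $\hW^{(t+1)}=\hW^{(t)}-\mu\sum_{\ri=1}^{L}\Al_\ri^{(t)}G_r^{(t)}\Bl_{\ri-1}^{(t)}+O(\mu^2)$, and note that for every $\ri\in[1\ldots L]$ the factor $\Bl_{\ri-1}^{(t)}$ is either $I$ exactly ($\ri=1$, since $\Bl_0^{(t)}=I$) or, by the $B$-part of Theorem~\ref{thm:AlsBls}, equal to $I$ up to $O(\mu^2)$ plus an element-wise perturbation of variance $O(\mu^2)+\omverm$ (for $2\le\ri\le L$; note that $\Bl_L^{(t)}$ never enters this sum). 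Writing $A^{(t)}=\sum_{\ri=1}^{L}\Al_\ri^{(t)}$, this yields $\hW^{(t+1)}=\hW^{(t)}-\mu A^{(t)}G_r^{(t)}+E^{(t)}$, where $E^{(t)}$ bundles the $O(\mu^2)$ term together with $\mu\sum_{\ri}\Al_\ri^{(t)}G_r^{(t)}(\Bl_{\ri-1}^{(t)}-I)$. For a fixed $t$, Theorem~\ref{thm:AlsBls} makes each $\Al_\ri^{(t)}$ concentrate as $\cm\to\infty$ and hence stay bounded with high probability, and an inductive norm bound keeps $\hW^{(t)}$, and therefore $G_r^{(t)}=\hW^{(t)}\SXX-\SYX$, bounded over the first $\breve{t}$ steps; so, given $\delta,\varepsilon$, one can choose $\breve\mu,\breve\cm$ making $\|E^{(t)}\|$ as small as desired with probability at least $1-\delta/\breve{t}$ at each step.

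Next I would rotate to the principal coordinate system, in which $\SXX=D$ is diagonal and $G_r^{(t)}=\hW^{(t)}D-M$. Writing $\hW^{opt}$ for the matrix whose columns are the $\bw_j^{opt}$, the normal equation for the minimizer of (\ref{eq:multi-loss}) reads $\hW^{opt}\SXX=\SYX=M$, so $M=\hW^{opt}D$ and $G_r^{(t)}=(\hW^{(t)}-\hW^{opt})D$. Reading off column $j$ and using that $D$ is diagonal, the $j^{\mathrm{th}}$ column of $A^{(t)}G_r^{(t)}$ equals $d_j A^{(t)}(\bw_j^{(t)}-\bw_j^{opt})$, so the update decouples column-wise into the affine recursion
\begin{equation*}
\bw_j^{(t+1)}=(I-\mu d_j A^{(t)})\,\bw_j^{(t)}+\mu d_j A^{(t)}\bw_j^{opt}+\tilde e_j^{(t)},
\end{equation*}
with $\tilde e_j^{(t)}$ the $j^{\mathrm{th}}$ column of $E^{(t)}$. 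A routine induction on $t$ then shows that the noiseless recursion (drop $\tilde e_j^{(t)}$) is solved exactly by the closed form in the statement: the ordered product $\prod_{\nunu=1}^{t}(I-\mu d_j A^{(\nunu)})$ transports the initial column $\bw_j^{(0)}$, while the sum accumulates each injected term $\mu d_j A^{(\nunu)}\bw_j^{opt}$ filtered through the later factors $\prod_{\rhorho=\nunu+1}^{t}(I-\mu d_j A^{(\rhorho)})$. Because the $A^{(\nunu)}$ do not commute in general, these coefficients genuinely keep the ordered-product form and do not collapse to the scalars $\lambda_j^{t}$ of Theorem~\ref{thm:convergence-rate} unless every $A^{(\nunu)}\approx LI$.

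The remaining step, which I expect to be the main obstacle, is the error propagation in exactly the regime the theorem targets, where the $\Al_\ri^{(t)}$ are pointedly \emph{not} assumed close to $I$. Each $\Al_\ri^{(t)}$ is a Gram matrix and hence PSD, so $A^{(t)}\succeq 0$; for $\cm\ge\breve\cm$ its operator norm is bounded --- here one uses from Theorem~\ref{thm:AlsBls} only boundedness, not the limiting value $LI$ --- so for $\mu<\breve\mu$ every factor $I-\mu d_j A^{(\nunu)}$ is a near-contraction and the partial products $\prod_{\rhorho=\nunu+1}^{t}(I-\mu d_j A^{(\rhorho)})$ have operator norm $O(1)$ uniformly over $t\le\breve{t}$. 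Hence the accumulated error $\sum_{\nunu}\big(\prod_{\rhorho>\nunu}(I-\mu d_j A^{(\rhorho)})\big)\tilde e_j^{(\nunu)}$ is at most $O(\breve{t})$ times the per-step error and can be forced below $\varepsilon$, and a union bound over the $t$ steps gives overall probability at least $1-\delta$; one finally checks that the width threshold $\breve\cm$ can be taken independent of $\mu$, because the $\omverm$ and $O(\mu^2)$ contributions to $E^{(t)}$ enter additively, which is exactly the quantifier order asserted. This non-commuting-product bookkeeping, together with extracting just boundedness of the $\Al_\ri^{(t)}$, is the delicate part; the rest parallels the proof of Theorem~\ref{thm:convergence-rate}.
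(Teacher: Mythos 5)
Your proposal is correct and follows essentially the same route as the paper: use only the $B$-half of Theorem~\ref{thm:AlsBls} to reduce the update of Proposition~\ref{thm:1} to $\hW^{(t+1)}\approx\hW^{(t)}-\mu A^{(t)}G_r^{(t)}$ with $A^{(t)}=\sum_{\ri=1}^{L}\Al_\ri^{(t)}$, pass to the principal coordinate system to decouple the columns, and unroll the resulting affine recursion into the ordered telescoping product in the statement. What you add, and what the paper leaves implicit, is the error-accumulation step: the paper establishes only the one-step high-probability bound (\ref{eq:thm7-ed}) and then asserts ``the assertion in the theorem follows,'' without tracking how $t$ per-step errors compound through the non-commuting partial products $\prod_{\rhorho>\nunu}(I-\mu d_j A^{(\rhorho)})$. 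Your observation that each $\Al_\ri^{(t)}$ is a PSD Gram matrix with bounded operator norm, so each factor $I-\mu d_j A^{(\nunu)}$ is a near-contraction and the partial products stay $O(1)$ in operator norm uniformly over $t\le\breve t$, supplies exactly the missing bookkeeping; together with the union bound over steps and the remark that $\breve\cm$ can be chosen independently of $\mu$ because the $\omverm$ and $O(\mu^2)$ contributions enter additively, this is a genuine strengthening of what the paper actually writes down. The only cosmetic divergence is that you substitute $M=\hW^{opt}D$ up front to write $G_r^{(t)}=(\hW^{(t)}-\hW^{opt})D$, while the paper carries $\bmm_j$ through the telescoping and substitutes $\bw_j^{opt}=\bmm_j/d_j$ only at the very end; the two are interchangeable.
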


Although the dynamics now depend on matrices $A^{(t)}_\rl$ as well, it is still the case that the convergence of each column is governed by its singular value $d_j$. This suggests that while the \emph{PC-bias} is more pronounced at earlier stages of learning, its effect persists throughout.

\subsection{Adding ReLU Activation}
\label{sec:relu}

We extend the results above to a relatively simple non-linear model suggested and analyzed by \citet{DBLP:conf/nips/Allen-ZhuLL19,arora2019fine,ronen2019convergence}, here trained on a classification task rather than a regression task. Specifically, it is a two-layer model with ReLU activation, where only the weights of the first layer are being learned. Similarly to (\ref{eq:multi-loss}), the optimization problem is defined as
\begin{equation*}
W^* =  \argmin\limits_W\frac{1}{2}\sum_{i=1}^n \Vert f(\bx_i)-\by_i \Vert^2, \qquad f(\bx_i) = \ba^\intercal \cdot \sigma(W \bx_i), ~~ \ba\in\R^{\cm},~ W\in\R^{\cm\times d},
\end{equation*}
where $\cm$ denotes the number of neurons in the hidden layer and $\ba$ a fixed vector. We consider a binary classification problem with 2 classes, where $y_i=1$ for $\bx_i\in C_1$, and $y_i=-1$ for $\bx_i\in C_2$. $\sigma(\cdot)$ denotes the ReLU activation function $\sigma(u) = \max(u,0)$. Unlike deep linear networks, the analysis in this case requires two additional symmetry assumptions:
\begin{enumerate}
    \item Each point $\bx_i$ is drawn from a symmetric distribution $\mathcal{D}$ with density $f_\mathcal{D}(\bX)$, such that: $f_\mathcal{D}(\bx_i)=f_\mathcal{D}(-\bx_i)$.
    \item $W$ and $\ba$ are initialized symmetrically so that $\bw^{(0)}_{{\textstyle\mathstrut}2i}\hspace{-3pt}=\hspace{-2pt} - \bw^{(0)}_{{\textstyle\mathstrut}2i-1}$ and $a_{{\textstyle\mathstrut}2i} \hspace{-2pt}=\hspace{-2pt} - a_{{\textstyle\mathstrut}2i-1}$ $\forall i\in[\frac{\cm}{2}]$.
\end{enumerate}

\begin{theorem}
\label{thm:dynamics_relu}
Retaining the assumptions stated above, at the beginning of the learning, the temporal dynamics of the model can be shown to obey the following update rule:
\begin{equation*}
W^{(t+1)} \approx W^{(t)}-\mu \frac{1}{2}\Big [ (\ba\ba^\intercal) W^{(t)}\SXX - \tilde M^{(t)} \Big ].
\end{equation*}
Above $\tilde M^{(t)}$ denotes the difference between the centroids of the 2 classes, computed in the half-space defined by $\bw^{(t)}_r\cdot\bx\geq 0$.
\end{theorem}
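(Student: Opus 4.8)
The plan is to differentiate the loss explicitly, one neuron at a time, isolate the two natural pieces of the gradient, and then invoke the two symmetry assumptions to collapse each piece into the stated matrix expression. Writing $\bw_r^{(t)}$ for the $r$-th row of $W^{(t)}$ and $f^{(t)}(\bx)=\sum_{r}a_r\sigma(\bw_r^{(t)\intercal}\bx)$, the gradient-descent step on the first layer is, exactly,
\begin{equation*}
\bw_r^{(t+1)} = \bw_r^{(t)} - \mu\, a_r\sum_{i=1}^n\big(f^{(t)}(\bx_i)-y_i\big)\,\mathbbm{1}[\bw_r^{(t)\intercal}\bx_i\geq 0]\,\bx_i .
\end{equation*}
I would split the summand as $f^{(t)}(\bx_i)\mathbbm{1}[\cdot]\bx_i-y_i\mathbbm{1}[\cdot]\bx_i$. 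The second, ``teacher'', sum equals $\sum_{\bx_i\in C_1,\,\bw_r^{(t)\intercal}\bx_i\geq0}\bx_i-\sum_{\bx_i\in C_2,\,\bw_r^{(t)\intercal}\bx_i\geq0}\bx_i$, i.e.\ (up to the per-class counts) precisely the difference of the two class centroids restricted to the half-space $\{\bw_r^{(t)\intercal}\bx\geq0\}$; stacked over $r$ this contributes exactly the term $\tilde M^{(t)}$, so it needs no further work.

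The first, ``prediction'', sum is where the hypotheses enter. First I would use the symmetric initialization together with the identity $\sigma(u)-\sigma(-u)=u$: since $\bw^{(0)}_{2i}=-\bw^{(0)}_{2i-1}$ and $a_{2i}=-a_{2i-1}$, the two neurons of every pair combine into a single \emph{linear} term, so $f^{(0)}$ is the linear map $\bx\mapsto c\,\ba^\intercal W^{(0)}\bx$ for a fixed constant $c$. The next step is to show this near-linearity survives ``at the beginning of the learning'': over finitely many steps the exact pairing $\bw^{(t)}_{2i}=-\bw^{(t)}_{2i-1}$ is broken only by an $O(\mu)$ correction per step (which one reads off directly from the update rule), and the ReLU activation patterns $\mathbbm{1}[\bw_r^{(t)\intercal}\bx_i\geq 0]$ likewise drift negligibly, so that $f^{(t)}(\bx)\approx c\,\ba^\intercal W^{(t)}\bx$ throughout the relevant window --- all of this staying within the $O(\mu^2)$ (and $\omverm$) budget the paper already discards.

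With $f^{(t)}$ (approximately) linear, hence odd, the half-space indicator in the prediction sum can be removed using Assumption~1: for a linear $f^{(t)}$ one has $\sum_i f^{(t)}(\bx_i)\mathbbm{1}[\bw_r^{(t)\intercal}\bx_i\geq0]\bx_i = c\,\big(\sum_i \bx_i\bx_i^\intercal\,\mathbbm{1}[\bw_r^{(t)\intercal}\bx_i\geq0]\big)\,W^{(t)\intercal}\ba$, and the substitution $\bx\mapsto-\bx$ (legitimate because $f_\mathcal{D}(\bx)=f_\mathcal{D}(-\bx)$ and $\bx_i\bx_i^\intercal=(-\bx_i)(-\bx_i)^\intercal$) shows the half-space-restricted second moment equals $\tfrac12\SXX$ --- exactly at the population level, and for the finite training set up to a symmetrization error absorbed into the error budget (equivalently, assuming the sample is closed under $\bx\mapsto-\bx$). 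Thus the prediction sum is $\approx\tfrac{c}{2}\,\SXX\,W^{(t)\intercal}\ba$, which, multiplied by $a_r$ and stacked over the rows $r$, is a constant multiple of $(\ba\ba^\intercal)W^{(t)}\SXX$. Collecting the two pieces, together with the numerical constants produced by the symmetric initialization, yields
\begin{equation*}
W^{(t+1)} \approx W^{(t)} - \mu\,\tfrac12\big[(\ba\ba^\intercal)W^{(t)}\SXX - \tilde M^{(t)}\big],
\end{equation*}
as claimed.

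The main obstacle, I expect, is making ``at the beginning of the learning'' quantitative: one must track simultaneously (i) the drift of the symmetric weight-pairing, (ii) the induced deviation of $f^{(t)}$ from linearity, and (iii) the change of the ReLU activation patterns and of the half-space-restricted second moment away from the idealized $\tfrac12\SXX$, and show that their combined contribution to a single GD step is $O(\mu^2)+\omverm$ and may therefore be dropped under the paper's asymptotic conventions. Once this bookkeeping is in place, the algebra above is routine.
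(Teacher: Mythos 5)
Your proposal is essentially the same argument as the paper's proof: differentiate the loss row-by-row, use the antisymmetric pairing of weights and second-layer coefficients to collapse the prediction term, use the symmetry of the data distribution to argue that the half-space-restricted second moment is $\tfrac12\SXX$, and stack over rows to assemble the matrix form. Your route through $\sigma(u)-\sigma(-u)=u$ to show $f^{(0)}(\bx)=\tfrac12\ba^\intercal W^{(0)}\bx$ is a clean equivalent of the paper's observation that $\mathbbm{1}_{\bw^{(0)}_{2j}}(\bx)+\mathbbm{1}_{\bw^{(0)}_{2j-1}}(\bx)=1$ so that $\Psi^{(0)}=\tfrac12\ba^\intercal W^{(0)}$; you are also somewhat more explicit than the paper about the fact that the exact pairing degrades only by $O(\mu)$ per step, which the paper glosses over by establishing the identity at $t=0$ and then asserting it ``at the beginning of training.''
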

The complete proof can be found in \app\ref{app:relu}. \thm\ref{thm:dynamics_relu} is reminiscent of the single-layer linear model dynamics $\hW^{(t+1)}=\hW^{(t)} - \mu G_r^{(t)}$, and we may conclude that when it holds, using the principal coordinate system, the rate of convergence of the $j$-th column of $W^{(t)}$ is governed by the singular value $d_j$. \fig\ref{fig:dw_relu_nets} empirically demonstrated this result, showing the weight convergence of $10$ ReLU models trained on the small mammals dataset (5 classes out of CIFAR-100, see \app\ref{app:datasets} for details), along different principal directions of the data.

\section{PC-Bias: Empirical Study}
\label{sec:results}

In this section, we first analyze deep linear networks, showing that the convergence rate is indeed governed by the principal singular values of the data, which demonstrates the plausibility of the assumptions made in Section~\ref{sec:theo-res}. We continue by extending the scope of the investigation to non-linear neural networks, finding evidence for the \emph{PC-bias} mostly in the earlier stages of learning. 

\subsection{Methodology}
\label{sec:methods}

We say that a linear network is $L$-layered when it has $L-1$ hidden fully connected (FC) layers (without convolutional layers). In our empirical study, we relaxed some assumptions of the theoretical analysis, to increase the resemblance of the trained networks to networks in common use. Specifically, we changed the initialization to the commonly used Glorot initialization, replaced the $L_2$ loss with the cross-entropy loss, and employed SGD instead of the deterministic GD. As to be expected, the original assumptions yielded similar results (see \S\ref{app:sec:stds_square_loss}). The results presented summarize experiments with networks of equal width across all hidden layers, fixing the moderate\footnote{Note that for most image datasets, $q$ (the input dimension) is much larger than $1024$, and therefore $\frac{q}{\cm}>>1$. In this sense $\cm=1024$ is moderate, as one can no longer assume $A_\ri^{(t)}(\cm)\approx I$.} value of $\cm=1024$ in order to assess the relevance of our asymptotic results obtained with $\cm\to\infty$. Using a different width for each layer yielded similar qualitative results. Details regarding the hyper-parameters, architectures, and datasets can be found in \S\ref{app:sec:hyper_params}, \S\ref{app:architectures}, and \S\ref{app:datasets} respectively.

\subsection{PC-bias in Deep Linear Networks}
\label{sec:validation}

In this section, we train $L$-layered linear networks, then compute their compact representations $\hW$ rotated to align with the canonical coordinate system (Def.~\ref{def:canon}). Note that each row $\bw(r)$ in $\hW$ essentially defines the one-vs-all separating hyper-plane corresponding to class $r$.

To examine both the variability between models and their convergence rate, we inspect $\bw(r)$ at different time points during learning. The rate of convergence can be measured directly, by observing the changes in the weights of each element in $\bw(r)$. These weight values\footnote{We note that the weights tend to start larger for smaller principal components, see \fig\ref{fig:stds_cats_and_dogs_2_matrices} left.} are compared with the corresponding optimal weights $w_{opt}$.
The variability between models is measured by computing the standard deviation (std) of each row vector $\bw(r)$ across $N$ models, obtained with different random initializations.

We begin with linear networks. We trained $10$ $5$-layered FC linear networks, and $10$ linear st-VGG convolutional networks. When analyzing the compact representation of such networks we observe a similar behavior---weights corresponding to larger principal components converge faster to the optimal value, and their variability across models converges faster to 0 (Figs.~\ref{fig:stds_cats_and_dogs_2_matrices},\ref{fig:stds_linearnet_cats_and_dogs}). Thus, while the theoretical results are asymptotic, the \emph{PC-bias} is empirically seen throughout the entire learning process of deep linear networks. 

We note that in the principal coordinate system, the absolute values of the optimal solution are often higher in directions corresponding to lower principal components. Nevertheless, faster convergence in the directions corresponding to the higher principal components is seen even after this confounding factor is eliminated by normalization (see \app\ref{app:sec:distance_to_convergence}).

\begin{figure}[h!]
\begin{subfigure}{.5\textwidth}
\begin{subfigure}{.32\textwidth}
      \centering
     \includegraphics[width=1\linewidth]{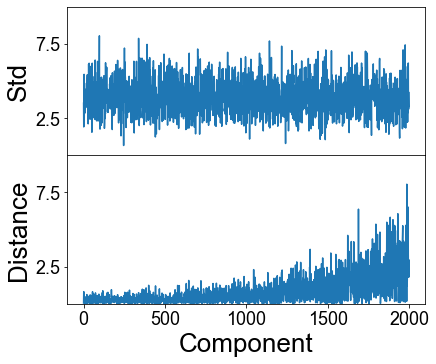}
    \end{subfigure}
    \begin{subfigure}{.32\textwidth}
      \centering
      \includegraphics[width=1\linewidth]{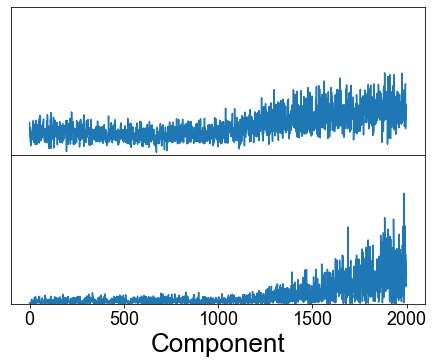}
    \end{subfigure}
    \begin{subfigure}{.32\textwidth}
      \centering
      \includegraphics[width=1\linewidth]{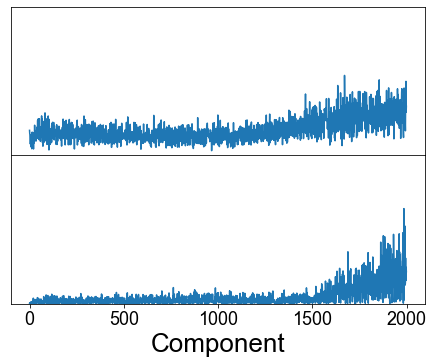}
    \end{subfigure}
    \caption{5-Layered linear network}
    \label{fig:stds_cats_and_dogs_2_matrices}
\end{subfigure}
\begin{subfigure}{.5\textwidth}
   \begin{subfigure}{.32\textwidth}
      \centering
      \includegraphics[width=1\linewidth]{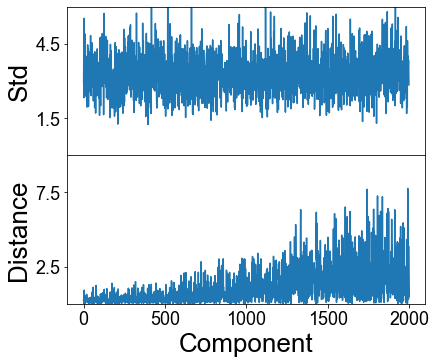}
    \end{subfigure}
    \begin{subfigure}{.32\textwidth}
      \centering
      \includegraphics[width=1\linewidth]{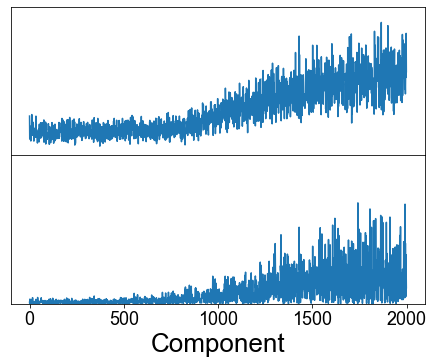}
    \end{subfigure}
    \begin{subfigure}{.32\textwidth}
      \centering
      \includegraphics[width=1\linewidth]{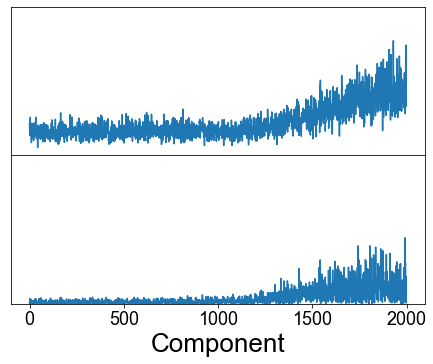}
    \end{subfigure}
    \caption{Linear convolutional network}
    \label{fig:stds_linearnet_cats_and_dogs}
\end{subfigure}

\begin{subfigure}{.5\textwidth}
\begin{subfigure}{.32\textwidth}
      \centering
      \includegraphics[width=1\linewidth]{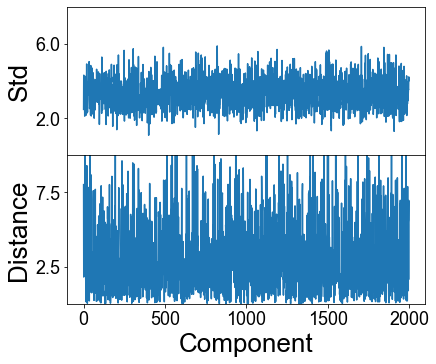}
    \end{subfigure}
    \begin{subfigure}{.32\textwidth}
      \centering
      \includegraphics[width=1\linewidth]{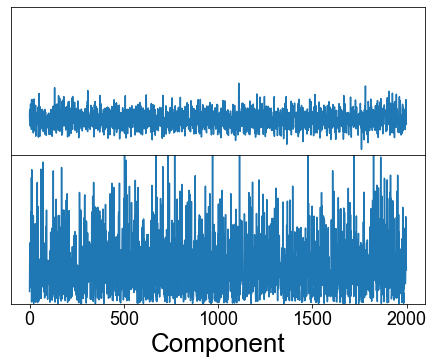}
    \end{subfigure}
    \begin{subfigure}{.32\textwidth}
      \centering
      \includegraphics[width=1\linewidth]{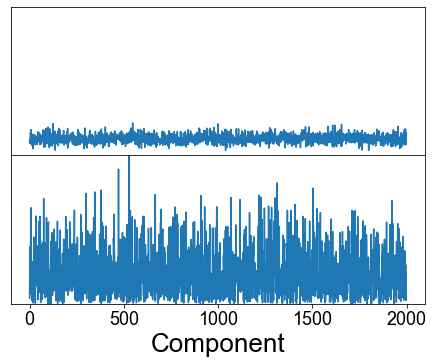}
    \end{subfigure}
\caption{5-Layered linear network, whitened data}
\label{fig:stds_cats_dogs_1_matrix_whitening}
\end{subfigure}
\begin{subfigure}{.5\textwidth}
\begin{subfigure}{.32\textwidth}
      \centering
      \includegraphics[width=1\linewidth]{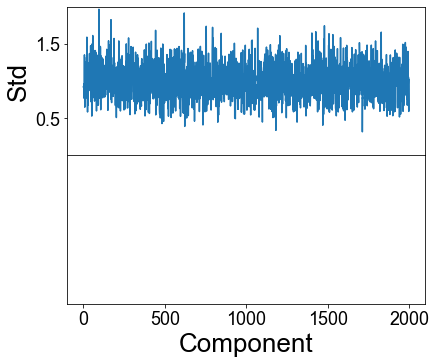}
    \end{subfigure}
    \begin{subfigure}{.32\textwidth}
      \centering
      \includegraphics[width=1\linewidth]{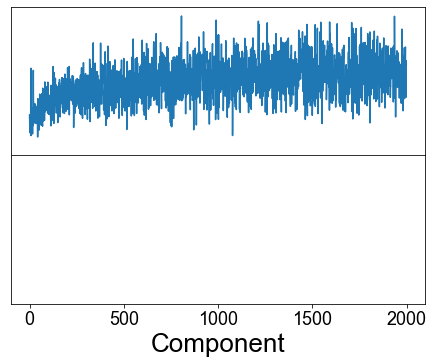}
    \end{subfigure}
    \begin{subfigure}{.32\textwidth}
      \centering
      \includegraphics[width=1\linewidth]{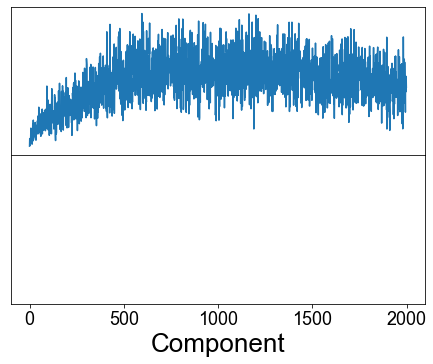}
    \end{subfigure}
\caption{Non-linear convolutional network}
\label{fig:stds_cats_and_dogs_CNN}
\end{subfigure}
\caption{Convergence of the compact representation along the principal directions in different epochs. The value of the $X$-axis corresponds to the index of a principal eigenvalue, from the most significant to the least significant. (a) $10$ $5$-layered linear networks trained on the cats and dogs dataset. 3 plots are provided, corresponding to snapshots taken at different stages of learning: the beginning (epoch $1$, left), intermediate stage (epoch $100$, middle), and close to convergence (epoch $1500$ right). Bottom panel: average distance of the weights in $\bw(1)$ from the optimal linear classifier; top panel: respective std. (b) Similarly, for $10$ linear st-VGG convolutional networks, trained on CIFAR-10 (epochs plotted: $1,100,500$). (c) Similarly, for $10$ $5$-layered linear networks, trained on the cats and dogs dataset, with ZCA-whitening (epochs plotted: $1,2,1000$). (d) Similarly, for $10$ \textbf{non-linear} st-VGG networks trained on the cats and dogs dataset. Here the distance to the optimal solution is not well defined and we therefore only show the std (epochs plotted: $1,10,100$). In all plots, the data dimension is $3072$, while only the largest $2000$ principal components are shown for clarity. Visualization of the values of $\lambda_i$ can be found in Fig.~\ref{app:fig:lambda_j_figure}.}
\label{fig:stds_and_distances_all_cases}
\end{figure}

\paragraph{Whitened data.}
The \emph{PC-bias} is neutralized when the data is whitened, at which point $\SXX$ is the scaled identity matrix. In Fig.~\ref{fig:stds_cats_dogs_1_matrix_whitening}, we plot the results of the same experimental protocol while using a ZCA-whitened dataset. As predicted, the networks no longer show any bias towards any principal direction. Weights in all directions are scaled similarly, and the std over all models is the same in each epoch, irrespective of the principal direction. (Additional experiments show that this is \emph{not} an artifact, due to the lack of uniqueness when deriving the principal eigenvectors of a white signal).

\subsection{PC-Bias in General CNNs}
\label{sec:pc_bias_non_linear_networks}

In this section, we investigate the manifestation of the \emph{PC-bias} in non-linear deep convolutional networks. As we cannot directly track the learning dynamics separately in each principal direction of non-linear networks, we adopt two different evaluation mechanisms:

\paragraph{(i) Linear approximation.}
We considered several linear approximations, but since all of them showed the same qualitative behavior, we report results with the simplest one. Specifically, to obtain a linear approximation of a non-linear network, we ignore the non-linear layers of max-pooling and batch-normalization, and compute the compact representation as in Section~\ref{sec:theo-res} using only linear activations. We then align this matrix with the canonical coordinate system (Def.~\ref{def:canon}), and observe the evolution of the weights and their std across models along the principal directions during learning. Note that now the networks do not converge to the same compact representation, which is not unique. Nevertheless, we see that the \emph{PC-bias} governs the weight dynamics to a noticeable extent.

We note that, in these networks, a large fraction of the lowest principal components hardly ever changes during learning. Nevertheless, the \emph{PC-bias} affects the higher principal components, most notably at the beginning of training (see \fig\ref{fig:stds_cats_and_dogs_CNN}). Thus weights corresponding to higher principal components converge faster, and the std across models of such weights decreases faster for higher principal components.


\paragraph{(ii) Projection to higher PC's.}
We created a modified \emph{test-set}, by projecting each test example on the span of the first $P$ principal components. This is equivalent to reducing the dimensionality of the test set to $P$ using PCA. We trained an ensemble of $N$=$100$ st-VGG networks on the original training set of the small mammals dataset (see \app\ref{app:datasets}), then evaluated these networks during training on 4 versions of the test-set, reduced to $P$=$1$,$10$,$100$,$1000$ dimensions respectively. Mean accuracy is plotted in \fig\ref{fig:acc_on_pca_components}. Similar results are obtained when training VGG-19 networks on CIFAR-10, see \S\ref{app:sec:project_to_higher_pc}.

Taking a closer look at \fig\ref{fig:acc_on_pca_components}, we see that when evaluated on lower dimensionality test-data ($P$=$1$,$10$), the networks' accuracy peaks after a few epochs, at which point performance starts to decrease. This result suggests that the networks rely more heavily on these dimensions in the earlier phases of learning, and then continue to learn other things. In contrast, when evaluated on higher dimensionality test-data ($P$=$100$,$1000$), accuracy continues to rise, longer so for larger $P$. This suggests that significant learning of the additional dimensions continues in later stages of the learning.

\section{PC-Bias: Learning Order Constancy}
\label{sec:pc_bias_explain_loc}

In this section, we show that the \emph{PC-bias} is significantly correlated with the learning order of deep neural networks, and can therefore partially account for the \emph{LOC-effect} described in Section~\ref{sec:intro}. Following \citet{hacohen2020let}, we measure the "speed of learning" of each example by computing its \emph{accessibility} score. This score is computed per example, and characterizes how fast an ensemble of $N$ networks memorizes it. Formally, $accessibility(\bx)=\E\left[\mathbbm{1}(f_i^e(\bx)=y(\bx)\right)]$, where $f_i^e(\bx)$ denotes the outcome of the $i$-th network trained over $e$ epochs, and the mean is taken over networks and epochs. For the set of datapoints $\{(\bx_j, \by_j)\}_{j=1}^n$, \emph{Learning Order Constancy} is manifested by the high correlation between 2 instances of $accessibility(\bx)$, each computed from a different ensemble.

\emph{PC-bias} is shown to pertain to \emph{LOC} in two ways: First, in Section~\ref{sec:results-1} we show a high correlation between the learning order in deep linear and non-linear networks. Since the \emph{PC-bias} fully accounts for \emph{LOC} in deep linear networks, this suggests that it also  accounts (at least partially) for the observed \emph{LOC} in non-linear networks. Comparison with the \emph{critical principal component} verifies this assertion. Second, we show in Section~\ref{sec:whitened} that when the \emph{PC-bias} is neutralized, \emph{LOC} diminishes as well. In Section~\ref{sec:empricial-freq-bias} we discuss the relationship between the \emph{spectral bias}, \emph{PC-bias} and the \emph{LOC-effect}.

\subsection{PC-Bias is Correlated with LOC}
\label{sec:results-1}

We first compare the order of learning of non-linear models and deep linear networks by computing the correlation between the \emph{accessibility} scores of both models. This comparison reveals high correlation ($r=0.85$, $p<10^{-45}$), as seen in \fig\ref{fig:accessibility-slow-lr-and-linear-small-mammals}. To investigate directly the connection between the \emph{PC-bias} and \emph{LOC}, we define the \emph{critical principal component} of an example to be the first principal component $P$, such that a linear classifier trained on the original data can classify the example correctly when projected to $P$ principal components. We trained $N$=$100$ st-VGG networks on the cats and dogs dataset (2 classes out of CIFAR-10, see \S\ref{app:datasets} for details), and computed for each example its \emph{accessibility} score and its \emph{critical principal component}. In \fig\ref{fig:accessibility_critical_component_correlation} we see strong negative correlation between the two scores ($p$=$-0.93$, $r\!<\!10^{-4}$), suggesting that the \emph{PC-bias} affects the order of learning as measured by \emph{accessibility}. 

\begin{figure}[ht]
\begin{minipage}{.5\textwidth}
    \begin{subfigure}{.49\textwidth}
      \centering
      \includegraphics[width=1\linewidth]{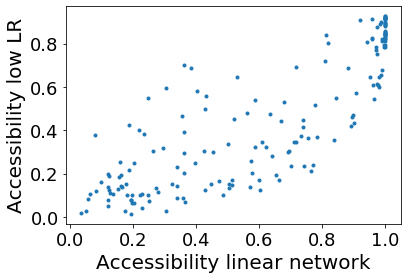}
      \caption{}
      \label{fig:accessibility-slow-lr-and-linear-small-mammals}
    \end{subfigure}
    \begin{subfigure}{.49\textwidth}
     \centering
      \includegraphics[width=1\linewidth]{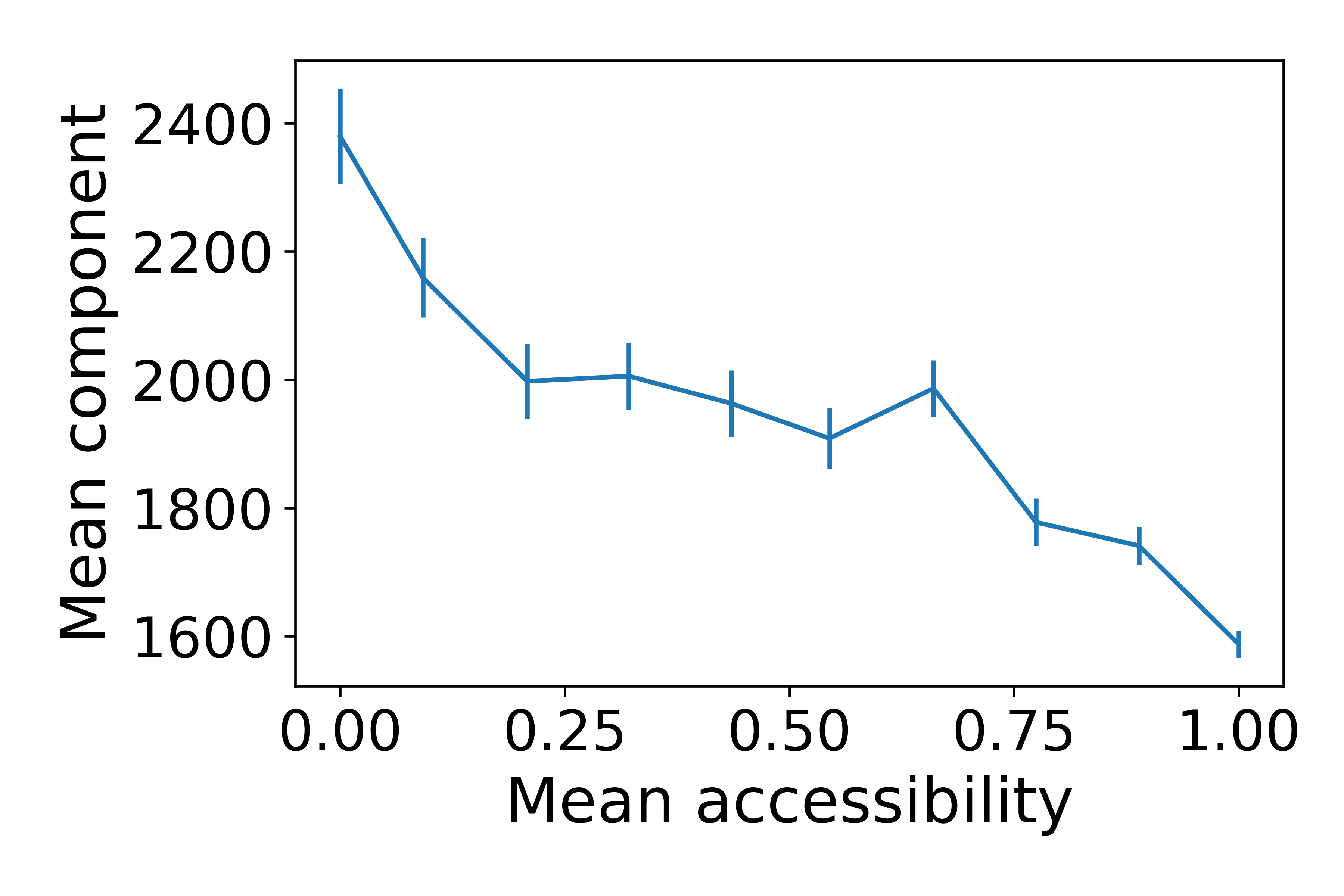}
      \caption{}
      \label{fig:accessibility_critical_component_correlation}
    \end{subfigure}
    \caption{(a) Correlation between the \emph{accessibility} score of $N$=$100$ st-VGG networks trained with a low learning rate\protect\footnotemark, and $N$=$100$ linear st-VGG networks, trained on small mammals (see \S\ref{app:architectures}, \S\ref{app:datasets} for details). (b) The \emph{critical principal component} score, plotted against the accessibility score of $N$=$100$ st-VGG networks trained on the cats and dogs dataset. \emph{Accessibility} values were smoothed by a moving average filter of width 10. Error bars indicate standard error.}
\end{minipage}
\hspace{0.2cm}
\begin{minipage}{.5\textwidth}
    \begin{subfigure}{1\textwidth}
        \begin{subfigure}{.49\textwidth}
          \centering
          \includegraphics[width=1\linewidth]{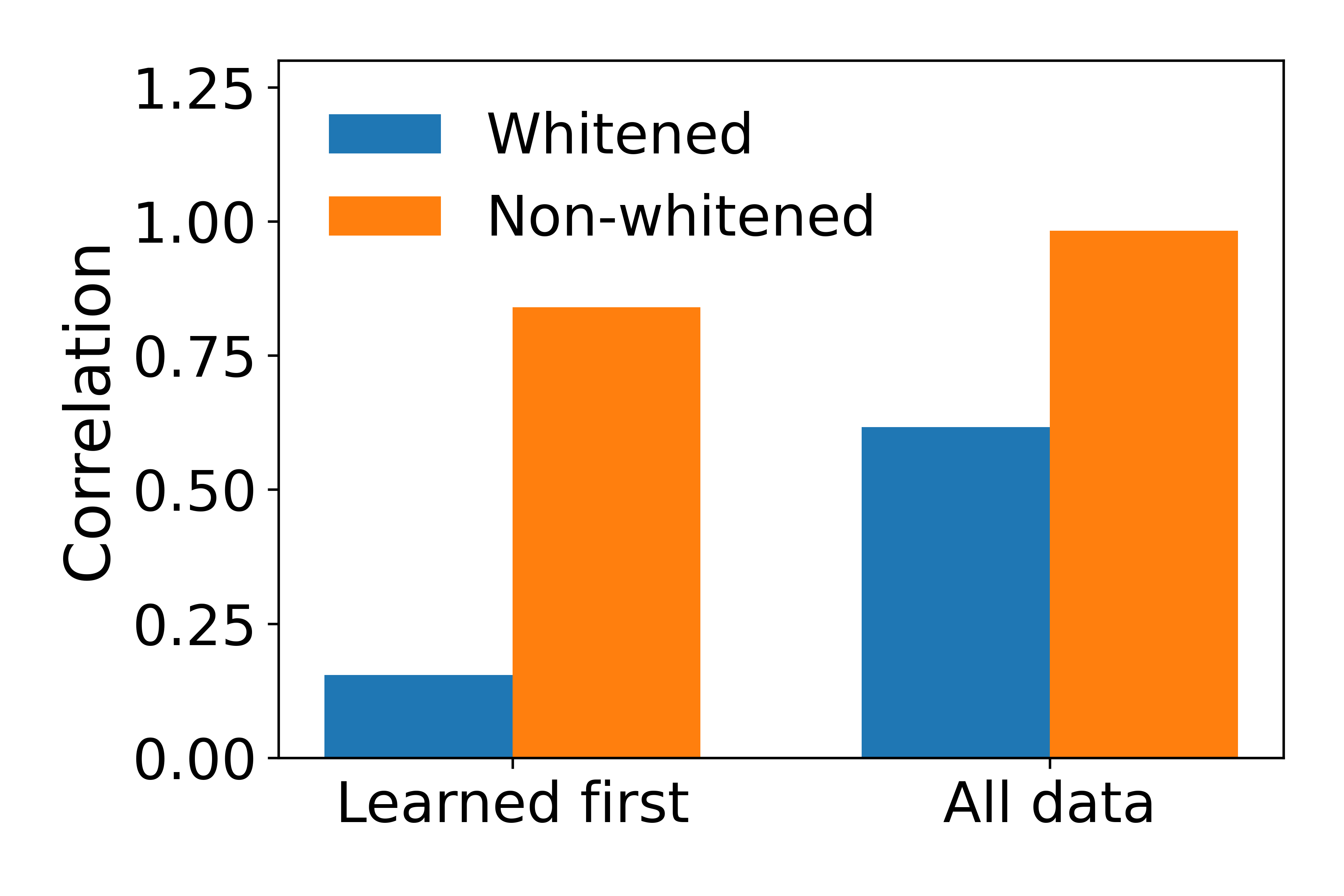}
          \caption{Linear networks}
          \label{subfig:accessilibity_whiten_cats_and_dogs_linear_network}
        \end{subfigure}
        \begin{subfigure}{.49\textwidth}
          \centering
          \includegraphics[width=1\linewidth]{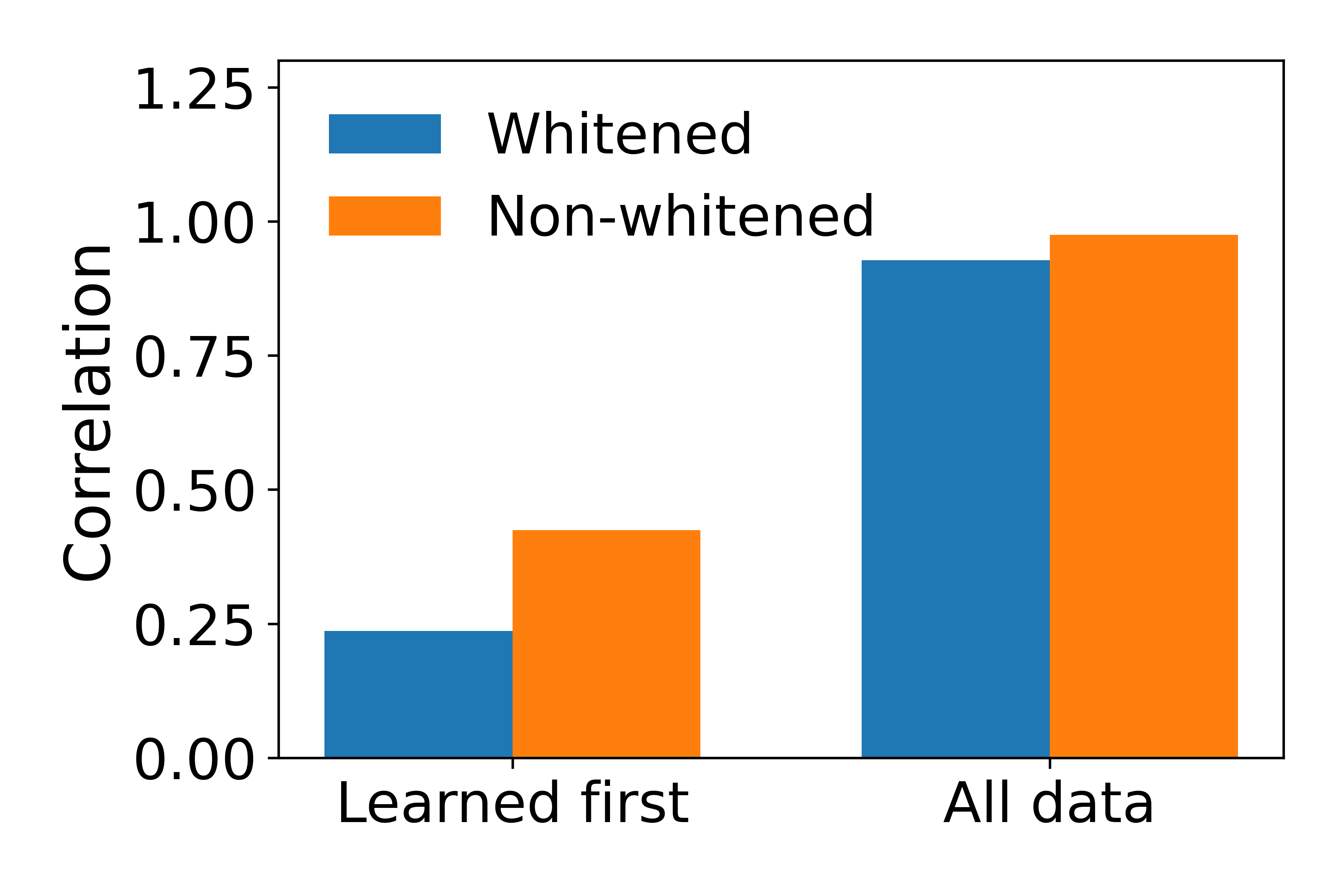}
          \caption{Non-linear networks}
          \label{subfig:accessilibity_whiten_cifar10_vgg}
        \end{subfigure}
    \end{subfigure}

    \caption{\emph{LOC} as measured with and without \emph{PC-bias}. Each bar represents the correlation between the learning order of two collections of $10$ networks trained on CIFAR-10. Orange bars represent natural images, in which the \emph{PC-bias} is present, while blue bars represent whitened data, in which the \emph{PC-bias} is neutralized. As \emph{PC-bias} is more prominent earlier on, we compute these correlations using the entire data (right two bars), and using the subset of $20\%$ "fastest learned" examples (left two bars).}
    \label{fig:accessibility_zca_regular}
\end{minipage}
\end{figure}
\footnotetext{As non-linear models achieve the accuracy of linear models within an epoch or 2, a low learning rate is used.}

\subsection{Neutralizing the PC-bias Leads to Diminishing LOC}
\label{sec:whitened}

Whitening the data eliminates the \emph{PC-bias} as shown in \fig\ref{fig:stds_cats_dogs_1_matrix_whitening}, since all the singular values are now identical. Here we use this observation to further probe into the dependency of the \emph{Learning Order Constancy} on the \emph{PC-bias}. Starting with the linear case, we trained four ensembles of $N$=$10$ two-layered linear networks (following the same methodology as Section~\ref{sec:methods}) on the cats and dogs dataset, two with and two without ZCA-whitening. We compute the \emph{accessibility} score for each ensemble separately, and correlate the scores of the two ensembles in each test case. Each correlation captures the consistency of the \emph{LOC-effect} for the respective condition. This correlation is expected to be very high for natural images. Low correlation implies that the \emph{LOC-effect} is weak, as training the same network multiple times yields a different learning order. 

\fig\ref{subfig:accessilibity_whiten_cats_and_dogs_linear_network} shows the results for deep linear networks. As expected, the correlation when using natural images is very high. However, when using whitened images, correlation plummets, indicating that the \emph{LOC-effect} is highly dependent on the \emph{PC-bias}. We note that the drop in the correlation is much higher when considering only the $20\%$ "fastest learned" examples, suggesting that the \emph{PC-bias} affects learning order more strongly at earlier stages of learning.

Fig.~\ref{subfig:accessilibity_whiten_cifar10_vgg} shows the results when repeating this experiment with non-linear networks, training two collections of $N$=$10$ VGG-19 networks on CIFAR-10. We observe that neutralizing the \emph{PC-bias} in this case affects \emph{LOC} much less, suggesting that the \emph{PC-bias} can only partially account for the \emph{LOC-effect} in the non-linear case. Nevertheless, we note that at the beginning of learning, when the \emph{PC-bias} is most pronounced, once again the drop is much larger and very significant (down by half), see left two bars of Fig.~\ref{subfig:accessilibity_whiten_cifar10_vgg}.

\subsection{Spectral Bias, PC-bias, and LOC}
\label{sec:empricial-freq-bias}

The \emph{spectral bias} \citep{rahaman2019spectral} characterizes the dynamics of learning in neural networks differently, asserting that initially neural models can be described by low frequencies only. This may provide an alternative explanation to LOC. Recall that LOC is manifested in the consistency of the \emph{accessibility} score across networks. To compare the \emph{spectral bias} and \emph{accessibility} score, we first need to estimate for each example whether it can be correctly classified by a low-frequency model. Accordingly, we define for each example a \emph{discriminability} score---the percentage out of its $k$ neighbors that share with it a class identity. Intuitively, an example has a low \emph{discriminability} score when it is surrounded by examples from other classes, which would presumably force the learned boundary to incorporate high frequencies. In \S\ref{app:sec:spectral_bias} we show that in the 2D case analyzed by \citet{rahaman2019spectral}, this measure strongly correlates ($r$=$-0.8$, $p<10^{-2}$) with the spectral bias.

We trained several networks (VGG-19 and st-VGG) on several real datasets, including small mammals, STL-10, CIFAR-10/100, and a subset of ImageNet-20 (see \app\ref{app:datasets}). For each network and dataset, we compute the \emph{accessibility} score as well as the \emph{discriminability} of each example. The vector space, in which discriminability is evaluated, is either the raw data or the network's perceptual space (penultimate layer activation). The correlation between these scores is shown in \tab\ref{tab:table1}.

\begin{figure}[h!]
  \captionof{table}{Correlation between \emph{accessibility} and \emph{discriminability}.}
  \begin{center}
      \begin{tabular}{lcc}
        \toprule
        Dataset       & Raw data     & Penultimate \\
        \midrule
        Small mammals & 0.46       & 0.85   \\
        ImageNet 20   & 0.01       & 0.54   \\
        CIFAR-100     & 0.51       & 0.85   \\
        STL10         & 0.44       & 0.7    \\
        \bottomrule
      \end{tabular}    
  \end{center}
\label{tab:table1}
\end{figure}

Using raw data, low correlation is still seen between the \emph{accessibility} and \emph{discriminability} scores when inspecting the smaller datasets (small mammals, CIFAR-100, and STL10). This correlation disappears when considering the larger ImageNet-20 dataset. It would appear that on its own, the \emph{spectral bias} cannot adequately explain the \emph{LOC-effect}. On the other hand, in the perceptual space, the correlation between \emph{discriminability} and \emph{accessibility} is quite significant for all datasets. Differently from the supposition of the spectral bias, it seems that networks learn a representation where the \emph{spectral bias} is evident, but this bias does not necessarily govern its learning before the representation has been obtained. 

\paragraph{Discussion.}
In the limit of infinite width, the spectral bias phenomenon in deep linear networks follows from the PC-bias. In function space, the training process of neural networks can be decomposed along different directions
defined by the eigenfunctions of the neural tangent kernel, where each direction has its 
convergence rate and the rate is determined by the corresponding eigenvalue \citep{conf_ijcai_CaoFWZG21}. Since the Neural Tangent Kernel for a deep linear network is proportional to $\Sigma_{XX}$, the spectral bias, in this case, corresponds to the statement that the right singular vectors of $X$ are learned at rates corresponding to their 
(squared) singular values. Looking at the spectral decomposition of $\hW X$, we see that the PC Bias implies in function space that the right singular vectors of $X$ are learned at rates corresponding to their singular values. Thus the PC Bias implies the Spectral Bias in this model.

\begin{figure}[thb]
\begin{center}
    \begin{subfigure}{.1\textwidth}
      \centering
      \includegraphics[width=.95\linewidth]{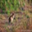}
    \end{subfigure}
    \begin{subfigure}{.1\textwidth}
      \centering
      \includegraphics[width=.95\linewidth]{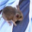}
    \end{subfigure}
    \begin{subfigure}{.1\textwidth}
      \centering
      \includegraphics[width=.95\linewidth]{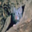}
    \end{subfigure}
    \begin{subfigure}{.1\textwidth}
      \centering
      \includegraphics[width=.95\linewidth]{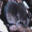}
    \end{subfigure}
    \begin{subfigure}{.1\textwidth}
      \centering
      \includegraphics[width=.95\linewidth]{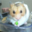}
    \end{subfigure}
    \begin{subfigure}{.1\textwidth}
      \centering
      \includegraphics[width=.95\linewidth]{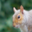}
    \end{subfigure}
    \begin{subfigure}{.1\textwidth}
      \centering
      \includegraphics[width=.95\linewidth]{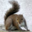}
    \end{subfigure}
    \begin{subfigure}{.1\textwidth}
      \centering
      \includegraphics[width=.95\linewidth]{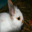}
    \end{subfigure}
    \begin{subfigure}{.1\textwidth}
      \centering
      \includegraphics[width=.95\linewidth]{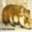}
    \end{subfigure}

  \begin{subfigure}{.1\textwidth}
      \centering
      \includegraphics[width=.95\linewidth]{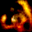}
    \end{subfigure}
    \begin{subfigure}{.1\textwidth}
      \centering
      \includegraphics[width=.95\linewidth]{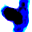}
    \end{subfigure}
    \begin{subfigure}{.1\textwidth}
      \centering
      \includegraphics[width=.95\linewidth]{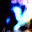}
    \end{subfigure}
    \begin{subfigure}{.1\textwidth}
      \centering
      \includegraphics[width=.95\linewidth]{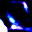}
    \end{subfigure}
    \begin{subfigure}{.1\textwidth}
      \centering
      \includegraphics[width=.95\linewidth]{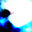}
    \end{subfigure}
    \begin{subfigure}{.1\textwidth}
      \centering
      \includegraphics[width=.95\linewidth]{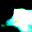}
    \end{subfigure}
    \begin{subfigure}{.1\textwidth}
      \centering
      \includegraphics[width=.95\linewidth]{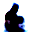}
    \end{subfigure}
    \begin{subfigure}{.1\textwidth}
      \centering
      \includegraphics[width=.95\linewidth]{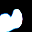}
    \end{subfigure}
    \begin{subfigure}{.1\textwidth}
      \centering
      \includegraphics[width=.95\linewidth]{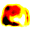}
    \end{subfigure}

    \begin{subfigure}{.1\textwidth}
      \centering
      \includegraphics[width=.95\linewidth]{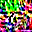}
    \end{subfigure}
    \begin{subfigure}{.1\textwidth}
      \centering
      \includegraphics[width=.95\linewidth]{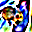}
    \end{subfigure}
    \begin{subfigure}{.1\textwidth}
      \centering
      \includegraphics[width=.95\linewidth]{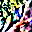}
    \end{subfigure}
    \begin{subfigure}{.1\textwidth}
      \centering
      \includegraphics[width=.95\linewidth]{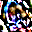}
    \end{subfigure}
    \begin{subfigure}{.1\textwidth}
      \centering
      \includegraphics[width=.95\linewidth]{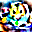}
    \end{subfigure}
    \begin{subfigure}{.1\textwidth}
      \centering
      \includegraphics[width=.95\linewidth]{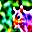}
    \end{subfigure}
    \begin{subfigure}{.1\textwidth}
      \centering
      \includegraphics[width=.95\linewidth]{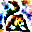}
    \end{subfigure}
    \begin{subfigure}{.1\textwidth}
      \centering
      \includegraphics[width=.95\linewidth]{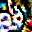}
    \end{subfigure}
    \begin{subfigure}{.1\textwidth}
      \centering
      \includegraphics[width=.95\linewidth]{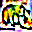}
    \end{subfigure}
\end{center}
\caption{Visualization of the small mammals dataset, with amplification of $1.5\%$ of its principal components by a factor of $10$. Top: original data; middle: data amplified along the highest principal components; bottom: data amplified along the lowest principal components}
\label{fig:small_mammals_amplification_visualization}
\end{figure}

\section{PC-Bias: Further Implications}
\label{sec:implication}

\paragraph{Early Stopping and the Generalization Gap.} 
Considering natural images, it is often assumed that the least significant principal components of the data represent noise \citep{torralba2003statistics}. In such cases, our analysis predicts that as noise dominates the components learned later in learning, early stopping is likely to be beneficial. To test this hypothesis directly, we manipulated CIFAR-10 to amplify the signal in either the $1.5\%$ most significant (higher) or $1.5\%$ least significant (lower) principal components (see examples in \fig\ref{fig:small_mammals_amplification_visualization}). Accuracy over the original test set, after training $10$ st-VGG and linear st-VGG networks on these manipulated images, can be seen in \fig\ref{fig:first_and_last_principal_directions_noise}. Both in linear and non-linear networks, early stopping is more beneficial when lower principal components are amplified, and significantly less so when higher components are amplified, as predicted by the \emph{PC-bias}.



\paragraph{Slower Convergence with Random Labels.} Deep neural models can learn any random label assignment to a given training set \citep{DBLP:conf/iclr/ZhangBHRV17}. However, when trained on randomly labeled data, convergence appears to be much slower \citep{DBLP:conf/iclr/KruegerBJAKMBFC17}. Assume, as before, that in natural images the lower principal components are dominated by noise. We argue that the \emph{PC-bias} now predicts this empirical result, since learning randomly labeled examples requires a signal present in lower principal components. To test this hypothesis directly, we trained $10$ two-layered linear networks (following the same methodology as in Section~\ref{sec:methods}) using datasets of natural images. Indeed, these networks converge slower with random labels (see Fig.~\ref{fig:random-labels_linear_case}). In Fig.~\ref{fig:random-linear_whiten} we repeat this experiment after having whitened the images, to neutralize the \emph{PC-bias}. Now convergence rate is identical, whether the labels are original or shuffled. Clearly, in deep linear networks, the \emph{PC-bias} gives a full account for this phenomenon.

\begin{figure}[h!]
\begin{minipage}{.49\textwidth}
    \begin{subfigure}{.485\textwidth}
      \centering
      \includegraphics[width=1\linewidth]{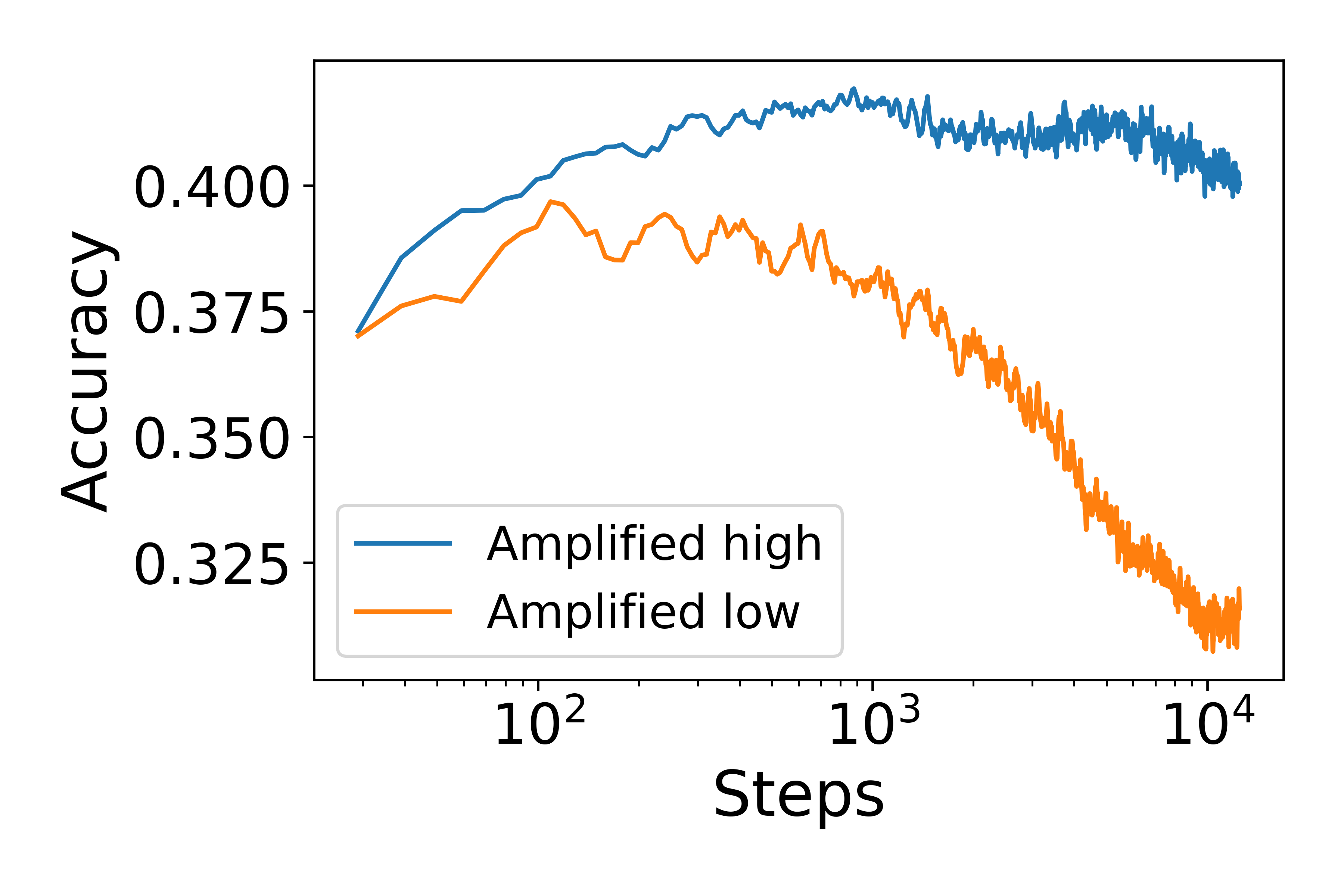}
      \caption{linear network}
      \label{subfig:first_and_last_principal_directions_noise_linear}
    \end{subfigure}
    \begin{subfigure}{.485\textwidth}
      \centering
      \includegraphics[width=1\linewidth]{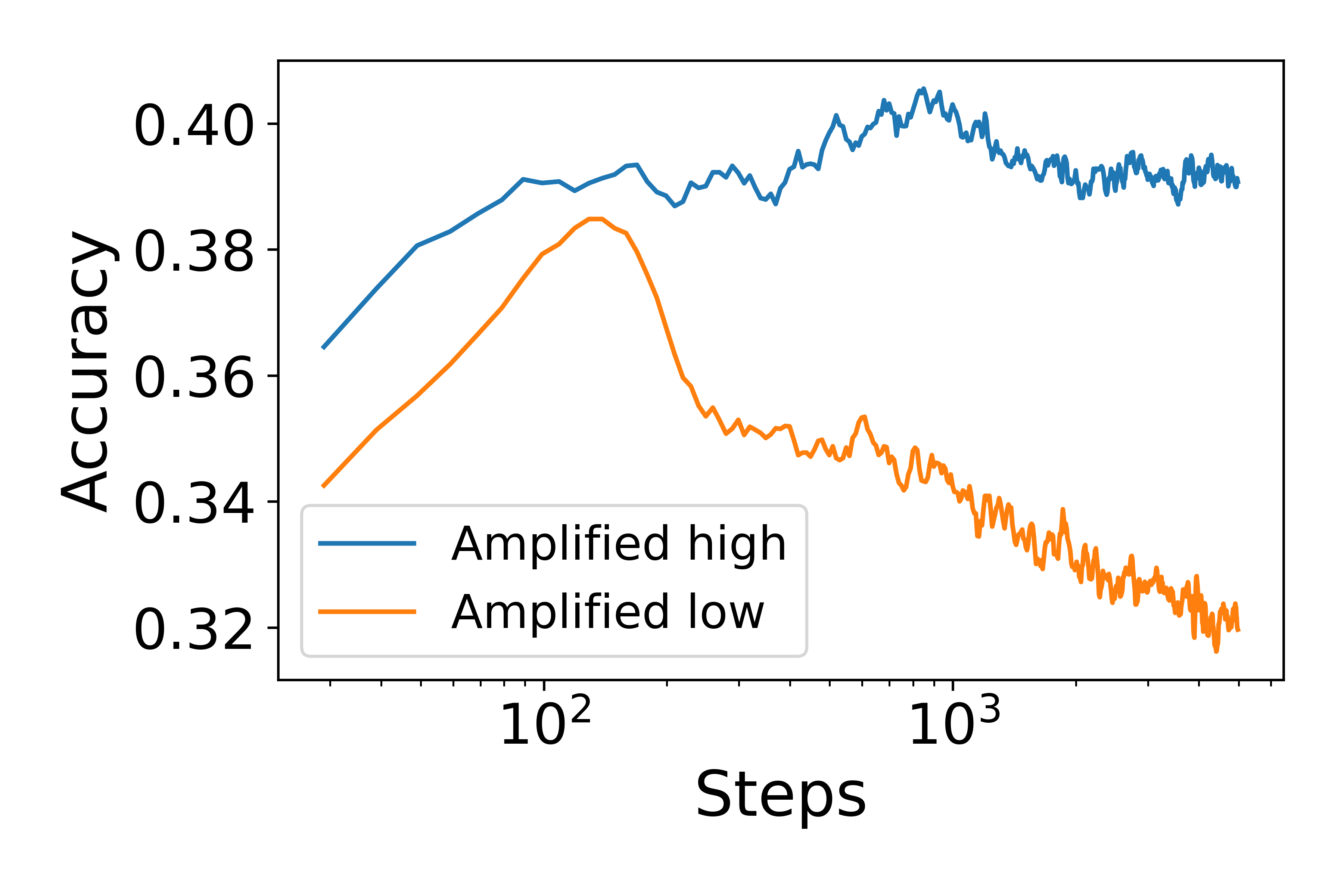}
      \caption{non-linear network}
      \label{subfig:first_and_last_principal_directions_noise_non_linear}
    \end{subfigure}
\caption{Comparing the accuracy trajectory when  amplifying the highest (blue line) and lowest (orange line) principal components.}
\label{fig:first_and_last_principal_directions_noise}
\end{minipage}
\hspace{0.3cm}
\begin{minipage}{.49\textwidth}
    \begin{subfigure}{.485\textwidth}
      \centering
      \includegraphics[width=1\linewidth]{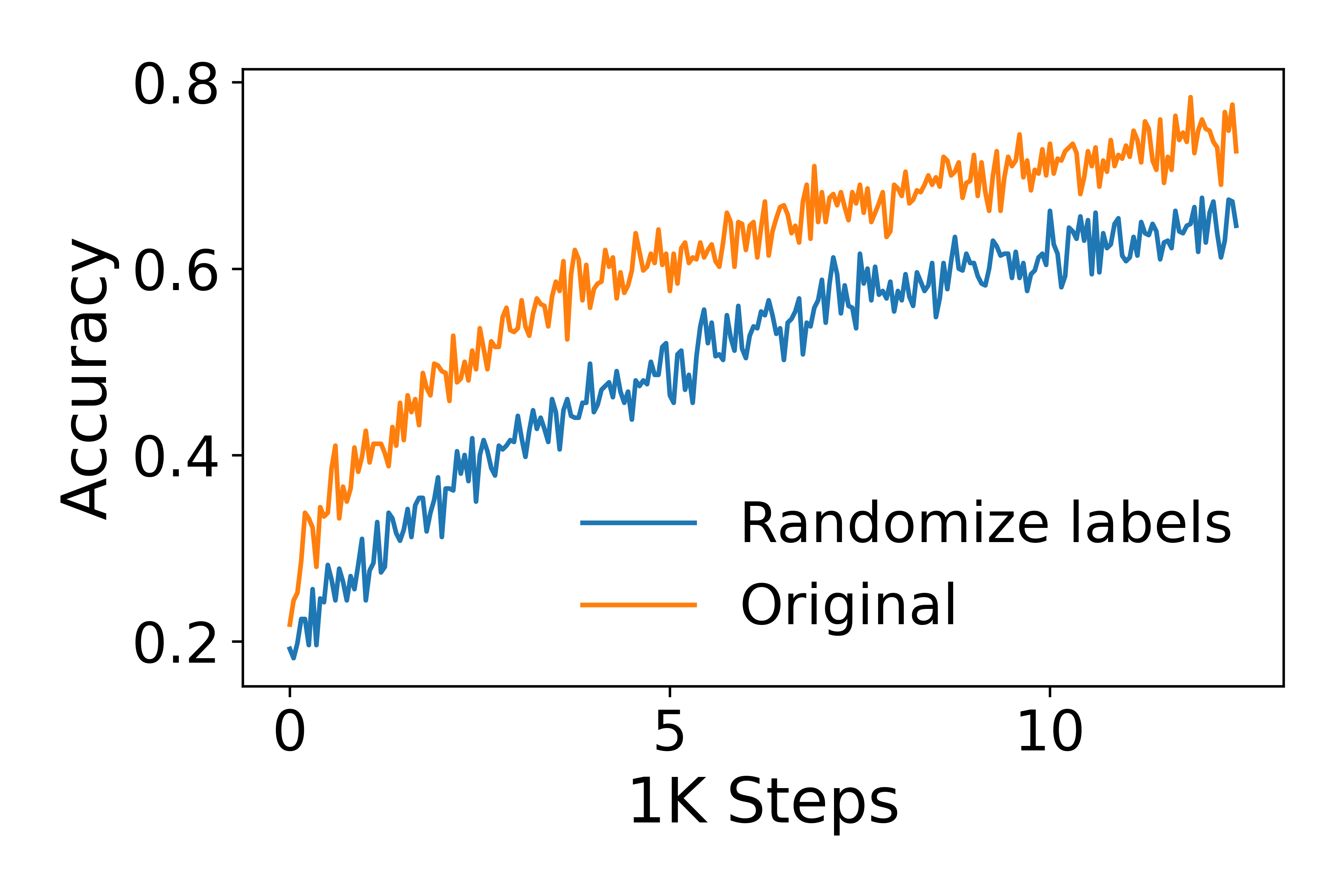}
      \caption{original data}
      \label{fig:random-labels_linear_case}
    \end{subfigure}
    \begin{subfigure}{.485\textwidth}
      \centering
      \includegraphics[width=1\linewidth]{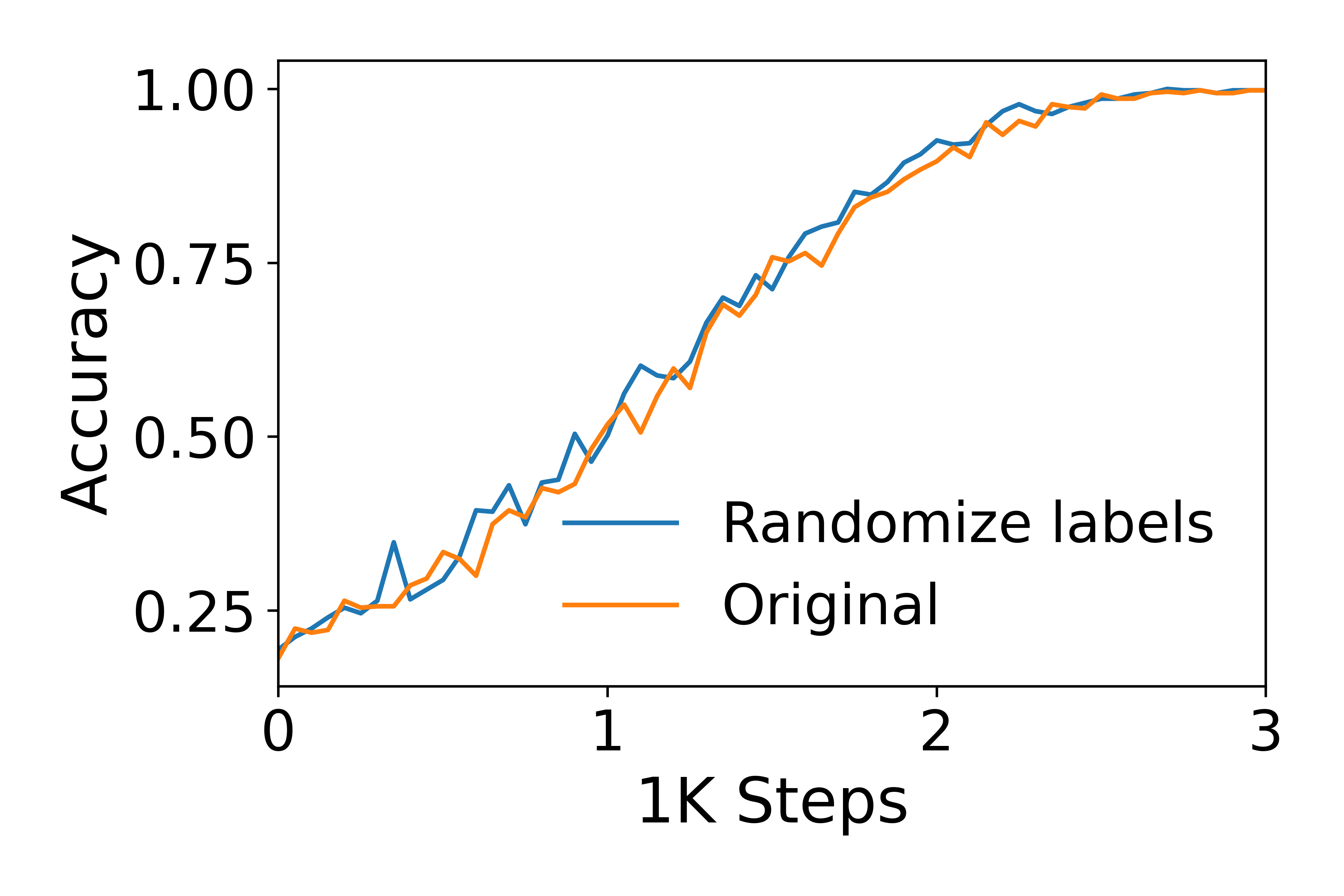}
      \caption{whitened data}
      \label{fig:random-linear_whiten}
    \end{subfigure}
\caption{Learning curves when using real and shuffled labels: $10$ two-layered linear networks, (a) before and (b) after  whitening.}
\label{fig:learning-curves}
\end{minipage}
\end{figure}


To further check the relevance of this account to non-linear networks, we artificially generate datasets where only the first $P$ principal components are discriminative, while the remaining components become noise by design. We constructed two such datasets: in one the labels are correlated with the original labels, while in the other they are not. Specifically, PCA is used to reduce the dimensionality of a two-class dataset to $P$, and the optimal linear separator in the reduced representation is computed. Next, all the labels of points that are incorrectly classified by the optimal linear separator are switched, so that the train and test sets are linearly separable by this separator. Note that the modified labels are still highly correlated with the original labels (for $P=500$: $p=0.82$, $r<10^{-10}$). The second dataset is generated by repeating the process while starting from randomly shuffled labels. This dataset is likewise fully separable when projected to the first $P$ components, but its labels are uncorrelated with the original labels (for $P=500$: $p=0.06$, $r<10^{-10}$).

The mean training accuracy of $10$ non-linear networks with $P$=$10$,$50$,$500$ is plotted in Fig.~\ref{fig:learning_curve_change_labels} (first dataset) and Fig.~\ref{fig:learning_curve_change_labels_randomize} (second dataset). In both cases, the lower $P$ is (namely, only the first few principal components are discriminative), the faster the data is learned by the non-linear network. Whether the labels are real or shuffled makes little qualitative difference, as predicted by the \emph{PC-bias}.

\begin{figure}[htb]
\begin{center}
    \begin{subfigure}{.4\textwidth}
      \centering
      \includegraphics[width=1\linewidth]{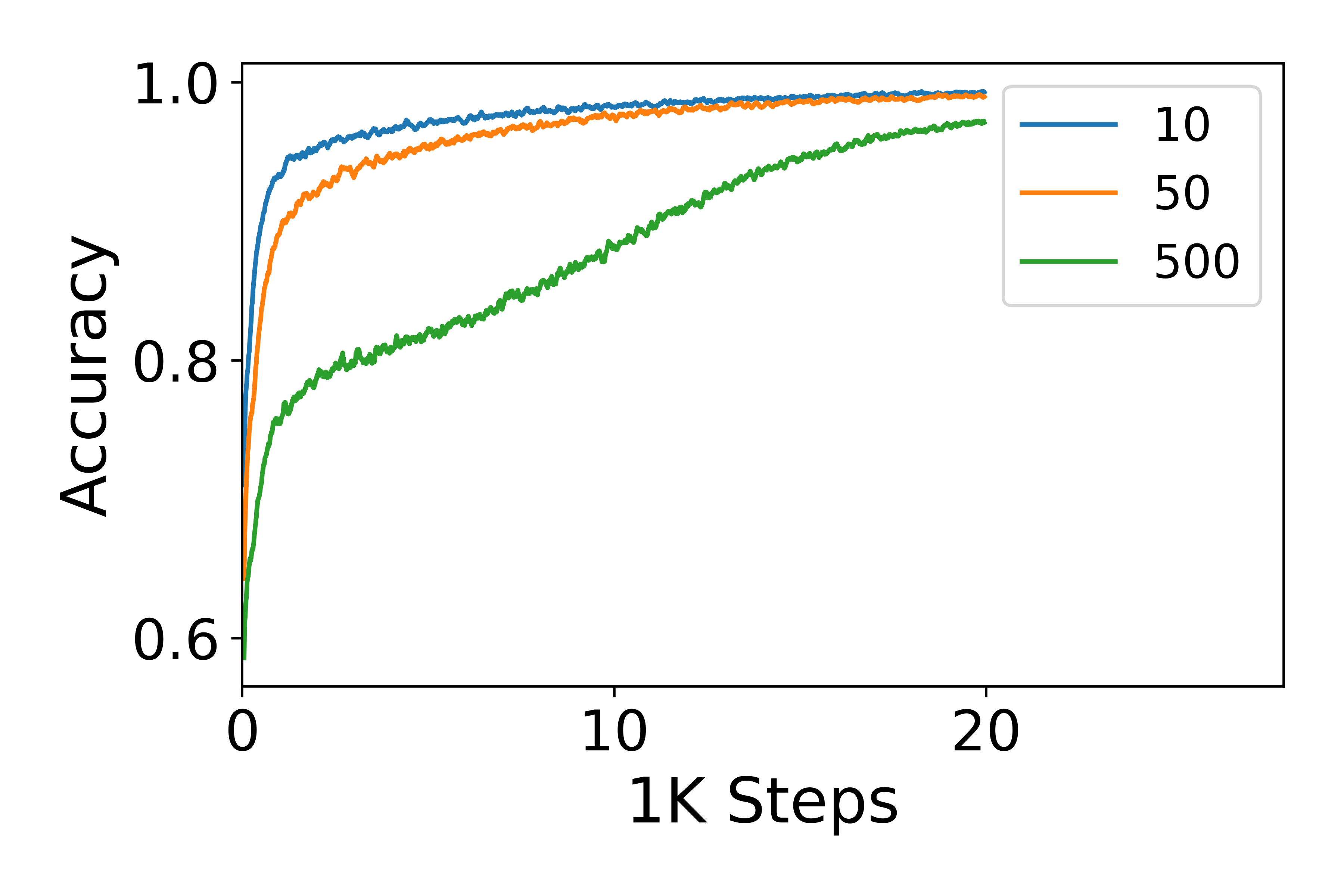}
      \caption{Original labels}
      \label{fig:learning_curve_change_labels}
    \end{subfigure}
    \hspace{1.5cm}
    \begin{subfigure}{.4\textwidth}
      \centering
      \includegraphics[width=1\linewidth]{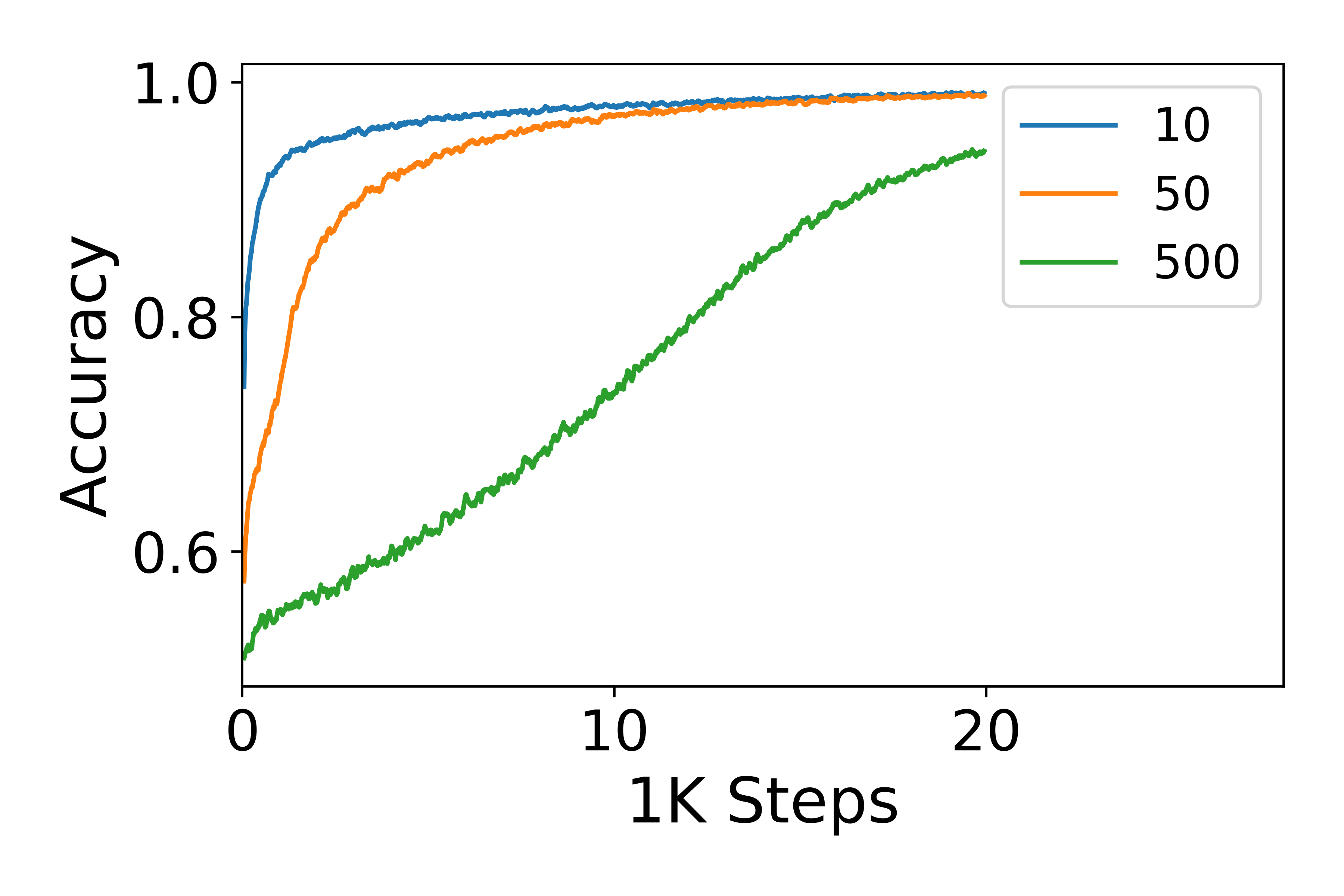}
      \caption{Shuffled labels}
      \label{fig:learning_curve_change_labels_randomize}
    \end{subfigure}
\end{center}
\caption{Learning curves of st-VGG networks trained on 3 datasets, which are linearly separable after projection to the highest $P$ principal components (see legend).}
\label{fig:learning_curves_with_label_change_all}
\end{figure}

\section{Summary and Discussion}

When trained with gradient descent, the convergence rate of the over-parameterized deep linear network model is provably governed by the eigendecomposition of the data. Specifically, we show that parameters corresponding to the most significant principal components converge faster than the least significant components. Empirical evidence is provided for the relevance of this result to more realistic non-linear networks. We term this effect \emph{PC-bias}. This result provides a complementary account for some prevalent empirical observations, including the benefit of early stopping and the slower convergence rate with shuffled labels. 

We use the \emph{PC-bias} to explain the \emph{Learning Order Constancy (LOC)}. Different empirical schemes are used to show that examples learned at earlier stages are more distinguishable by the data's higher principal components, which may indicate that networks' training relies more heavily on higher principal components early on. A causal link between the \emph{PC-bias} and the \emph{LOC-effect} is established, as the \emph{LOC-effect} diminishes when the \emph{PC-bias} is eliminated by  whitening the images. Finally, we analyze these findings given a related phenomenon termed \emph{spectral bias}. While the \emph{PC-bias} may be more prominent early on, the \emph{spectral bias} may be more important in later stages of learning.

\section*{Acknowledgments}
We thank our two reviewers for the elaborated and insightful suggestions, which contributed to this work. This work was supported in part by a grant from the Israeli Ministry of Science and Technology, and by the Gatsby Charitable Foundations.

\bibliography{bib}

\appendix
\section*{Appendix}

\section{Random Matrices}
\label{app:random}

\subsection{Multiplication of Random Matrices}
\label{sec:random-mat}

In this section, we present and prove some statistical properties of general random matrices and their multiplications. Let $\{Q_\rn\in\R^{m_{\rn}\times m_{\rn-1}}\}_{n=1}^\tL$ denote a set of random matrix whose elements are sampled iid from a distribution with mean $0$ and variance $\sigma_\rn^2$, whose kurtosis is bounded by $c$, and 
whose support is compact where the norm of each element is bounded by $c'\sigma_\rn^2$ for fixed constants $c,c'$. Let
\begin{align}
\label{eq:ABdef}
\hQ^\rl&=\prod_{\rn=\rl}^{1} Q_\rn = Q_{\rl} \cdot\ldots\cdot Q_1,  & \Bl^\rl={\hQ^\rl}^\top \hQ^{\rl}\in\R^{m_0\times m_0}, \\
\label{eq:AAdef}
\sQ^\rl&=\prod_{\rn=\tL}^{\rl+1} Q_\rn = Q_\tL \cdot\ldots\cdot Q_{\rl+1},  &\Al^\rl=\sQ^{\rl} {\sQ^{\rl}}^\top\in\R^{m_\tL\times m_\tL}.
\end{align}

\begin{theorem}
\label{thm:qtq}
For random matrices $\Al^\ri$ and $\Bl^\ri$ as defined in (\ref{eq:ABdef})-(\ref{eq:AAdef})
\begin{align}
\label{eq:qtq}
\E(\Bl^\rl)&=\beta_\rl I, \quad\quad\beta_\rl = \prod\limits_{\rn=1}^{\rl}m_\rn \sigma_\rn^{2}, \\
\label{eq:qqt}
\E(\Al^\rl)&=\alpha_\rl I, \quad\quad\alpha_\rl = \prod\limits_{\rn=\rl+1}^{\tL}m_{\rn-1} \sigma_\rn^{2}.
\end{align}
\end{theorem}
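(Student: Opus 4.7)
Both claims are of the form ``expectation of a symmetric product of independent random matrices is a scalar multiple of the identity,'' so I would handle them by straightforward induction on $l$, exploiting the independence of the $Q_n$'s and the fact that $\E[Q_n^\top Q_n]$ and $\E[Q_n Q_n^\top]$ are already scaled identities. The two statements are essentially mirror images of each other (one peels layers from the inside, the other from the outside), so I will describe $B^l$ in detail and indicate the minor modifications needed for $A^l$.

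\textbf{Base cases.} For (\ref{eq:qtq}), take $l=1$: $B^1 = Q_1^\top Q_1$, whose $(i,j)$-entry equals $\sum_{k=1}^{m_1}(Q_1)_{ki}(Q_1)_{kj}$. Using independence across entries and $\E[(Q_1)_{ki}(Q_1)_{kj}]=\sigma_1^2\delta_{ij}$, one gets $\E[B^1]=m_1\sigma_1^2 I=\beta_1 I$. For (\ref{eq:qqt}), the natural induction runs downward from $l=L$: there $\sQ^L$ is the empty product, so $A^L=I=\alpha_L I$ (empty product convention).

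\textbf{Inductive step for $B^l$.} Write $B^l=V^\top (Q_l^\top Q_l) V$ with $V:=\hQ^{l-1}=Q_{l-1}\cdots Q_1$, which is independent of $Q_l$. Conditioning on $V$ and using the base-case computation $\E[Q_l^\top Q_l]=m_l\sigma_l^2 I$ gives
\begin{equation*}
\E[B^l\mid V] \;=\; V^\top \E[Q_l^\top Q_l]\, V \;=\; m_l\sigma_l^2\, V^\top V \;=\; m_l\sigma_l^2\, B^{l-1}.
\end{equation*}
Taking total expectation and invoking the induction hypothesis $\E[B^{l-1}]=\beta_{l-1} I$ yields $\E[B^l]=m_l\sigma_l^2\beta_{l-1}I=\beta_l I$, which is exactly (\ref{eq:qtq}).

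\textbf{Inductive step for $A^l$.} This is the mirror computation. Set $U:=\sQ^{l+1}=Q_L\cdots Q_{l+2}$, so $A^l=U(Q_{l+1}Q_{l+1}^\top)U^\top$ and $U\perp Q_{l+1}$. A base-case style calculation shows $\E[Q_{l+1}Q_{l+1}^\top]=m_l\sigma_{l+1}^2 I$ (the inner dimension is now $m_l$). Conditioning on $U$ gives $\E[A^l\mid U]=m_l\sigma_{l+1}^2\, UU^\top=m_l\sigma_{l+1}^2\, A^{l+1}$, and taking expectation together with the downward induction hypothesis $\E[A^{l+1}]=\alpha_{l+1}I$ recovers $\E[A^l]=m_l\sigma_{l+1}^2\,\alpha_{l+1}I=\alpha_l I$, establishing (\ref{eq:qqt}).

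\textbf{Where the difficulty lies.} There is essentially no obstacle: the theorem is a purely first-moment statement, and the layered independence across the $Q_n$'s makes the conditioning argument trivial. The bounded-kurtosis and compact-support hypotheses in the setup play no role here; they will only be needed for the second-moment/concentration claims (the $\var$ bounds and the convergence-in-probability statements in Theorem~\ref{thm:AlsBls}). The only care points are bookkeeping of the product indices $\beta_l=\prod_{n=1}^l m_n\sigma_n^2$ versus $\alpha_l=\prod_{n=l+1}^L m_{n-1}\sigma_n^2$ and the empty-product conventions at the two base cases.
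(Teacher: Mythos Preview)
Your proposal is correct and follows essentially the same approach as the paper: induction on $l$ using the mutual independence of the $Q_n$'s. The only cosmetic difference is that you peel off the outermost factor $Q_l$ via a clean conditioning/tower-property argument, whereas the paper peels off the innermost factor $Q_1$ and works element-wise; both arrive at the same recursion $\E[B^l]=m_l\sigma_l^2\,\E[B^{l-1}]$ (respectively its mirror for $A^l$).
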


\begin{proof}
We only prove (\ref{eq:qtq}), as the proof of  (\ref{eq:qqt}) is similar. To simplify the presentation, we use the following auxiliary notations: $V=Q_1$, $U=\prod_{\rn=\rl}^{2}Q_\rn\implies\hQ^\rl = U \hV$. 

The proof proceeds by induction on $\rl$.
\begin{itemize}

\item $\rl=1$: 
\begin{equation*}
\begin{split}
\E[\Bl^1_{ij}] &= \E[\sum_{k=1}^{m_1} \hV_{ki}\hV_{kj}] = \sum_{k=1}^{m_1} \E[\hV_{ki}]\E[\hV_{kj}]=0  \qquad {i\neq j},\\
\E[\Bl^1_{ii}] &= \E[\sum_{k=1}^{m_1} \hV_{ki}\hV_{ki}]=\sum_{k=1}^{m_1} \E[\hV_{ki}^2]=m_1 \sigma_1^{2}.\\
\end{split}
\end{equation*}
Thus $\E(\Bl^1)=\beta_1 I$.

\item Assume that (\ref{eq:qtq}) holds for $\rl-1$. 
\begin{equation*}
\Bl^\rl_{ij} = \sum_{k} \hQ^\rl_{ki}\hQ^\rl_{kj} = \sum_{k} \sum_{\nu}U_{k\nu}\hV_{\nu i}  \sum_{\rho}U_{k\rho}\hV_{\rho j}, \\
\end{equation*}
and therefore 
\begin{equation}
\label{eq:18}
\E[\Bl^\rl_{ij}]=\sum_{k} \sum_{\nu}\sum_{\rho}\E[U_{k\nu}\hV_{\nu i}U_{k\rho}\hV_{\rho j}]=\sum_{\nu}\sum_{\rho}\E[\hV_{\nu i}\hV_{\rho j}] \sum_{k} \E[U_{k\nu}U_{k\rho}].
\end{equation}
The last transition follows from the independence of $U$ and $\hV$. From (\ref{eq:18}), where we denote $\Bl'=U^\top U$ 
\begin{equation*}
\begin{split}
\E[\Bl^\rl_{ij}]&= 
\sum_{\nu}\E[\hV_{\nu i}]\sum_{\rho}\E[\hV_{\rho j}] \E[(U^\top U)_{\nu\rho}]=0\qquad {i\neq j},\\
\E[\Bl^\rl_{ii}] &= \sum_{\nu}\sum_{\rho}\E[\hV_{\nu i}\hV_{\rho i}] \E[(U^\top U)_{\nu\rho}]= \sum_{\nu=1}^{m_1}\E[\hV_{\nu i}^2] \E[\Bl'_{\nu\nu}] = m_1 \sigma_1^{2} \prod\limits_{\rn=2}^{\rl}m_{\rn} \sigma_\rn^{2} = \beta_\rl.
\end{split}
\end{equation*}
where the last transition above follows from the induction assumption applied to $\Bl'=U^\top U$. Thus $\E(\Bl^\rl)=\beta_l I$ and (\ref{eq:qtq}) follows.
\end{itemize}

\end{proof}

Let $\cm$ denote the width of the smallest hidden layer, $\cm=\min\left({m_1,\ldots,m_{\ttL-1}}\right)$, and assume that $\max\left({m_1,\ldots,m_{\ttL-1}}\right)\leq\cm+\Dm$ for a fixed constant $\Dm$. $m_0,m_\ttL$ are likewise fixed constants ($m_0$ corresponds to the input dimension $q$ and $m_\ttL$ corresponds to the number of classes $\mL$), while $\cm$ is not bounded. Assume that the distribution of the random matrices $\{Q_\rn\}_{\rn=1}^\ttL$ is normalized as specified in Def.~\ref{def:1}, where specifically
\begin{equation}
\label{eq:normalization}
 \sigma^2_\rn = \frac{2}{m_{\rn-1}+m_\rn} ~\forall \rn\in[2\ldots\ttL-1], \qquad\sigma^2_1=\frac{1}{m_1},\qquad \sigma^2_\ttL=\frac{1}{m_{\ttL-1}}.
\end{equation}
%
It follows that asymptotically, when $\cm\to\infty$, we can write
\begin{alignat*}{4}
m_\rn\sigma_\rn^2&=1+\omverm ~~\rn\in&[1&\ldots \ttL-1], & \qquad m_{\ttL}\sigma_{\ttL}^2 = \frac{m_{\ttL}}{\cm}+ \omverm, 
\\
m_{\rn-1} \sigma_\rn^{2}&=1+\omverm ~~ \rn\in&[2&\ldots \ttL], &\qquad m_{0}\sigma_{1}^2 = \frac{m_0}{\cm}+ \omverm. 
\end{alignat*}

\begin{corollary}
\label{cor:alpha_beta}
From \thm\ref{thm:qtq}, using the initialization scheme specified in Def.~\ref{def:1} and (\ref{eq:normalization})
\begin{alignat*}{4}
\E(\Bl^\rl)&= I+ \omverm  ~~\forall\rl\in&[1&\ldots \ttL-1], & \qquad \E(\Bl^{\ttL}) = \frac{m_\ttL}{\cm}I+ \omverm = \omverm,\\
\E(\Al^\rl)&= I+ \omverm  ~~ \forall\rl\in&[1&\ldots \ttL-1], &\qquad \E(\Al^{0})  = \frac{m_0}{\cm}I+ \omverm = \omverm.
\end{alignat*}
\end{corollary}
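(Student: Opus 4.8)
The plan is to feed the normalized variances of Def.~\ref{def:1} (equivalently (\ref{eq:normalization})) into the closed forms $\E(\Bl^\rl)=\beta_\rl I$ and $\E(\Al^\rl)=\alpha_\rl I$ supplied by \thm\ref{thm:qtq}, and then bound the error accumulated in the products $\beta_\rl=\prod_{\rn=1}^{\rl}m_\rn\sigma_\rn^2$ and $\alpha_\rl=\prod_{\rn=\rl+1}^{\ttL}m_{\rn-1}\sigma_\rn^2$, each of which has only a fixed number of factors. So the whole argument is: identify each factor as either exactly $1$, of the form $1+\omverm$, or of the form $\tfrac{\text{const}}{\cm}+\omverm$, and multiply.

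First I would dispatch the individual factors. For $2\le\rn\le\ttL-1$ the normalization gives $m_\rn\sigma_\rn^2=\tfrac{2m_\rn}{m_{\rn-1}+m_\rn}=1+\tfrac{m_\rn-m_{\rn-1}}{m_{\rn-1}+m_\rn}$, and since $\cm\le m_{\rn-1},m_\rn\le\cm+\Dm$ the correction is at most $\Dm/(2\cm)$ in absolute value, hence $m_\rn\sigma_\rn^2=1+\omverm$; symmetrically $m_{\rn-1}\sigma_\rn^2=\tfrac{2m_{\rn-1}}{m_{\rn-1}+m_\rn}=1+\omverm$. The boundary factors are special by design: $m_1\sigma_1^2=1$ and $m_{\ttL-1}\sigma_\ttL^2=1$ exactly, while $m_0\sigma_1^2=m_0/m_1$ and $m_\ttL\sigma_\ttL^2=m_\ttL/m_{\ttL-1}$; since $m_0$ and $m_\ttL$ are fixed (they are the input dimension $q$ and the number of classes $\mL$) and $m_1,m_{\ttL-1}\in[\cm,\cm+\Dm]$, these equal $\tfrac{m_0}{\cm}+\omverm$ and $\tfrac{m_\ttL}{\cm}+\omverm$ respectively.

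Then I would assemble the products. For $\rl\in[1\ldots\ttL-1]$, $\beta_\rl$ is a product of at most $\ttL-1$ factors, one equal to $1$ and the rest equal to $1+\omverm$ uniformly, so expanding gives $\beta_\rl=1+\omverm$ (the cross terms are $O(1/\cm^2)$, hence also $\omverm$), i.e. $\E(\Bl^\rl)=I+\omverm$. For $\beta_\ttL$ I peel off the last factor: $\beta_\ttL=\beta_{\ttL-1}\cdot m_\ttL\sigma_\ttL^2=(1+\omverm)\big(\tfrac{m_\ttL}{\cm}+\omverm\big)=\tfrac{m_\ttL}{\cm}+\omverm$, and since $m_\ttL$ is a fixed constant $\tfrac{m_\ttL}{\cm}$ is itself $\omverm$, giving $\E(\Bl^\ttL)=\tfrac{m_\ttL}{\cm}I+\omverm=\omverm$. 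The $\Al$ case is symmetric: for $\rl\in[1\ldots\ttL-1]$ the factor $m_{\ttL-1}\sigma_\ttL^2=1$ and the remaining factors are $1+\omverm$, so $\alpha_\rl=1+\omverm$; for $\rl=0$ the extra factor $m_0\sigma_1^2=\tfrac{m_0}{\cm}+\omverm$ appears and, $m_0$ being fixed, $\E(\Al^0)=\tfrac{m_0}{\cm}I+\omverm=\omverm$.

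The only delicate point — and it is the step I would be most careful to state explicitly — is the passage from ``each factor is $1+\omverm$'' to ``the product is $1+\omverm$''. This relies on the depth $\ttL$ being a fixed constant independent of $\cm$, and on the per-factor bound $\Dm/(2\cm)$ being uniform across $\rn$, so that the first-order term of the expanded product is a sum of boundedly many $\omverm$ terms while all higher-order terms are $O(1/\cm^2)$. If the depth were allowed to scale with $\cm$ the accumulated deviation could be $\Theta(\ttL/\cm)$ and the conclusion would fail; here everything collapses to an elementary computation.
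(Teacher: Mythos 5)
Your proposal matches the paper's argument: the paper derives exactly these per-factor asymptotics ($m_1\sigma_1^2=1$, $m_{\ttL-1}\sigma_\ttL^2=1$, $m_\rn\sigma_\rn^2=1+\omverm$ and $m_{\rn-1}\sigma_\rn^2=1+\omverm$ for interior indices, and $m_0\sigma_1^2=\tfrac{m_0}{\cm}+\omverm$, $m_\ttL\sigma_\ttL^2=\tfrac{m_\ttL}{\cm}+\omverm$) immediately after (\ref{eq:normalization}) and then substitutes them into the $\beta_\rl,\alpha_\rl$ products of \thm\ref{thm:qtq}. You are simply more explicit about the (correct) point that multiplying a fixed number ($\ttL$, independent of $\cm$) of $1+\omverm$ factors still gives $1+\omverm$.
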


Recall that in our asymptotic matrix notations, $\omverm$ is short-hand for a matrix which, for large enough $\cm$, is upper bounded element-wise by $C\frac{1}{\cm}$ where $C$ denotes a fixed full rank matrix, and similarly $O(\mu^2)$ for small enough $\mu$. When $\Bl^\rl$ is concerned $C\in\R^{m_0\times m_0}$, and when $\Al^\rl$ is concerned  $C\in\R^{m_\ttL\times m_\ttL}$. Henceforth, for clarity and simplicity of notations, when we discuss asymptotic properties of functions of a certain matrix $M$, including its expected value $\E(M)$ or variance $\var(M)$, it is to be understood that the properties are considered element-wise.

\begin{theorem}
\label{thm:var}
Given random matrices $\Al^\ri$ and $\Bl^\ri$ as defined in (\ref{eq:ABdef})-(\ref{eq:AAdef}), and using the initialization scheme specified in Def.~\ref{def:1} and (\ref{eq:normalization}), we have
\begin{equation*}
\var(\Bl^\rl)= \omverm,\qquad\var(\Al^\rl)= \omverm\quad\forall l.
\end{equation*}
\end{theorem}

\begin{proof}
Once again, we only provide a detailed proof for $\var(\Bl^\rl)$, as the proof for $\var(\Al^\rl)$ is similar.  From \cor\ref{cor:alpha_beta}, and since element-wise $Var(Z)=\E(Z^2)-[\E(Z)]^2$, it is sufficient to show that the following (somewhat stronger) assertion is valid:

\begin{equation}
\label{eq:varBl}
\E[(\Bl^\rl_{ij})^2]=\begin{cases}
\omverm & i\neq j  \\
1+\omverm  & i=j,\rl<\ttL \\
\omverm  & i=j,\rl=\ttL 
\end{cases}, \qquad
\E[\Bl^\rl_{ii}\Bl^\rl_{jj}] = 1+\omverm ~~i\neq j 
\end{equation}
The proof proceeds by induction on $\rl$.

\paragraph{Base case $\rl=1$.}
Below $Q$ stands for $Q_1$, to simplify index notations.
{\small
\begin{equation*}
\E[(\Bl^1_{ij})^2] = \E\bigg[\sum_{\nu=1}^{m_1}Q_{\nu i}Q_{\nu j}\sum_{\rho=1}^{m_1}Q_{\rho i}Q_{\rho j}\bigg ] = \begin{cases}
\sum\limits_{\nu=1}^{m_1} \E[Q_{\nu i}^2]\E[Q_{\nu j}^2] = \frac{1}{m_1} & i\neq j  \\
\sum\limits_{\nu=1}^{m_1} \sum\limits_{{\substack{\rho=1 \\ \rho\neq\nu}}}^{m_1}   \E[Q_{\nu i}^2]\E[Q_{\rho i}^2] +  \sum\limits_{\nu=1}^{m_1} \E[Q_{\nu i}^4] = 1+\omverm  & i=j 
\end{cases}
\end{equation*}
}
Above we use the assumed fixed bound on the kurtosis of the distribution of $Q$.
\begin{equation*}
\E[\Bl^\rl_{ii}\Bl^\rl_{jj}] = \E\bigg[\sum_{\nu=1}^{m_1}Q_{\nu i}Q_{\nu i}\sum_{\rho=1}^{m_1}Q_{\rho j}Q_{\rho j}\bigg ] = \sum\limits_{\nu=1}^{m_1} \sum\limits_{\rho=1}^{m_1}   \frac{1}{m_1}\frac{1}{m_1} = 1.
\end{equation*}

\paragraph{Induction step.}
Assume that (\ref{eq:varBl}) holds for $\rl-1$, and similarly to the above, let $Q$ stand for $Q_\rl$ to simplify index notations. Let $\frac{1}{\tilde m_\rl}$ denote the variance of $Q_\rl$ as defined in (\ref{eq:normalization}).
{\small
\begin{equation}
\begin{split}
\E[(\Bl^{\rl}_{ij})^2] &= \E\bigg[\sum_{\nu=1}^{m_\rl} \sum_{\rho=1}^{m_\rl} Q_{\nu i}\Bl^{\rl-1}_{\nu\rho}Q_{\rho j} \sum_{\alpha=1}^{m_\rl} \sum_{\beta=1}^{m_\rl} Q_{\alpha i}\Bl^{\rl-1}_{\alpha\beta}Q_{\beta j} \bigg] \quad \texp{by~independence}\\
&=\sum_{\nu=1}^{m_\rl} \sum_{\rho=1}^{m_\rl} \sum_{\alpha=1}^{m_\rl} \sum_{\beta=1}^{m_\rl} ~\E[Q_{\nu i} Q_{\rho j} Q_{\alpha i}Q_{\beta j}] \E[\Bl^{\rl-1}_{\nu\rho}\Bl^{\rl-1}_{\alpha\beta} ].
\end{split}
\label{eq:Bsquare}
\end{equation}
}
When ${i\neq j}$, using the independence of the elements of $Q_l$ and the induction assumption
\begin{equation*}
\begin{split}
\E[(\Bl^{\rl}_{ij})^2] &= \sum_{\substack{\nu=1 \\ \alpha=\nu}}^{m_\rl}\sum_{\substack{\rho=1 \\ \beta=\rho}}^{m_\rl}
\E[Q_{\nu i}^2]\E[Q_{\rho j}^2]\E[(\Bl^{\rl-1}_{\nu\rho})^2 ] \\
&= \sum_{\nu=1}^{m_\rl}\sum_{\substack{\rho=1 \\ \rho\neq\nu}}^{m_\rl}\frac{1}{\tilde m_\rl}\frac{1}{\tilde m_\rl}\E[(\Bl^{\rl-1}_{\nu\rho})^2 ] + \sum_{\substack{\nu=1 \\ \rho=\nu}}^{m_\rl}\frac{1}{\tilde m_\rl}\frac{1}{\tilde m_\rl}\E[(\Bl^{\rl-1}_{\nu\nu})^2 ] =  \omverm \quad\forall l.
\end{split}
\end{equation*}
When $i=j$, we collect below all the terms in (\ref{eq:Bsquare}) which are not 0:
\begin{equation*}
\begin{split}
E[(\Bl^{\rl}_{ii})^2] &=\sum\limits_{\substack{\nu=1 \\ \rho=\nu}}^{m_\rl} \sum\limits_{{\substack{\alpha=1 \\ \beta=\alpha \\ \alpha\neq\nu}}}^{m_\rl} \E[Q_{\nu i}^2]\E[Q_{\alpha i}^2]\E[\Bl^{\rl-1}_{\nu\nu}\Bl^{\rl-1}_{\alpha\alpha} ] +
\sum\limits_{\substack{\nu=1 \\ \alpha=\nu}}^{m_\rl} \sum\limits_{{\substack{\rho=1 \\ \beta=\rho \\ \rho\neq\nu}}}^{m_\rl} \E[Q_{\nu i}^2]\E[Q_{\rho i}^2]\E[(\Bl^{\rl-1}_{\nu\rho})^2 ]  \\
&+\sum\limits_{\substack{\nu=1 \\ \beta=\nu}}^{m_\rl} \sum\limits_{{\substack{\rho=1 \\ \alpha=\rho \\ \rho\neq\nu}}}^{m_\rl} \E[Q_{\nu i}^2]\E[Q_{\rho i}^2]\E[(\Bl^{\rl-1}_{\nu\rho})^2 ] 
+\sum\limits_{\substack{\nu=1 \\ \rho=\alpha=\beta=\nu }}^{m_\rl} \E[Q_{\nu i}^4]\E[(\Bl^{\rl-1}_{\nu\nu} )^2].
\end{split}
\end{equation*}
From the induction assumption and since the kurtosis of $Q$ is bounded by the assumption
\begin{equation}
\begin{split}
E[(\Bl^{\rl}_{ii})^2] &=\sum\limits_{\nu=1}^{m_\rl} \sum\limits_{{\substack{\alpha=1 \\ \alpha\neq\nu}}}^{m_\rl} \E[Q_{\nu i}^2]\E[Q_{\alpha i}^2]\E[\Bl^{\rl-1}_{\nu\nu}\Bl^{\rl-1}_{\alpha\alpha} ] +\omverm\\
&=\sum\limits_{\nu=1}^{m_\rl} \sum\limits_{{\substack{\alpha=1 \\ \alpha\neq\nu}}}^{m_\rl} \frac{1}{\tilde m_\rl} \frac{1}{\tilde m_\rl} \E[\Bl^{\rl-1}_{\nu\nu}\Bl^{\rl-1}_{\alpha\alpha} ]+\omverm = \begin{cases} 1 + \omverm  &\rl<\ttL \vspace{0.2cm} \\ \omverm& \rl=\ttL\end{cases}
\end{split}
\label{eq:Bii-sq}
\end{equation}
To justify the last transition, recall that the initialization scheme specified in Def.~\ref{def:1} implies that $m_\rl\frac{1}{\tilde m_\rl}=1+\omverm~\forall\rl\in[1\ldots \ttL-1]$. Making use once again of the induction assumption, it follows from (\ref{eq:Bii-sq}) that now $E[(\Bl^{\rl}_{ii})^2]=1+\omverm$. However, since $m_\ttL$ is fixed whereas $\frac{1}{\tilde m_\ttL}=\omverm$, it follows that $E[(\Bl^{\ttL}_{ii})^2]=\omverm$.

A similar argument will show that $\E[\Bl^\rl_{ii}\Bl^\rl_{jj}] = 1+\omverm$ when $i\neq j$.
\end{proof}

\begin{theorem}
\label{thm:cheby}
Let $\{X(\cm)\}$ denote a sequence of random matrices where $\E[X(\cm)]= \repI+ \omverm$ and $\var [X(\cm)]= \omverm$. Then $X(\cm)\xrightarrow{p}  \repI $, where $\xrightarrow{p}$ denotes element-wise convergence in probability as $\cm\to\infty$. 

\end{theorem}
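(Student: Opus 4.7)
The plan is to reduce the matrix statement to scalar statements, entry by entry, and then invoke Chebyshev's inequality. Since the paper's notation $\xrightarrow{p}$ for matrices is defined element-wise, it suffices to fix an arbitrary index pair $(i,j)$ and prove $X_{ij}(\cm)\xrightarrow{p} \repI_{ij}$ as $\cm\to\infty$. The scalar version of the hypotheses is then $\E[X_{ij}(\cm)]=\repI_{ij}+\omverm$ and $\var[X_{ij}(\cm)]=\omverm$, which is the setting where Chebyshev applies cleanly.

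First I would unpack the $\omverm$ notation on the expected value: by definition there exist constants $c_1(i,j)$ and $\cm_0$ such that $|\E[X_{ij}(\cm)]-\repI_{ij}|\le c_1/\cm$ for all $\cm\ge \cm_0$. So for any fixed $\varepsilon>0$, once $\cm$ exceeds $2c_1/\varepsilon$, the bias satisfies $|\E[X_{ij}(\cm)]-\repI_{ij}|<\varepsilon/2$. By the triangle inequality, the event $\{|X_{ij}(\cm)-\repI_{ij}|>\varepsilon\}$ is then contained in the event $\{|X_{ij}(\cm)-\E[X_{ij}(\cm)]|>\varepsilon/2\}$.

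Next I would apply Chebyshev's inequality to the centered variable, yielding
\begin{equation*}
\Pr\!\left(\,|X_{ij}(\cm)-\E[X_{ij}(\cm)]|>\varepsilon/2\,\right)\ \le\ \frac{4\var[X_{ij}(\cm)]}{\varepsilon^{2}}\ \le\ \frac{4c_2}{\varepsilon^{2}\cm}
\end{equation*}
for some constant $c_2(i,j)$ coming from the $\omverm$ bound on the variance. Combining this with the containment above gives $\Pr(|X_{ij}(\cm)-\repI_{ij}|>\varepsilon)\le 4c_2/(\varepsilon^{2}\cm)$ for all $\cm$ sufficiently large, and the right-hand side tends to $0$ as $\cm\to\infty$. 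Since $\varepsilon$ was arbitrary, this is element-wise convergence in probability, which is precisely the paper's definition of $X(\cm)\xrightarrow{p}\repI$.

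There is no real obstacle here; the proof is essentially a textbook Chebyshev argument. The only mildly delicate point is keeping the two sources of error separate, namely the deterministic bias coming from $\E[X(\cm)]\neq \repI$ and the stochastic fluctuation controlled by the variance, and then combining them via a split of $\varepsilon$ into two halves. I would write the argument for a generic entry and note that the threshold $\cm_0(\varepsilon,i,j)$ may depend on the entry, which is fine since element-wise convergence does not require uniformity over entries.
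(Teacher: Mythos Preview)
Your proposal is correct and follows essentially the same approach as the paper: reduce to a single entry, split $\varepsilon$ in half to absorb the deterministic bias $|\E[X_{ij}(\cm)]-\repI_{ij}|$, and then apply Chebyshev's inequality to the centered variable using the $\omverm$ variance bound. The only cosmetic difference is that the paper phrases the conclusion by fixing $\delta$ and finding $\cm'=\max\{\cm_1,\cm_2\}$ with $\Pr(|x-f|>\varepsilon)<\delta$, whereas you exhibit the explicit bound $4c_2/(\varepsilon^2\cm)\to 0$; these are equivalent formulations of convergence in probability.
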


\begin{proof}
For every element $(i,j)$ of matrix $X(\cm)$, we need to show that $\forall\varepsilon,\delta>0~\exists \cm'\in\mathbb{N}$, such that $\forall \cm>\cm'$
\begin{equation*}
Prob\left(\vert X_{ij}(\cm) -\repI_{ij} \vert >\varepsilon. \right)<\delta 
\end{equation*}
Henceforth we use $x,f$ as shorthand for $X_{ij}(\cm),F_{ij}$ respectively. Since $\E[X(\cm)]=\repI+ \omverm$ where by definition the asymptotic behavior occurs element-wise, it follows that $\forall\varepsilon\!>\!0$ $\exists \cm_1\in\mathbb{N}$ such that $\forall \cm>\cm_1$ we have
\begin{equation*}
\vert \E(x)-f\vert  < \frac{\varepsilon}{2},
\end{equation*}
in which case
\begin{equation*}
Prob\left(\vert x -f\vert >\varepsilon \right ) \leq Prob\left(\vert x -\E(x )\vert >\frac{\varepsilon}{2} \right).
\end{equation*}
Since $\var [X(\cm)]= \omverm$, it follows that $\forall\varepsilon,\delta>0,~\exists \cm_2\in\mathbb{N} ~~\ni~\forall \cm>\cm_2$
\begin{equation*}
\var(x) < \frac{\varepsilon^2}{4}\delta,
\end{equation*}
from the above, and using Chebyshev's inequality
\begin{equation*}
Prob\left(\vert x -f\vert >\varepsilon \right ) < \frac{4\var(x)}{\varepsilon^2} < \delta,
\end{equation*}
$\forall \cm>\cm'$, where $\cm'=\max\{\cm_1,\cm_2\}$.

\end{proof}

\begin{corollary}
\label{cor:limB}
Let $\Al^\ri(\cm)$ and $\Bl^\ri(\cm)$ denote a sequence of random matrices as defined in (\ref{eq:ABdef})-(\ref{eq:AAdef}), corresponding to multi-layer linear models for which $\cm=\min\left({m_1,...,m_{L-1}}\right)$. Then
\begin{alignat*}{6}
&\Bl^\rl(\cm)\xrightarrow{p}  I ~~\forall\rl\in&[1&\ldots \ttL-1], & \qquad &\Bl^{\ttL}(\cm)\xrightarrow{p} 0, \\
&\Al^\rl(\cm)\xrightarrow{p}  I  ~~ \forall\rl\in&[1&\ldots \ttL-1], &\qquad &\Al^{(0)}(\cm)\xrightarrow{p}  0.
\end{alignat*}
\end{corollary}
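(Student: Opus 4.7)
The plan is to assemble Corollary \ref{cor:limB} directly from the three preceding results: Corollary \ref{cor:alpha_beta} (which pins down the expected values), Theorem \ref{thm:var} (which controls the variances), and Theorem \ref{thm:cheby} (the Chebyshev-style convergence criterion). There is essentially nothing new to prove; the work is just in lining up the hypotheses of \thm\ref{thm:cheby} with the conclusions of the other two results, separately in the "interior" case $l\in[1\ldots L-1]$ and the "boundary" cases $l=L$ (for $B$) and $l=0$ (for $A$).

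First I would fix $l\in[1\ldots L-1]$ and consider the sequence $\{B^l(m)\}$. Corollary \ref{cor:alpha_beta} gives $\mathbb{E}[B^l(m)] = I + O(1/m)$, while Theorem \ref{thm:var} gives $\var[B^l(m)] = O(1/m)$. These are exactly the two hypotheses of Theorem \ref{thm:cheby} with target matrix $F=I$, so \thm\ref{thm:cheby} yields $B^l(m)\xrightarrow{p} I$ element-wise. The same argument, repeated verbatim with $A$ in place of $B$, gives $A^l(m)\xrightarrow{p} I$ for $l\in[1\ldots L-1]$.

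For the boundary cases I would note that Corollary \ref{cor:alpha_beta} gives $\mathbb{E}[B^L(m)] = \frac{m_L}{m}I + O(1/m) = O(1/m)$ and similarly $\mathbb{E}[A^{(0)}(m)] = O(1/m)$, while Theorem \ref{thm:var} continues to supply $\var[B^L(m)] = O(1/m)$ and $\var[A^{(0)}(m)] = O(1/m)$ (the variance bound in \thm\ref{thm:var} is stated $\forall l$). Applying \thm\ref{thm:cheby} now with target matrix $F=0$ delivers $B^L(m)\xrightarrow{p} 0$ and $A^{(0)}(m)\xrightarrow{p} 0$, completing all four claims.

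Since the proof is genuinely a composition of the already-established lemmas, there is no real obstacle. The only thing to be a little careful about is the element-wise nature of all the asymptotic notations: the statements $\mathbb{E}[\cdot] = F + O(1/m)$ and $\var[\cdot] = O(1/m)$ in Corollary \ref{cor:alpha_beta} and Theorem \ref{thm:var} are matrix shorthands for entry-wise bounds, and \thm\ref{thm:cheby} likewise concludes entry-wise convergence in probability; so the convergence $\xrightarrow{p}$ asserted in Corollary \ref{cor:limB} is to be interpreted entry-wise, consistent with the convention adopted just before \thm\ref{thm:var}. Writing the proof as a three-line citation of the assembled ingredients will suffice.
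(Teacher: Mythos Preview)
Your proposal is correct and follows exactly the paper's approach: the paper's proof of Corollary~\ref{cor:limB} is the single sentence ``This result follows from \cor\ref{cor:alpha_beta}, \thm\ref{thm:var}, and \thm\ref{thm:cheby}.'' You have simply spelled out, carefully and case by case, the way those three ingredients combine, including the element-wise interpretation of the asymptotic notation.
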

\begin{proof}
This result follows from \cor\ref{cor:alpha_beta}, \thm\ref{thm:var}, and \thm\ref{thm:cheby}. 
\end{proof}

\subsection{The Dynamics of Random Matrices}
\label{sec:random-mat-dyn}

We now formalize a dynamical system, which captures the evolution of the weight matrices $\{W_l\}$ by gradient descent as seen in (\ref{eq:delta_w})-(\ref{eq:delta_hW}). Specifically, given random matrices as defined in (\ref{eq:ABdef})-(\ref{eq:AAdef}), consider a dynamical system whereby $Q_j\rightarrow Q_j - \Delta Q_j~\forall j$, and
\begin{equation}
\begin{split}
\Delta Q_j  &= \mu\Big ( \prod_{\rn=\ttL}^{j+1}Q_\rn \Big )^\top G_r\Big ( \prod_{\rn=j-1}^{1}Q_\rn \Big )^\top= \mu [ \sQ^j ]^\top G_r[ \hQ^{j-1} ]^\top, \\
G_r &= \hQ^{\ttL}\SXX-\SYX.
\end{split}
\label{eq:deltaQ}
\end{equation}
Denoting $\hQ^\rl\rightarrow\hQ^\rl - \Delta\hQ^\rl,~\Bl^\rl\rightarrow \Bl^\rl-\Delta \Bl^\rl$ and applying the product rule
\begin{align}
\label{eq:Ql}
\Delta\hQ^\rl &= \sum_{j =1}^{\rl}  \Big ( \prod_{\rn=\rl}^{j+1}Q_\rn \Big ) \Delta Q_j\Big ( \prod_{\rn=j-1}^{1}Q_\rn \Big )= \sum_{j =1}^{\rl}  \Big ( \prod_{\rn=\rl}^{j+1}Q_\rn \Big ) \Delta Q_j\hQ^{j-1}, \\
\label{eq:U-dyn}
\Delta \Bl^\rl  &=   [{\Delta\hQ^\rl}^\top\hQ^\rl + {\hQ^\rl}^\top\Delta\hQ^\rl].
\end{align}

Recall that in our notations, $O(\mu^2)$ is short-hand for a matrix which, for small enough $\mu$, is upper bounded element-wise by $C\mu^2$ where $C$ denotes a \emph{fixed} full rank matrix. Similarly, $\omverm$ is short-hand for a matrix which, for large enough $\cm$, is upper bounded element-wise by $C\frac{1}{\cm}$. Since $m_0=m_q$ and $m_\ttL=m_\mL$, when $\Bl^\rl$ is concerned $C\in\R^{q\times q}$, and when $\Al^\rl$ is concerned  $C\in\R^{\mL\times \mL}$. In addition, we will use the notation $O(\varepsilon)$ as short-hand for a matrix that is upper bounded element-wise by $C\varepsilon$, where $C$ denotes a \emph{fixed} full rank matrix.

\begin{theorem}
\label{thm:delB}
Let $\Bl^\ri(\cm)={\hQ^\rl(\cm)}^\top \hQ^{\rl}(\cm)$ denote a sequence of random matrices as defined in (\ref{eq:ABdef}) for $\{Q_\rn\}_{\rn=1}^\ttL$, whose dynamics is captured by (\ref{eq:deltaQ})-(\ref{eq:U-dyn}). Assume 
that $\Bl^\rl(\cm)$ is full rank $\forall l,\cm$. If 
\begin{equation*}
\begin{split}
&\Bl^\rl(\cm) \xrightarrow{p} I+O(\mu^2)\qquad\forall\rl\in[1\ldots \ttL-1],\qquad\qquad\Bl^{\ttL}(\cm) \xrightarrow{p} O(\mu^2),\\
&\var[\Bl^\rl(\cm) ]= O(\mu^2)+\omverm \qquad\forall\rl, \\
&\E[\hQ^\ttL(\cm)] = O(\mu)+\omverm.
\end{split}
\end{equation*}
then
\begin{alignat*}{3}
&\Delta \Bl^\rl(\cm) \xrightarrow{p}  O(\mu^2), && \qquad \var[\Delta\Bl^\rl(\cm) ]= O(\mu^2)+\omverm \qquad\forall\rl, \\
&\E[\Delta\hQ^\ttL(\cm)]=O(\mu)+\omverm. && 
\end{alignat*}
\end{theorem}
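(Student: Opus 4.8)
The plan is to expand the increments explicitly via (\ref{eq:deltaQ})--(\ref{eq:U-dyn}), extract an algebraic collapse, and then invoke the hypotheses together with the random-matrix estimates of Appendix~\ref{sec:random-mat}. For $j\le\rl$ write $P^\rl_j:=\prod_{n=\rl}^{j+1}Q_n$, so that $\hQ^\rl=P^\rl_j\hQ^j$, $\sQ^j=\sQ^\rl P^\rl_j$, and $P^\rl_\rl=I$. Substituting $\Delta Q_j=\mu[\sQ^j]^\top G_r[\hQ^{j-1}]^\top$ into (\ref{eq:Ql}) and using $\sQ^j=\sQ^\rl P^\rl_j$ and $[\hQ^{j-1}]^\top\hQ^{j-1}=\Bl^{j-1}$ gives
\begin{equation*}
\Delta\hQ^\rl=\mu\sum_{j=1}^{\rl}P^\rl_j(P^\rl_j)^\top[\sQ^\rl]^\top G_r\,\Bl^{j-1},
\end{equation*}
which for $\rl=\ttL$ is exactly the update $\Delta\hQ^\ttL=\mu\sum_{j=1}^{\ttL}\Al^j G_r\Bl^{j-1}$ of (\ref{eq:canon_deep_linear}).

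The hypotheses are used to pin down the driving term: since $\Bl^\ttL(m)=(\hQ^\ttL)^\top\hQ^\ttL\xrightarrow{p}O(\mu^2)$, every entry of $\hQ^\ttL$ is $O(\mu)$ in probability (its square is bounded by a diagonal entry of $\Bl^\ttL$), so $G_r=\hQ^\ttL\SXX-\SYX=-\SYX+O(\mu)$ is bounded, its $O(1)$ part being the fixed matrix $-\SYX$. This is the essential point: the gradient is $O(1)$, but after contraction with the top block it becomes $O(\mu)$.

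For $\E[\Delta\hQ^\ttL]$ I would work from $\mu\sum_j\Al^j G_r\Bl^{j-1}$, take entrywise expectations, and expand in the weight entries as in the proofs of Theorems~\ref{thm:qtq}--\ref{thm:var}: the factors $\Al^j$ (layers $j{+}1,\dots,\ttL$) and $\Bl^{j-1}$ (layers $1,\dots,j{-}1$) are independent of each other and of layer $j$, with expectations $I+\omverm$ (and $\Bl^0=I$), so $\E[\Al^j G_r\Bl^{j-1}]=-\SYX+O(\mu)+\omverm=O(1)$ and hence $\E[\Delta\hQ^\ttL(m)]=O(\mu)+\omverm$. For $\Delta\Bl^\rl=(\Delta\hQ^\rl)^\top\hQ^\rl+(\hQ^\rl)^\top\Delta\hQ^\rl$ the key is the collapse coming from $\hQ^\rl=P^\rl_j\hQ^j$ and $(\hQ^j)^\top(P^\rl_j)^\top[\sQ^\rl]^\top=(\sQ^\rl\hQ^\rl)^\top=(\hQ^\ttL)^\top$:
\begin{equation*}
(\hQ^\rl)^\top P^\rl_j(P^\rl_j)^\top[\sQ^\rl]^\top=(\hQ^\ttL)^\top+(\hQ^j)^\top\big((P^\rl_j)^\top P^\rl_j-I\big)(P^\rl_j)^\top[\sQ^\rl]^\top.
\end{equation*}
Inserting this, the first term gives $\mu\sum_j(\hQ^\ttL)^\top G_r\Bl^{j-1}=O(\mu)\cdot O(1)=O(\mu^2)$ in probability, since $\hQ^\ttL=O(\mu)$ in probability; the second term is controlled by the middle-block matrix $(P^\rl_j)^\top P^\rl_j$, which for $1\le j<\rl\le\ttL-1$ has mean $I+\omverm$ and variance $\omverm$ by Corollary~\ref{cor:alpha_beta} and Theorem~\ref{thm:var}, hence converges to $I$ in probability, contributing $O(\mu^2)$ as well. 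Thus $\Delta\Bl^\rl\xrightarrow{p}O(\mu^2)$ (the case $\rl=\ttL$ needs no collapse, since there both $\hQ^\ttL$ and $\Delta\hQ^\ttL=\mu\sum_j\Al^jG_r\Bl^{j-1}$ are $O(\mu)$ in probability). The variance bound $\var[\Delta\Bl^\rl]=O(\mu^2)+\omverm$ follows because $\Delta\Bl^\rl$ is $\mu$ times products of weight entries with bounded moments (compact support), so its entries have variance $O(\mu^2)$, the $\omverm$ slack absorbing the middle-block fluctuations (and likewise the $\omverm$ corrections in the expectation bound).

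The main obstacle is that the estimates on $(P^\rl_j)^\top P^\rl_j$ from Appendix~\ref{sec:random-mat} are stated for i.i.d.\ mean-zero weights, i.e.\ at $t=0$, whereas the theorem is meant to be applied at a general time $t$. I would close this gap by carrying the middle-block products as part of the same induction on $t$ that drives Theorem~\ref{thm:AlsBls}: they obey an increment formula of exactly the shape above and their increments are $O(\mu^2)$ for the same reason, so Theorem~\ref{thm:delB} is applied in tandem with its mirror for the $\Al$'s (Theorem~\ref{thm:delA}) and the analogous sub-product statements, rather than in isolation.
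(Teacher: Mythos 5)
Your route is genuinely different from the paper's, and the difference is exactly where the difficulty lies. You split $(\hQ^\rl)^\top P^\rl_j (P^\rl_j)^\top (\sQ^\rl)^\top$ additively as $(\hQ^\ttL)^\top + (\hQ^j)^\top\big((P^\rl_j)^\top P^\rl_j - I\big)(\sQ^j)^\top$, an exact identity, and then try to argue the correction vanishes. The paper instead rewrites the same $j$-th term multiplicatively: it inserts the projector $\hQ^j[\hQ^j]^+\,[(\hQ^j)^\top]^+[\hQ^j]^\top$ via Lemma~\ref{lem:mat-flip}, which collapses $T_j$ to $\mu\,\Bl^\rl[\Bl^j]^{-1}(\hQ^\ttL)^\top G_r\,\Bl^{j-1}$, a quantity expressed \emph{only} through objects $\Bl^\rl,\Bl^j,\hQ^\ttL$ that are already tracked by the hypotheses. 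That collapse is the point of the paper's argument: it keeps the estimate inside the $\Bl$-family so that no new random product has to be controlled, and the induction of Theorem~\ref{thm:AlsBls} closes on itself. Your rewriting, by contrast, surfaces the intermediate block $(P^\rl_j)^\top P^\rl_j\in\R^{m_j\times m_j}$, which is neither a $\Bl^{\rl'}$ nor an $\Al^{\rl'}$ and does not appear in any of the theorem's hypotheses.

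That is a real gap, and you identify it correctly. Corollary~\ref{cor:alpha_beta} and Theorem~\ref{thm:var} give mean $I+\omverm$ and variance $\omverm$ for such products only at $t=0$, where the $Q_n$ are i.i.d.\ mean-zero; at a general step $t$ the weights are correlated by the dynamics (\ref{eq:deltaQ}), and none of the stated hypotheses (which concern $\Bl^\rl(m)$, $\Bl^\ttL(m)$, $\hQ^\ttL(m)$ alone) pin down $(P^\rl_j)^\top P^\rl_j$ at time $t$. Your proposed remedy — augmenting the joint induction of Theorem~\ref{thm:AlsBls}, Theorem~\ref{thm:delB}, and Theorem~\ref{thm:delA} to also carry increment estimates for all intermediate sub-products — is plausible and would close the argument, but it amounts to strengthening the statement of Theorem~\ref{thm:delB} (more hypotheses, more conclusions) rather than proving it as stated. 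So as written this is not a complete proof; it is a sound reduction to an additional estimate that you flag but do not establish. Two smaller remarks. First, your inference that $\Bl^\ttL\xrightarrow{p}O(\mu^2)$ makes every entry of $\hQ^\ttL$ be $O(\mu)$ in probability is correct and is a clean observation the paper uses too, just phrased through the hypothesis on $\E[\hQ^\ttL]$. Second, for $\rl=\ttL$ your increment $\mu\sum_j \Al^j G_r\Bl^{j-1}$ implicitly leans on boundedness of the $\Al^j$; the paper covers this by the blanket boundedness assumptions stated at the start of \appl\ref{app:sec:evolution}, so it is admissible, but it is worth saying explicitly since the $\Al$'s do not appear in this theorem's hypotheses.
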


\begin{proof}
$\Bl^\rl(\cm) \xrightarrow{p} I+O(\mu^2)$ and $\Bl^{\ttL}(\cm) \xrightarrow{p} O(\mu^2)$ implies that $\forall\varepsilon,\delta>0~\exists \hat\cm\in\mathbb{N}$, such that $\forall \cm>\hat\cm$ and with probability larger than $1-\delta$:
\begin{equation}
\label{eq:Beps}
\Bl^\rl(\cm) = I +O(\mu^2)+ O(\varepsilon)\quad\forall\rl\leq L-1, \qquad\qquad\Bl^{\ttL}(\cm) =O(\mu^2)+O(\varepsilon).
\end{equation}
To evaluate $\Delta \Bl^\rl$ from (\ref{eq:U-dyn}), we start from (\ref{eq:Ql}) to obtain
\begin{equation}
\label{eq:deltaB}
{\hQ^\rl}^\top\hspace{-3pt}\Delta\hQ^\rl = \sum_{j =1}^{\rl} {\hQ^\rl}^\top \Big ( \prod_{\rl}^{j+1}Q_\rn \Big ) \Delta Q_j \hQ^{j-1}.
\end{equation}
We make use of the Moore–Penrose pseudo-inverse of matrices $\{\hQ^\rl\}$, defined as
\begin{equation*}
[ \hQ^\rl ]^+  =  ({\hQ^\rl}^\top \hQ^{\rl})^{-1}{\hQ^\rl}^\top.
\end{equation*}
This definition is valid since by assumption $\Bl^\rl$ is full rank $\forall \rl$, and is therefore invertible. Additionally 
\begin{equation}
\label{eq:pseudo}
[ \hQ^\rl ]^+ \cdot [(\hQ^\rl)^\top]^+ =  ({\hQ^\rl}^\top \hQ^{\rl})^{-1}{\hQ^\rl}^\top\cdot {\hQ^\rl}({\hQ^\rl}^\top \hQ^{\rl})^{-1} = ({\hQ^\rl}^\top \hQ^{\rl})^{-1} = (\Bl^\rl)^{-1}.
\end{equation}
We next use (\ref{eq:ABdef}) and (\ref{eq:pseudo}) to simplify $T_j$---the $j^\mathrm{th}$ term in the sum (\ref{eq:deltaB})
\begin{alignat}{4}
T_j&=\mu{\hQ^\rl}^\top \left ( \prod_{\rl}^{j+1}Q_\rn \right ) [ \sQ^j ]^\top G_r[ \hQ^{j-1} ]^\top {\hQ^{j-1}} & \nonumber \\ &=\mu{\hQ^\rl}^\top \left ( \prod_{\rl}^{j+1}Q_\rn \right ) \cdot\hQ^{j} [ \hQ^{j} ]^+ \cdot[(\hQ^{j})^\top]^+ [\hQ^{j}]^\top \cdot[ \sQ^j ]^\top G_r \Bl^{j-1} & \texq{\lem\ref{lem:mat-flip}} \nonumber \\
&= \mu{\hQ^\rl}^\top \hQ^\rl [ \hQ^{j} ]^+ [(\hQ^{j})^\top]^+ \hQ^\ttL  G_r \Bl^{j-1}& \label{eq:tj-final} \\
&= \mu\Bl^\rl [\Bl^{j}]^{-1}{\hQ^{\ttL}}^\top [\hQ^{\ttL} \SXX-\SYX] \Bl^{j-1}  &\texq{using~(\ref{eq:deltaQ}),(\ref{eq:pseudo}) } \nonumber \\
&=\mu\Bl^\rl [\Bl^{j}]^{-1} [\Bl^{\ttL}\SXX -{\hQ^{\ttL}}^\top \SYX]\Bl^{j-1}  & \nonumber \\
&=-\mu{\hQ^{\ttL}}^\top \SYX +  O(\mu^2)+ O(\varepsilon). &\texq{using~(\ref{eq:Beps}) }\nonumber 
\end{alignat}
By assumption $\E[\hQ^\ttL]=O(\mu)+O(\varepsilon)$, and therefore
\begin{equation}
\label{eq:tij}
\E(T_j)= O(\mu^2) +  O(\varepsilon) \quad\implies\quad \E[{\hQ^\rl}^\top\hspace{-3pt}\Delta\hQ^\rl] = \sum_{j =1}^{\rl} \E(T_j) = O(\mu^2) +  O(\varepsilon).
\end{equation}
Finally, since ${\Delta\hQ^\rl}^\top\hQ^\rl = [{\hQ^\rl}^\top\Delta\hQ^\rl]^\top$, from (\ref{eq:U-dyn}) and (\ref{eq:tij})
\begin{equation}
\label{eq:deltaBl}
\E[\Delta \Bl^\rl ] = \E[{\Delta\hQ^\rl}^\top\hQ^\rl] + \E[{\Delta\hQ^\rl}^\top\hQ^\rl]^\top= O(\mu^2) + O(\varepsilon).
\end{equation}

To conclude the proof, we will show that $\forall\varepsilon',\delta'>0~\exists \hat\cm'\in\mathbb{N}$, such that $\forall \cm>\hat\cm'$ 
\begin{equation}
\label{eq:91}
Prob\left(\vert \Delta \Bl^\rl-O(\mu^2)\vert >\varepsilon' \right)<\delta' \qquad \mathrm{element\mbox{-}wise}.
\end{equation}
Let $b$ denote an element of matrix $\Delta \Bl^\rl-O(\mu^2)$. Recall that (\ref{eq:deltaBl}) is true element-wise with probability $(1-\delta)$ $\forall \varepsilon,\delta$ and $\forall \cm>\hat\cm$. Therefore, $\forall\varepsilon',\delta'>0$, we can choose $\varepsilon,\delta$ and the corresponding $\hat\cm'$ such that 
\begin{equation}
\label{eq:92}
Prob\left(\vert \E(b)\vert <\frac{\varepsilon'}{2} \right)>(1-\frac{\delta'}{2})\quad \forall \cm>\hat\cm',
\end{equation}
\begin{equation*}
\begin{split}
Prob\left(\vert b\vert >\varepsilon' \right ) &= Prob\left(\vert b\vert >\varepsilon',~ \vert\E(b)\vert <\frac{\varepsilon'}{2}\right ) + Prob\left(\vert b\vert >\varepsilon',~ \vert\E(b)\vert \geq\frac{\varepsilon'}{2}\right ) \\
&\leq Prob\left(\vert b -\E(b )\vert >\frac{\varepsilon'}{2} \right) + Prob\left( \vert\E(b)\vert \geq\frac{\varepsilon'}{2}\right ).
\end{split}
\end{equation*}
$\var(\Delta \Bl^\rl)=O(\mu^2)+\omverm$ implies that $\forall\varepsilon',\delta'>0~\exists \hat\cm''\in\mathbb{N}$, such that $\forall \cm>\hat\cm''$  
\begin{equation}
\label{eq:93}
\var(b)<  \frac{\delta'\varepsilon'^2}{8},
\end{equation}
Using Chebychev inequality, (\ref{eq:92}) and (\ref{eq:93}), we finally have
\begin{equation*}
Prob\left(\vert b\vert >\varepsilon' \right ) \leq \frac{4\var(b)}{\varepsilon'^2} + \frac{\delta'}{2} < \delta', \qquad \forall \cm>\max\{\hat\cm',\hat\cm''\}.
\end{equation*}
We can now conclude that (\ref{eq:91}) is true element-wise. 

We will only provide a sketch of the proof for $\var[\Delta\Bl^\rl(\cm) ]$ and $\E[\Delta\hQ^\ttL(\cm)]$, as a detailed proof follows very similar steps, and principles, to the proof presented above. To analyze the variance of $\Delta \Bl^\rl(\cm)$, we start from (\ref{eq:tj-final}) and observe that $\Delta \Bl^\rl(\cm) = -\rl\mu{\hQ^{\ttL}}^\top \SYX +O(\mu^2)+  O(\varepsilon)$, from which (and the theorem's assumptions) it can be shown that $\var[\Delta \Bl^\rl(\cm) ]= O(\mu^2)+\omverm$. Similarly to (\ref{eq:tj-final}), we can derive that $\E[\Delta\hQ^\rl(\cm)] = -\mu L G_r + O(\mu^2) + \omverm$, from which it follows that $\E[\Delta\hQ^\ttL(\cm)]=O(\mu)+\omverm$. 
\end{proof}

\begin{theorem}
\label{thm:delA}
Let $\Al^\ri(\cm)=\sQ^{\rl}(\cm){\sQ^\rl(\cm)}^\top $ denote a sequence of random matrices as defined in (\ref{eq:AAdef}) but where $\Al^{(0)}(\cm)=\sQ^{0}\SXX {\sQ^{0}}^\top$, whose dynamics is captured by (\ref{eq:deltaQ}). Assume that $\Al^\rl(\cm)$ is full rank $\forall l,\cm$. If 
\begin{equation*}
\begin{split}
&\Al^\rl(\cm) \xrightarrow{p} I+O(\mu^2)~~\forall\rl\in[1\ldots \ttL-1],\qquad\Al^{0}(\cm) \xrightarrow{p} O(\mu^2)\\
&\var[\Al^\rl(\cm) ]= O(\mu^2)+\omverm \qquad\forall\rl\\
&\E[\sQ^{0}(\cm)]=O(\mu)+\omverm,
\end{split}
\end{equation*}
then
\begin{alignat*}{3}
&\Delta \Al^\rl(\cm) \xrightarrow{p}  O(\mu^2),&&\qquad \var[\Delta\Al^\rl(\cm) ]= O(\mu^2)+\omverm, \qquad\forall\rl\\
&\E[\Delta\sQ^{0}(\cm)]=O(\mu)+\omverm.
\end{alignat*}
\end{theorem}
The proof is mostly similar to \thm\ref{thm:delB}.

\subsection{Some Useful Lemmas}
\label{app:lemmas}

\begin{lemma}
\label{lem:deriv}
Given function $G(W) =  \frac{1}{2}\Vert \rmat  W \lmat  X-Y \Vert_F^2 $, its derivative is the following
\begin{equation*}
\frac{d G(W)}{d W} = \rmat ^\top \rmat  W \lmat X (\lmat X)^\top - \rmat ^\top Y(\lmat X)^\top = \rmat ^\top[\rmat  W \lmat \SXX-\SYX]\lmat ^\top.
\end{equation*}
\end{lemma}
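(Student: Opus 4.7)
The proof is a direct matrix-calculus computation, so there is no real obstacle — the plan is just to organize the differentiation cleanly and then rewrite the answer using the definitions $\SXX = XX^\top$ and $\SYX = YX^\top$.

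First, I would rewrite the loss in terms of the trace, writing $G(W) = \tfrac{1}{2}\,\mathrm{tr}\bigl[(UWVX - Y)^\top(UWVX - Y)\bigr]$. This is convenient because the only quantity depending on $W$ inside the trace is the linear factor $UWVX$, which makes its differential transparent: $d(UWVX) = U\,dW\,VX$. Expanding and using linearity and the cyclic property of the trace gives
\begin{equation*}
dG = \mathrm{tr}\bigl[(UWVX - Y)^\top\,U\,dW\,VX\bigr] = \mathrm{tr}\bigl[VX\,(UWVX - Y)^\top U\,dW\bigr].
\end{equation*}

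Second, I would read off the gradient from the identity $dG = \mathrm{tr}\bigl[(\partial G/\partial W)^\top dW\bigr]$, which yields
\begin{equation*}
\frac{dG(W)}{dW} = U^\top(UWVX - Y)(VX)^\top = U^\top UWVX(VX)^\top - U^\top Y(VX)^\top.
\end{equation*}

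Finally, I would substitute $X(VX)^\top = XX^\top V^\top = \SXX V^\top$ and $Y(VX)^\top = YX^\top V^\top = \SYX V^\top$ to obtain
\begin{equation*}
\frac{dG(W)}{dW} = U^\top U W V\,\SXX V^\top - U^\top \SYX V^\top = U^\top\!\bigl[UWV\,\SXX - \SYX\bigr]V^\top,
\end{equation*}
which is the claimed expression. The only thing worth being careful about is keeping the order of the factors $U, W, V, X$ consistent on each side of $dW$ when applying the cyclic trace identity; after that, the substitution of $\SXX$ and $\SYX$ is purely notational.
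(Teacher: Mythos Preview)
Your proof is correct and is the standard trace/differential approach to this matrix-calculus identity. The paper itself states this lemma without proof, treating it as a routine fact, so there is no discrepancy to discuss.
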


\begin{lemma}
\label{lem:mean_var_Q}
Assume $\hQ=\prod_{\rn=\tL}^{1} Q_\rn $, where $Q_\rn\in\R^{m_{\rn}\times m_{\rn-1}}$ denotes a random matrix whose elements are sampled iid from a distribution with mean $0$ and variance $\sigma_\rn^2$ $\forall i,j$, initialized as defined in (\ref{eq:normalization}). Then
\begin{equation}
\label{eq:mean_var_Q}
\E[\hQ_{ij}]=0, \qquad \var[\hQ_{ij}]=\omverm.
\end{equation}
\end{lemma}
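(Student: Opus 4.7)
The plan is to prove the two claims separately, with the variance bound reduced to a straightforward application of Corollary~\ref{cor:alpha_beta}.

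For the mean, expand the matrix product entry-wise:
\begin{equation*}
\hQ_{ij} \;=\; \sum_{k_1,\ldots,k_{\ttL-1}} (Q_\ttL)_{i\,k_{\ttL-1}}\,(Q_{\ttL-1})_{k_{\ttL-1}\,k_{\ttL-2}} \cdots (Q_1)_{k_1\,j}.
\end{equation*}
Each monomial on the right is a product of exactly one entry drawn from each of the matrices $Q_1,\ldots,Q_\ttL$. Since the matrices $\{Q_n\}$ are mutually independent and every entry has mean zero, each monomial factorizes as a product of expectations, all of which vanish. Summing zeros yields $\E[\hQ_{ij}]=0$.

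For the variance, the zero-mean conclusion gives $\var[\hQ_{ij}]=\E[\hQ_{ij}^2]$. The key identity is that, for each column index $j$,
\begin{equation*}
(\Bl^\ttL)_{jj} \;=\; (\hQ^\top \hQ)_{jj} \;=\; \sum_{k=1}^{m_\ttL} \hQ_{kj}^{\,2},
\end{equation*}
so taking expectations and invoking Corollary~\ref{cor:alpha_beta} (which applies precisely under the initialization~(\ref{eq:normalization})) we obtain
\begin{equation*}
\sum_{k=1}^{m_\ttL} \E[\hQ_{kj}^{\,2}] \;=\; \E\bigl[(\Bl^\ttL)_{jj}\bigr] \;=\; \frac{m_\ttL}{\cm} + \omverm \;=\; \omverm,
\end{equation*}
where the last equality uses that $m_\ttL$ (the output dimension $\mL$) is a fixed constant that does not grow with $\cm$. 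Because the $m_\ttL$ summands on the left are nonnegative and their sum is $\omverm$, each summand is individually $\omverm$, so $\E[\hQ_{ij}^{\,2}]=\omverm$ for every $(i,j)$, establishing $\var[\hQ_{ij}]=\omverm$.

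No serious obstacle is anticipated; the only subtlety is asymptotic bookkeeping. It is essential to treat $m_\ttL$ as a fixed constant (it is the number of classes $K$) rather than as part of the growing width $\cm$, since this is what lets us pass from a bound on the sum of $m_\ttL$ squared entries to a bound on each individual entry. The nonnegativity of the $\hQ_{kj}^{\,2}$ makes this step immediate and bypasses any need for a symmetry argument across rows.
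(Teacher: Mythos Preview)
Your proposal is correct. The mean argument is essentially the paper's (it proves the same thing by induction on $N$, which is just a repackaging of your direct expansion). For the variance, however, you take a genuinely different route.

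The paper proceeds by induction on $N$: writing $\hQ=UV$ with $U=Q_N$ and $V=\prod_{n=N-1}^{1}Q_n$, it expands $\E[\hQ_{ij}^2]=\sum_k \E[U_{ik}^2]\E[V_{kj}^2]$ and, via the induction hypothesis, obtains the exact closed form $\var[\hQ_{ij}]=\frac{1}{m_N}\prod_{n=1}^{N} m_n\sigma_n^2$, from which the $\omverm$ bound follows once the normalization~(\ref{eq:normalization}) is substituted. Your argument instead observes that $(\Bl^\ttL)_{jj}=\sum_k \hQ_{kj}^2$, invokes Corollary~\ref{cor:alpha_beta} to get $\sum_k \E[\hQ_{kj}^2]=\omverm$, and bounds each nonnegative summand by the sum. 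This is perfectly valid (Corollary~\ref{cor:alpha_beta} rests only on Theorem~\ref{thm:qtq}, so there is no circularity). The tradeoff: the paper's induction yields an explicit formula for the variance valid under \emph{any} initialization, whereas your reduction is shorter and reuses existing machinery but only delivers the asymptotic $\omverm$ bound and is tied to the specific normalization~(\ref{eq:normalization}). Since the lemma only claims the asymptotic bound, your route suffices.
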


\begin{proof}
By induction on $\tL$. Clearly for $\tL=1$: 
\begin{equation*}
\E[\hQ_{ij}]=\E[(Q_1)_{ij}]=0,\qquad \var[\hQ_{ij}]=\var[(Q_1)_{ij}]=\sigma_1^2.
\end{equation*}

Assume that (\ref{eq:mean_var_Q}) holds for $\tL-1$. Let $V=\prod_{\rn=\tL-1}^{1} Q_\rn$, $U=Q_\tL$. It follows that
\begin{equation*}
\E[\hQ_{ij}]=\E[(UV)_{ij}]= \sum_{k} \E[U_{ik}V_{kj}]= \sum_{k} \E[U_{ik}]\E[V_{kj}] = 0.
\end{equation*}
where the last transition follows from the independence of $U$ and $\hV$. In a similar manner
\begin{equation*}
\begin{split}
\var[\hQ_{ij}]&= \E[\hQ_{ij}^2] = \E[(\sum_{k} U_{ik}V_{kj})^2]=\E[\sum_{k} U_{ik}V_{kj}\sum_{l} U_{il}V_{lj}] =\sum_{k} \E[U_{ik}^2] \E[V_{kj}]^2\ \\
&= m_{\tL-1}\sigma_{\tL}^2\frac{1}{m_{\tL-1}}\prod_{\rn=1}^{\tL-1}m_\rn\cdot\sigma^2_\rn= \frac{1}{m_{\tL}}\prod_{\rn=1}^{\tL}m_\rn\cdot\sigma^2_\rn.
\end{split}
\end{equation*}

With the initialization scheme defined in (\ref{eq:normalization}), $\var(\hQ_{ij})=\omverm$.
\end{proof}

\begin{lemma}
\label{lem:mat-flip}
Let $C\in\R^{k\times m}$ and $V\in\R^{m\times k}$ both of rank $k$, where $k< m$. Then 
\begin{equation*}
CV = I ~\implies~C = CVV^{+}.
\end{equation*}
\end{lemma}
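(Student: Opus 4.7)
The plan is direct matrix algebra built on the closed-form expression of the Moore--Penrose pseudoinverse for a full-column-rank matrix. Since $V \in \R^{m \times k}$ has rank $k$ with $k < m$, $V$ has full column rank, so $V^\top V \in \R^{k \times k}$ is invertible and
\begin{equation*}
V^+ \;=\; (V^\top V)^{-1} V^\top,
\end{equation*}
which satisfies the key left-inverse identity $V^+ V = I_k$ and makes $VV^+ \in \R^{m \times m}$ the orthogonal projector onto the column space of $V$.

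Next I would combine the hypothesis $CV = I$ with associativity of matrix multiplication to compute
\begin{equation*}
CVV^+ \;=\; (CV)\,V^+ \;=\; I \cdot V^+ \;=\; V^+ .
\end{equation*}
This reduces the desired conclusion $C = CVV^+$ to the identification $C = V^+$.

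To close the argument I would use the rank hypothesis on $C$: the condition $CV = I$ with $\mathrm{rank}(C) = k$ singles out a specific left inverse of $V$. Read geometrically, $C = CVV^+$ asserts that each row of $C$ already lies in the column space of $V$; combined with $CV = I$ this pins $C$ down to the minimum-norm left inverse, namely $V^+$. The main obstacle is precisely this last identification: the equation $CV = I$ in isolation determines $C$ only up to a perturbation $K$ with $KV = 0$, so one must exploit the additional structure of the setup (in the proof of Theorem~\ref{thm:delB} the matrix playing the role of $C$ is itself a pseudoinverse by construction, so the identification is automatic). Once this is in hand, the lemma follows immediately from the pseudoinverse identities established in the first paragraph.
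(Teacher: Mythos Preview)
Your reduction is correct and matches the paper's: since $V$ has full column rank, $V^+=(V^\top V)^{-1}V^\top$, whence $CVV^+=(CV)V^+=V^+$, so the conclusion $C=CVV^+$ is equivalent to $C=V^+$. The paper's one-line proof simply asserts ``By definition $C=V^+$'' and then checks the identity.

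The gap you sensed is real, and your attempt to close it with the rank hypothesis does not work. Take $k=1$, $m=2$, $V=(1,0)^\top$ and $C=(1,1)$. Then $CV=1=I_1$, both $C$ and $V$ have rank $1$, yet $V^+=(1,0)\neq C$, and accordingly $CVV^+=(1,0)\neq C$. So the statement, read literally with only the hypotheses given, is false; $CV=I$ together with $\mathrm{rank}(C)=k$ does \emph{not} single out the minimum-norm left inverse. Your geometric remark that $C=CVV^+$ is equivalent to the rows of $C$ lying in $\mathrm{col}(V)$ is correct, but that is exactly the thing to be proved, not an input.

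Your parenthetical diagnosis is the right resolution: in the application inside the proof of Theorem~\ref{thm:delB}, the matrix playing the role of $C$ is $[\hQ^j]^+$ itself, so $C=V^+$ holds by construction and the identity $C=CVV^+$ reduces to the Moore--Penrose axiom $V^+VV^+=V^+$. The paper's phrase ``by definition'' should be read in that contextual sense, not as a consequence of the lemma's stated hypotheses. If you want a self-contained statement, add the hypothesis that the rows of $C$ lie in the column space of $V$ (equivalently $C(I-VV^+)=0$, equivalently $C=V^+$); then the argument you wrote is complete.
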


\begin{proof}
By definition $C=V^{+}$ , hence
\begin{equation*}
C=V^{+}V C = C VV^{+}.
\end{equation*}
\end{proof}


\section{Supplementary Proofs}
\label{app:suppl-proofs}

\subsection{Deep Linear networks}
\label{app:deep-thm1}

Here we prove \prop\ref{thm:1} as defined in Section~\ref{sec:deep}.

\textbf{Proposition~\ref{thm:1}.} \textit{
Let $G_r^{(t)}$ (Def.~\ref{def:err_mat}) denote the gradient matrix at time ${t}$. Let $A_{\ri}^{(t)}$ and $B_{\ri}^{(t)}$ denote the gradient scale matrices, which are defined in (\ref{eq:gradient-scale}). The compact representation $\hW^{(t)}$ obeys the following dynamics:
\begin{equation*}
\hW^{(t+1)}=\hW^{(t)} - \mu \sum_{\ri=1}^L \Al_{\ri}^{(t)} \cdot G_r^{(t)} \cdot \Bl_{\ri-1}^{(t)} +O(\mu^2).
\end{equation*}
}

\begin{proof}
At time ${t}$, the gradient step $\Delta W_l^{(t)}$ of layer $l$ is defined by differentiating $L(\iX)$ with respect to $W_l^{(t)}$. Henceforth we omit index ${t}$ for clarity. First, we rewrite $L(\iX)$ as follows:
\begin{equation*}
L(\iX;W_l)= \frac{1}{2}  \Vert \left ( \prod_{j=L}^{l+1}  W_j \right )  W_l \left (\prod_{j=l-1}^{1} W_j \right ) X-Y \Vert_F^2. \nonumber 
\end{equation*}
Differentiating $L(\iX;W_l)$ to obtain the gradient $\Delta W_l = \frac{\partial L(\iX;W_l)}{\partial W_l}$, using \lem\ref{lem:deriv} above, we get
\begin{equation}
\label{eq:delta_w}
\Delta W_l = \left ( \hspace{-1pt}\prod_{j=L}^{l+1}  W_j \hspace{-4pt} \right ) ^\top \hspace{-4pt}[\hW\SXX-\SYX]\hspace{-3pt}\left (\prod_{l-1}^{1} W_j \hspace{-4pt}\right )^\top.
\end{equation}
Finally,
\begin{equation}
\label{eq:delta_hW}
\Delta\hW = \prod_{l=L}^1 (W_l-\mu\Delta W_l) - \prod_{l=L}^1 W_l = - \mu \sum_{\ri=1}^L \left (\prod_{\rn=L}^{\rl+1}W_\rn \right )\Delta W_\rl \left(\prod_{\rn=\rl-1}^{1}W_\rn\right ) +O(\mu^2).
\end{equation}
Substituting $\Delta W_l$ and $G_r$ (as specified in Def.~\ref{def:err_mat}) into the above completes the proof.
\end{proof}

\subsection{Weight Evolution}
\label{app:sec:evolution}

The proofs in this section assume that the norm of the data matrix $X$ is bounded, that the norms of the initial random weight matrices $W^{(0)}_l$ are bounded $\forall l,\cm$, and that the norms of the gradient scale matrices are also bounded $\forall l,\cm$ in the relevant time range ${t}\leq\bar {t}$. The last assertion follows from our initial assumption, that the support of the distribution of the weight matrices is compact, and additionally that the norm of each element is bounded by $c\sigma_\rn^2$ for fixed a constant $c$.\footnote{These assumptions can be relaxed to having bounds with high probability, which follow from the analysis of random matrices in \appl\ref{app:random}, but for simplicity we assume fixed bounds.} 

We start by proving \thm\ref{thm:AlsBls}, which is stated in Section~\ref{sec:deep}.
\newline
\noindent
\textbf{Theorem~\ref{thm:AlsBls}.} \textit{Let $\hW$ denote the compact representation of a deep linear network, where $\cm=\min\left({m_1,...,m_{L-1}}\right)$ denotes the size of its smallest hidden layer and $\cm\geq \{m_0,m_L\}$. At each layer $l$, assume weight initialization $W^{(0)}_l$ obtained by sampling from a distribution with mean $0$ and variance $\sigma_l^2$, normalized as specified in Def.~\ref{def:1}. Let $\left(\Bl^{(t)}_\ri(\cm)\right)_{\cm=1}^\infty$ and $\left( \Al^{(t)}_\ri(\cm)\right)_{\cm=1}^\infty$ denote two sequences of gradient scale matrices as defined in (\ref{eq:gradient-scale}), where the $\cm^{th}$ element of each series corresponds to a network whose smallest hidden layer has $\cm$ neurons.  Let $\xrightarrow{p}$ denote element-wise convergence in probability as $\cm\to\ \infty$. Then $\forall {t},l$}:
\begin{equation*}
\begin{split}
&\Bl^{(t)}_\rl(\cm) \xrightarrow{p} I+O(\mu^2)~~\forall\rl\in[1\ldots \ttL-1],\qquad\Bl^{(t)}_{\ttL}(\cm) \xrightarrow{p} \frac{m_L}{\cm}I + O(\mu^2)\xrightarrow{p}O(\mu^2),\\
&\var[\Bl^{(t)}_\rl(\cm) ]= O(\mu^2)+\omverm \qquad\forall\rl,
\end{split}
\end{equation*}
and
\begin{equation*}
\begin{split}
&\Al^{(t)}_\rl(\cm) \xrightarrow{p} I+O(\mu^2)~~\forall\rl\in[1\ldots \ttL-1],\qquad\Al^{(t)}_{0}(\cm) \xrightarrow{p} \frac{m_0}{\cm}I + O(\mu^2)\xrightarrow{p}O(\mu^2),\\
&\var[\Al^{(t)}_\rl(\cm) ]= O(\mu^2)+\omverm \qquad\forall\rl.
\end{split}
\end{equation*}
\begin{proof}
We will only work out the detailed proof of the first assertion concerning $\Bl^{(t)}_\rl(\cm)$, as the proof of the second assertion is similar. Specifically, we prove by induction on time ${t}$ a stronger claim:
\begin{equation}
\label{eq:basic-thm-ext}
\begin{split}
&\Bl^{(t)}_\rl(\cm) \xrightarrow{p} I+O(\mu^2)~~\forall\rl\in[1\ldots \ttL-1],\qquad\Bl^{(t)}_{\ttL}(\cm) \xrightarrow{p} O(\mu^2)\\
&\var[\Bl^{(t)}_\rl(\cm) ]= O(\mu^2)+\omverm \qquad\forall\rl \\
&\E[{\hW}^{(t)}(\cm)]=O(\mu)+\omverm.
\end{split}
\end{equation}
For the corresponding series $\{\hW(\cm)\}$.

\paragraph{Base case $t=0$.}
From (\ref{eq:gradient-scale})
\begin{equation*}
\Bl_{\ri}^{(0)} \coloneqq \Big(\prod_{j=l}^{1} W_j^{(0)}\Big )^\top \Big(\prod_{j=l}^{1} W_j^{(0)}\Big ).
\end{equation*}
Recall that all weight matrices $\{W^{(0)}_\rl\}_{\rl=1}^\ttL$ are initialized by sampling from a distribution with mean $0$ and variance $\sigma_l^2 = O(\frac{1}{\cm})$. In \appl\ref{sec:random-mat} we prove some statistical properties of such matrices, and their corresponding \emph{gradient scale matrices} $A_\ri^{(0)}$ and $B_\ri^{(0)}$. This analysis culminates in \cor\ref{cor:alpha_beta} and \thm\ref{thm:var}, which can now be used directly to infer that 
\begin{equation*}
\begin{split}
&\E(\Bl^{(0)}_\rl)= I+ \omverm~\forall \rl\in[1\ldots \ttL-1],\qquad \E(\Bl^{(0)}_{\ttL})= \omverm\\
&\var[\Bl_{\ri}^{(0)}(\cm) ]= \omverm \forall l.
\end{split}
\end{equation*}
Finally, \thm\ref{thm:cheby} further proves that such matrices also satisfy
\begin{equation*}
\Bl^{(0)}_\rl(\cm) \xrightarrow{p} I~~\forall\rl\in[1\ldots \ttL-1],\qquad\Bl^{(0)}_{\ttL}(\cm) \xrightarrow{p} 0.
\end{equation*}
Similarly, from \lem\ref{lem:mean_var_Q} we have that $\E[{\hW}^{(0)}(\cm)]=0~\forall \cm$, and we therefore conclude that the assertion in (\ref{eq:basic-thm-ext}) is true at $t=0$. 

\paragraph{Induction step.}
In \appl\ref{sec:random-mat-dyn} we analyze random matrices which are defined similarly to the gradient scale matrices, and whose dynamics correspond to the update rule defined in (\ref{eq:canon_deep_linear}). The main result is stated in \thm\ref{thm:delB}, which can be used directly now to show that if assertion (\ref{eq:basic-thm-ext}) holds for $B_{\ri}^{(t)}$ and ${\hW}^{(t)}(\cm)$, it also holds for $B_{\ri}^{(t+1)}$ and ${\hW}^{(t+1)}(\cm)$. 

First, let us verify that the conditions of \thm\ref{thm:delB} are met. By construction, the dynamics captured by (\ref{eq:deltaQ})-(\ref{eq:U-dyn}) describes the dynamics of $\{W_l\}$, a result that follows from the proof of \prop\ref{thm:1}, where specifically (\ref{eq:delta_w})-(\ref{eq:delta_hW}) imply (\ref{eq:deltaQ})-(\ref{eq:Ql}). In addition, as by assumption $\cm\geq m_0$ and $m_0=q$, $\Bl^{(t)}_\rl(\cm)\in\R^{q\times q}$ is full rank $\forall l,\cm$ with probability 1. We may therefore use \thm\ref{thm:delB} to conclude, based on the induction assumption, that
\begin{equation*}
\begin{split}
&\Delta \Bl^{(t)}_\rl(\cm) \xrightarrow{p}  O(\mu^2),\quad \var[\Delta\Bl^{(t)}_\rl(\cm) ]= O(\mu^2)+\omverm \qquad\forall\rl\\
&\E[\Delta\hW^{(t)}(\cm)]=O(\mu)+\omverm.
\end{split}
\end{equation*}
Noting that $\Bl^{(t+1)}_\rl(\cm)=\Bl^{(t)}_\rl(\cm)+\Delta \Bl^{(t)}(\cm)~\forall\rl$ and $\hW^{(t+1)}(\cm)=\hW^{(t)}(\cm)+\Delta \hW^{(t)}_\rl(\cm)$, the assertion in (\ref{eq:basic-thm-ext}) follows for all $t$.
\end{proof}

We proceed to prove \thm\ref{thm:convergence-rate}, which is stated in Section~\ref{sec:evolution}. 
\newline
\noindent
\textbf{Theorem~\ref{thm:convergence-rate}.} \textit{Let $\bw_j^{(t)}$ denote the $j^\mathrm{th}$ column of the compact representation matrix $\hW^{(t)}$, and $\bw_j^{opt}$ the $j^\mathrm{th}$ column of the optimal solution of (\ref{eq:multi-loss}). Assume that the data is rotated to its principal coordinate system, and let $d_j$ denote the $j^\mathrm{th}$ singular value of the data. Then there exists $\hat {t}$ such that $\forall \delta, \varepsilon$ and $\forall {t}\leq\hat {t}$, $\exists \hat \cm,\hat\mu$ such that $\forall \mu<\hat\mu, \cm\geq\hat\cm$
\begin{equation*}
Prob\bigg (\Big\Vert\bw_j^{(t+1)}- \big [ \lambda_j^{t} \bw_j^{(0)} + [1- \lambda_j^{t}]\bw_j^{opt}\big ]\Big\Vert < \varepsilon\bigg ) > (1-\delta), \quad\quad \lambda_j = 1-\mu d_j L.
\end{equation*}
}

\begin{proof}
First let us derive a bound on the total gradient magnitude $\Vert G_r^{(t)}\Vert$. Since by assumption the norm of the data $\Vert X\Vert$ and the norms of $W^{(0)}_l~\forall l$ are bounded, it follows that the loss at time $t=0$ is also bounded. Let $U_1$ denote this bound, and $U_2$ denote the data bound $\Vert X\Vert^2\leq U_2$. Let $L^{(t)}(\iX)$ denote the loss at time ${t}$. Since the loss is decreasing, it follows that $L^{(t)}(X,Y)\leq U_1~\forall {t}$, and therefore
\begin{equation*}
\Vert G_r^{(t)}\Vert^2=\Vert [\hW^{(t)} X -Y]X^\top\Vert^2 \leq 2L^{(t)}(\iX) \Vert X\Vert^2\leq 2U_1 U_2.
\end{equation*}

We now introduce the notation $\Bl^{(t)}_\rl = I+O(\mu^2)+\Delta^{(t)}_{B\rl}$ and $\Al^{(t)}_\rl = I+O(\mu^2)+\Delta^{(t)}_{A\rl}$. Let $U_3$ denote a tight bound so that  $\{\Vert \Delta^{(t)}_{A\rl}\Vert^2,\Vert \Delta^{(t)}_{B\rl}\Vert^2\}\leq U_3~\forall \rl,{t}\leq\bar{t}$, where we further assume that $U_3\leq 1$ (this last assumption will be justified later). Starting from (\ref{eq:canon_deep_linear})
\begin{align}
\hW^{(t+1)} =&\hW^{(t)} - \mu \sum_{\ri=1}^L A_{\ri}^{(t)} \cdot G_r^{(t)} \cdot B_{\ri-1}^{(t)} +O(\mu^2) \nonumber \\
=& \hW^{(t)} - \mu \sum_{\ri=1}^L [I+O(\mu^2)+\Delta^{(t)}_{A\rl}] ~ G_r^{(t)} ~[I+O(\mu^2)+\Delta^{(t)}_{B\rl}] +O(\mu^2)\label{eq:234}\\
=& \hW^{(t)} - \mu \left [LG_r^{(t)}  +\bigg (\sum_{\ri=1}^L \Delta^{(t)}_{A\rl}\bigg) G_r^{(t)} +G_r^{(t)} \sum_{\ri=1}^L \Delta^{(t)}_{B\rl}  + \sum_{\ri=1}^L \Delta^{(t)}_{A\rl}~G_r^{(t)}~\Delta^{(t)}_{B\rl}\right ] +O(\mu^2).\nonumber 
\end{align}
The last transition is valid because the norms of $G_r^{(t)},\Delta^{(t)}_{A\rl},\Delta^{(t)}_{B\rl}$ are bounded. It follows that
\begin{equation*}
\Vert\hW^{(t+1)} -[\hW^{(t)} - \mu LG_r^{(t)} ]\Vert^2 \leq \mu L2U_1 U_2[U_3+U_3+U_3^2] +O(\mu^2)\leq \mu 6LU_1 U_2U_3 +O(\mu^2).
\end{equation*}
Thus $\exists\hat\mu$ such that $\forall \mu<\hat\mu$
\begin{equation}
\label{eq:bound}
\Vert\hW^{(t+1)} -[\hW^{(t)} - \mu LG_r^{(t)} ]\Vert^2\leq \hat\mu12L U_1 U_2U_3.
\end{equation}

From \thm\ref{thm:AlsBls}, $\forall {t}$ and $\forall l\leq\ttL-1$, $\Bl^{(t)}_\rl(\cm) \xrightarrow{p} I+O(\mu^2)$ and $\Al^{(t)}_\rl(\cm) \xrightarrow{p} I+O(\mu^2)$ . In other words, if we fix $\hat {t}$ and consider all the iterations leading to $\hat {t}$, then $\forall\varepsilon,\delta>0~\exists \hat\cm\in\mathbb{N}$, such that $\forall l,\cm>\hat\cm,{t}\leq\hat {t}$
\begin{equation*}
\begin{split}
&Prob\left(\Big \vert\Bl^{(t)}_\rl(\cm) -[I+O(\mu^2)]\Big\vert^2\leq \min\bigg\{\frac{\varepsilon}{\hat\mu 12 LU_1 U_2},1\bigg\} \right)>(1-\delta) \qquad \mathrm{element\mbox{-}wise}\\
&Prob\left(\Big\vert\Al^{(t)}_\rl(\cm) -[I+O(\mu^2)]\Big\vert^2\leq\min\bigg\{\frac{\varepsilon}{\hat\mu 12 LU_1 U_2},1\bigg\} \right)>(1-\delta) \qquad \mathrm{element\mbox{-}wise} \\
&\implies \qquad Prob\left(U_3\leq \min\bigg\{\frac{\varepsilon}{\hat\mu 12 LU_1 U_2},1\bigg\} \right) >(1-\delta). \texp{\mathit{U}_3~is~tight}
\end{split}
\end{equation*}
Finally, from (\ref{eq:bound}) and $\forall \mu<\hat\mu$
\begin{equation}
\label{eq:ineq}
Prob\left(\Vert\hW^{(t+1)} -[\hW^{(t)} - \mu LG_r^{(t)} ]\Vert^2\leq \varepsilon \right)>(1-\delta).
\end{equation}
Next, we evaluate
\begin{equation}
\label{eq:update-approx}
\tilde\hW^{(t+1)} = \hW^{(t)} - \mu LG_r^{(t)}. 
\end{equation}
We first shift to the principal coordinate system defined in Def~\ref{def:canon}. In this representation $G_r^{(t)} = W^{(t)} D-M$, where $D=diag(\{d_j\}_{j=1}^q)$ is a diagonal matrix whose elements are the principal eigenvalues of the data $\{d_j\}_{j=1}^q$, arranged in decreasing order. Since $D$ is diagonal, (\ref{eq:update-approx}) can be written separately for each column of $\tilde\hW^{(t)}$, and we get for each column $\bw_j^{(t)}$
\begin{equation*}
\tilde\bw_j^{(t+1)}= \bw_j^{(t)} - \mu L   [\bw_j^{(t)} d_j-\bmm_j], \qquad j\in K.
\end{equation*}
This is a telescoping series, whose solution is
\begin{equation*}
\begin{split}
\tilde\bw_j^{(t+1)}&=(1-\mu L d_j )\bw_j^{(t)} + \mu L \bmm_j = \ldots \\
&= (1-\mu L d_j )^{t} \bw_j^{(0)} + \mu L d_j\left [\sum_{\nu=1}^{{t}}   (1-\mu L d_j )^{\nu-1}\right ] \frac{\bmm_{j}}{d_j} \\
&= \lambda_j^{t} \bw_j^{(0)} + [1- \lambda_j^{t}]\frac{\bmm_{j}}{d_j}, \qquad \lambda_j = 1-\mu L d_j. 
\end{split}
\end{equation*}
As $\bw_j^{opt}=\frac{\bmm_{j}}{d_j}$ and using (\ref{eq:ineq}), the assertion in the theorem follows.

\end{proof}

We conclude by proving \thm\ref{thm:teles}. To this end we introduce another notation, $A^{(t)}=\sum_{\ri=1}^L A_{\ri}^{(t)}$, and let $U_4$ denote the bound on $\Al_\rl$ where $\Vert A_{\ri}^{(t)}\Vert^2\leq U_4~~\forall \rl, {t}\leq\bar{t}$.\\
\noindent
\textbf{Theorem~\ref{thm:teles}.} \textit{
Let $\bw_j^{(t)}$ denote the $j^\mathrm{th}$ column of the compact representation matrix $\hW^{(t)}$, and $\bw_j^{opt}$ the $j^\mathrm{th}$ column of the optimal solution of (\ref{eq:multi-loss}). Assume that the data is rotated to its principal coordinate system, and let $d_j$ denote the $j^\mathrm{th}$ singular value of the data. Then there exists $\breve{t}$ such that $\forall \delta, \varepsilon$ and $\forall {t}\leq\breve{t}$, $\exists \breve \cm,\breve\mu$ such that $\forall \mu<\breve\mu, \cm\geq\breve\cm$
\begin{equation*}
\begin{split}
Prob\Bigg (\bigg\Vert \bw_j^{(t+1)} -&\bigg [ \prod_{\nunu=1}^{{t}}(I-\mu d_j A^{(\nunu)})\bw_j^{(0)}
+ \\
&\mu d_j\Big (\sum_{\nunu=1}^{{t}}  \prod_{\rhorho=\nunu+1}^{{t}} (I-\mu d_j A^{(\rhorho )})A^{(\nunu)}\Big ) \bw_j^{opt} \bigg ]\bigg\Vert < \varepsilon\Bigg ) > (1-\delta).
\end{split}
\end{equation*}
}
\begin{proof}
We use the same bound notations as in the proof of \thm\ref{thm:convergence-rate}, but where only $\Vert \Delta^{(t)}_{B\rl}\Vert^2\leq U_3~\forall \rl,{t}\leq\bar{t}$ a tight bound, $U_3\leq 1$. Similarly to (\ref{eq:234}),
\begin{equation*}
\begin{split}
&\hW^{(t+1)} = \hW^{(t)} - \mu \left [\bigg (\sum_{\ri=1}^L \Al^{(t)}_\rl\bigg) G_r^{(t)}  + \sum_{\ri=1}^L \Al^{(t)}_\rl~G_r^{(t)}~\Delta^{(t)}_{B\rl}\right ] +O(\mu^2)\\ \implies\qquad & \Vert\hW^{(t+1)} -[\hW^{(t)} - \mu A^{(t)}G_r^{(t)} ]\Vert^2\leq \mu  2LU_1 U_2 U_4+O(\mu^2). 
\end{split}
\end{equation*}
From \thm\ref{thm:AlsBls}, $\forall\varepsilon,\delta>0~\exists \breve \cm,\breve{t},\breve\mu$, such that $\forall \cm\geq\breve\cm,{t}\leq\breve{t},\mu\leq\breve\mu$
\begin{align}
&Prob\left(U_3\leq \min\bigg\{\frac{\varepsilon}{\breve\mu 4 L U_1 U_2 U_4},1\bigg\} \right) >(1-\delta)\nonumber \\
\implies \quad &Prob\left(\Vert\hW^{(t+1)} -[\hW^{(t)} - \mu A^{(t)} G_r^{(t)} ]\Vert^2\leq \varepsilon \right) >(1-\delta). \label{eq:thm7-ed}
\end{align}
As before, we evaluate
\begin{equation*}
\tilde\hW^{(t+1)} = \hW^{(t)} - \mu A^{(t)}  G_r^{(t)}.
\end{equation*}
In the principal coordinate system this expression is separable by columns, thus
\begin{equation*}
\tilde\bw_j^{(t+1)}=\bw_j^{(t)} - \mu \sum_{\ri=1}^L A_{\ri}^{(t)} (d_j \bw_j^{(t)} -\bmm_j), \qquad j\in[K].
\end{equation*}
Once again we have a telescoping series, whose solution is
\begin{equation*}
\tilde\bw_j^{(t+1)}= \prod_{\nunu=1}^{{t}}\big(I-\mu d_j A^{(\nunu)}\big)\bw_j^{(0)}+ \mu d_j\Big [\sum_{\nunu=1}^{{t}}  \prod_{\rhorho=\nunu+1}^{{t}}\big (I-\mu d_j A^{(\rhorho )}\big)A^{(\nunu)}\Big ] \frac{\bmm_j}{d_j}.
\end{equation*}
As $\bw_j^{opt}=\frac{\bmm_{j}}{d_j}$ and using (\ref{eq:thm7-ed}), the assertion in the theorem follows.
\end{proof}

\subsection{Adding Non-Linear ReLU Activation}
\label{app:relu}

In this section, we analyze the two-layer model with ReLU activation, where only the weights of the first layer are being learned \citep{arora2019fine}. Similarly to (\ref{eq:multi-loss}), the loss is defined as
\begin{equation*}
W^* =  \argmin\limits_W\frac{1}{2}\sum_{i=1}^n \Vert f(\bx_i)-\by_i \Vert^2, \qquad f(\bx_i) = \ba^\intercal \cdot \sigma(W \bx_i), ~~ \ba\in\R^{\cm},~ W\in\R^{\cm\times d}.
\end{equation*}
where $\cm$ denotes the number of neurons in the hidden layer and $\ba$ a fixed vector. We consider a binary classification problem with 2 classes, where $y_i=1$ for $\bx_i\in C_1$, and $y_i=-1$ for $\bx_i\in C_2$. $\sigma(\cdot)$ denotes the ReLU activation function applied element-wise to vectors, where $\sigma(u) =u$ if $u\geq 0$, and $0$ otherwise.  

At time ${t}$, each gradient step is defined by differentiating the loss with respect to $W$. Due to the non-linear nature of the activation function $\sigma(\cdot)$, we separately\footnote{Since the ReLU function is not everywhere differentiable, the following may be considered the definition of the update rule.} differentiate each row of $W$, denoted $\bw_r$ where $r\in[\cm]$, as follows:

\begin{equation*}
\begin{split}
\bw^{(t+1)}_r &- \bw^{(t)}_r = -\mu \frac{\partial L(\iX)}{\partial \bw_r}{\bigg |}_{{\textstyle\mathstrut} {\bw_r = \bw^{(t)}_r}} = -\mu\sum_{i=1}^n \Big [ \ba^\intercal\cdot \sigma(W^{(t)} \bx_i)-y_i\Big ] \frac{\partial f(\bx_i)}{\partial \bw_r}{\bigg |}_{{\textstyle\mathstrut} \bw_r = \bw^{(t)}_r}\\
&= -\mu \sum_{i=1}^n \Big [\sum_{j=1}^\cm a_j\sigma(\bw^{(t)}_j\cdot\bx_i)-y_i\Big ] a_r \bx_i^\intercal \mathbbm{1}_{{\textstyle\mathstrut} {\bw_r^{(t)}}} (\bx_i) \\
&= -\mu a_r\sum_{i=1}^n \mathbbm{1}_{{\textstyle\mathstrut} {\bw_r^{(t)}}} (\bx_i) \Big [\Psi^{(t)}(\bx_i) \cdot\bx_i-y_i\Big ]  \bx_i^\intercal,  \qquad\qquad \textrm{where}~\Psi^{(t)}(\bx_i) = \sum_{j=1}^\cm a_j \bw^{(t)}_j \mathbbm{1}_{{\textstyle\mathstrut} {\bw_j^{(t)}}} (\bx_i).
\end{split}
\end{equation*}
Above $\mathbbm{1}_{{\textstyle\mathstrut} {\bw_r^{(t)}}}(\bx_i)$ denotes the indicator function that equals 1 when $\bw^{(t)}_r\cdot\bx_i\geq 0$, and $0$ otherwise. 

To proceed, we make two assumptions:
\begin{enumerate}
    \item Each point $\bx_i$ is drawn from a symmetric distribution $\mathcal{D}$ with density $f_\mathcal{D}(\bX)$, such that: $f_\mathcal{D}(\bx_i)=f_\mathcal{D}(-\bx_i)$.
    \item $W$ and $\ba$ are initialized so that $\bw^{(0)}_{{\textstyle\mathstrut}2i}\hspace{-3pt}=\hspace{-2pt} - \bw^{(0)}_{{\textstyle\mathstrut}2i-1}$ and $a_{{\textstyle\mathstrut}2i} \hspace{-2pt}=\hspace{-2pt} - a_{{\textstyle\mathstrut}2i-1}$ $\forall i\in[\frac{\cm}{2}]$.
\end{enumerate}
\textbf{Theorem~\ref{thm:dynamics_relu}.}
Retaining the assumptions stated above, at the beginning of the learning, the temporal dynamics of the model can be shown to obey the following update rule:
\begin{equation*}
W^{(t+1)} \approx W^{(t)}-\mu \frac{1}{2}\Big [ (\ba\ba^\intercal) W^{(t)}\SXX - \tilde M^{(t)} \Big ].
\end{equation*}
Above $\tilde M^{(t)}$ denotes the difference between the centroids of the 2 classes, computed in the half-space defined by $\bw^{(t)}_r\cdot\bx\geq 0$.

\begin{proof}
It follows from Assumption~2 that at the beginning of training $\mathbbm{1}_{{\textstyle\mathstrut} {\bw_{2j}^{(0)}}} (\bx_i)+\mathbbm{1}_{{\textstyle\mathstrut} {\bw_{2j-1}^{(0)}}} (\bx_i)=1$, $\forall \bx_i$ such that $\bw_{{\textstyle\mathstrut}2j-1}\bx_i \neq  \bw_{{\textstyle\mathstrut}2j}\bx_i \neq 0$, and $\forall j\in[\frac{\cm}{2}]$. Consequently
\begin{equation*}
\Psi^{(0)}(\bx_i) = \sum_{j=1}^\cm a_j \bw^{(0)}_j \mathbbm{1}_{{\textstyle\mathstrut} {\bw_j^{(t)}}} (\bx_i) = \frac{1}{2} \sum_{j=1}^\cm a_j \bw^{(0)}_j = \frac{1}{2}\ba^\intercal W^{(0)}.
\end{equation*}
$\forall \bx_i$ such that $\bw_{{\textstyle\mathstrut}2j-1}\bx_i \neq  \bw_{{\textstyle\mathstrut}2j}\bx_i \neq 0$. Finally
\begin{equation*}
\bw^{(1)}_r - \bw^{(0)}_r = -\mu a_r\big [ \frac{1}{2}\ba^\intercal W^{(0)}\hspace{-6pt}\sum_{\substack{i=1\\ \bw^{(0)}_r\bx_i\geq 0}}^n \hspace{-6pt} \bx_i  \bx_i^\intercal - \hspace{-6pt}\sum_{\substack{i=1\\ \bw^{(0)}_r\bx_i\geq 0}}^n \hspace{-6pt} y_i  \bx_i^\intercal \big ].
\end{equation*}

Next, we note that Assumption~1 implies
\begin{equation*}
\iE[\hspace{-6pt}\sum_{\substack{i=1\\ {\bw\cdot\bx_i\geq 0}}}^n \hspace{-6pt} \bx_i  \bx_i^\intercal ] = \frac{1}{2} \iE[\sum_{i=1}^n  \bx_i  \bx_i^\intercal ] = \frac{1}{2} \iE[\SXX].
\end{equation*}
for any vector $\bw$. Thus, if the sample size $n$ is large enough, at the beginning of training we expect to see
\begin{equation*}
\bw^{(t+1)}_r - \bw^{(t)}_r \approx -\mu \frac{a_r}{2} [ \ba^\intercal W^{(t)}\SXX - \tilde\bmm^{(t)}_r  ],\quad \forall r.
\end{equation*}
where row vector $\tilde\bmm^{(t)}_r$ denotes the vector difference between the centroids of classes $C_1$ and $C_2$, computed in the half-space defined by $\bw^{(t)}_r\cdot\bx\geq 0$. Finally (for small ${t}$)
\begin{equation*}
W^{(t+1)} - W^{(t)} \approx -\mu \frac{1}{2}\Big [ (\ba\ba^\intercal) W^{(t)}\SXX - \tilde M^{(t)} \Big ].
\end{equation*}
where $\tilde M^{(t)}$ denotes the matrix whose $r$-th row is $a_r \tilde\bmm^{(t)}_r$. This equation is reminiscent of the single-layer linear model dynamics $\hW^{(t+1)}=\hW^{(t)} - \mu G_r^{(t)}$, and we may conclude that when it holds and using the principal coordinate system, the rate of convergence of the $j$-th column of $W^{(t)}$ is governed by the singular value $d_j$ .
\end{proof}

\section{Additional Empirical Results}
\label{app:implication}

\subsection{Weight Initialization}
\label{app:sec:weight-initialization}

We evaluate empirically the weight initialization scheme from Def.~\ref{def:1} and (\ref{eq:normalization}). When compared to Glorot uniform initialization, the only difference between the two schemes lies in how the first and last layers are scaled. Thus, to highlight the difference between the methods, we analyze a fully connected linear network with a single hidden layer, whose dimension (the number of hidden neurons) is much larger than the input and output dimensions. We trained $N$=$10$ such networks on a binary classification problem, once with the initialization suggested in (\ref{eq:normalization}), and again with Glorot uniform initialization. While both initialization schemes achieve the same final accuracy upon convergence, our proposed initialization variant converges faster on both train and test datasets (see Fig.~\ref{fig:different_initlizations}).

\begin{figure}[thb!]
\begin{minipage}{.6\textwidth}
\begin{subfigure}{1\textwidth}
  
    \begin{subfigure}{.48\textwidth}
      \centering
      \includegraphics[width=1\linewidth]{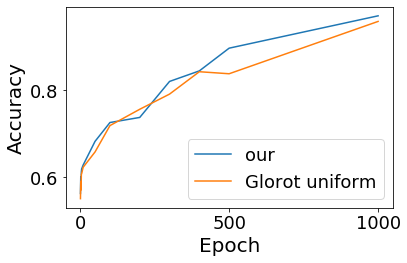}
      \caption{train}
    \end{subfigure}
    \begin{subfigure}{.48\textwidth}
      \centering
      \includegraphics[width=1\linewidth]{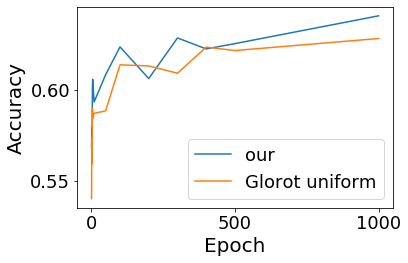}
      \caption{test}
    \end{subfigure}
\end{subfigure}
\caption{Learning curves of a fully connected linear network with one hidden layer, trained on the dogs and cats dataset, and initialized by either Glorot uniform initialization (orange), or the initialization proposed in (\ref{eq:normalization}) (blue).}
\label{fig:different_initlizations}
\end{minipage}
\hspace{0.2cm}
\begin{minipage}{.4\textwidth}
    \begin{subfigure}{1\textwidth}
      \centering
      \includegraphics[width=.8\linewidth]{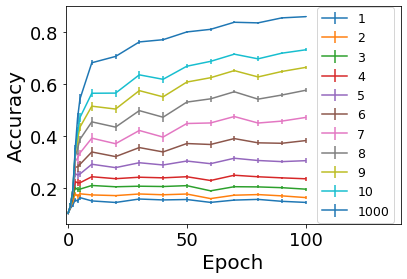}
    \end{subfigure}
    \caption{Evaluations on test-sets projected to the first $P$ principal components, for different values of $P$ (see legend) of $10$ VGG-19 models trained on CIFAR-10}
      \label{app:fig:acc_on_pca_components}
\end{minipage}
\end{figure}


\subsection{Divergence of Gradient Scale Matrices}
\label{app:divergence}

We now discuss some additional results, complementary to Section~\ref{sec:evolution}. In Fig.~\ref{fig:gradient_scale_validation}, we visualize how the gradient scale matrices $B_\ri^{(t)}(\cm)$ remain approximately equal to $I$ much longer than $A_\ri^{(t)}(\cm)$, for a specific layer in a 5-layered model. In Fig.~\ref{fig:app:gradient_scale_validation} we show a similar trend in all 5 layers of the network. Note that in the input and output layers, $B_\ri^{(t)}(\cm)$ and $A_\ri^{(t)}(\cm)$ are $I$ by definition, and hence their convergence is self-evident.

\begin{figure}[thb!]
\begin{subfigure}{.23\textwidth}
  \centering
  \includegraphics[width=1\linewidth]{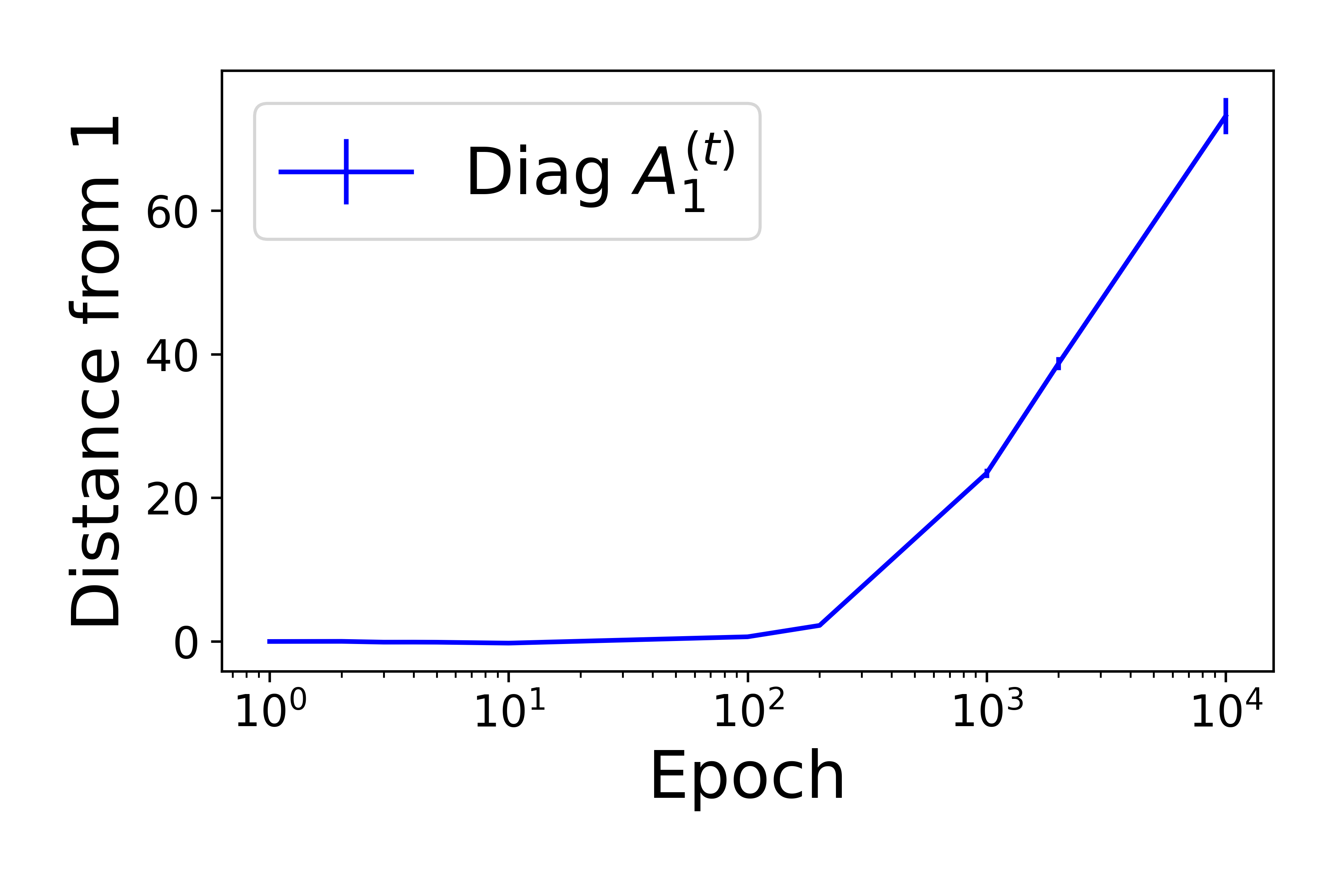}
 \caption{Diagonal, layer 1}
\end{subfigure}
\begin{subfigure}{.23\textwidth}
  \centering
  \includegraphics[width=1\linewidth]{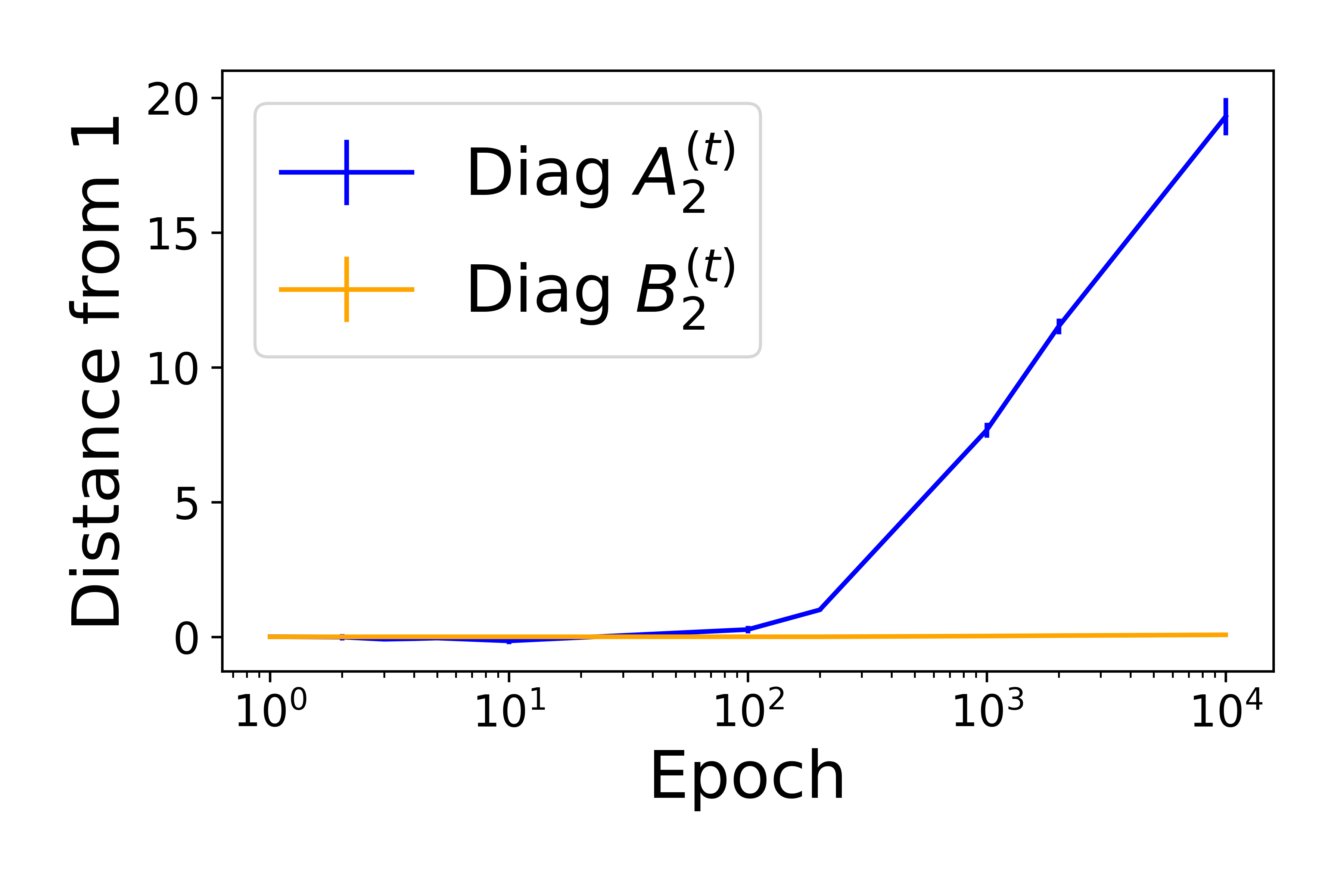}
 \caption{Diagonal, layer 2}
\end{subfigure}
\begin{subfigure}{.23\textwidth}
  \centering
  \includegraphics[width=1\linewidth]{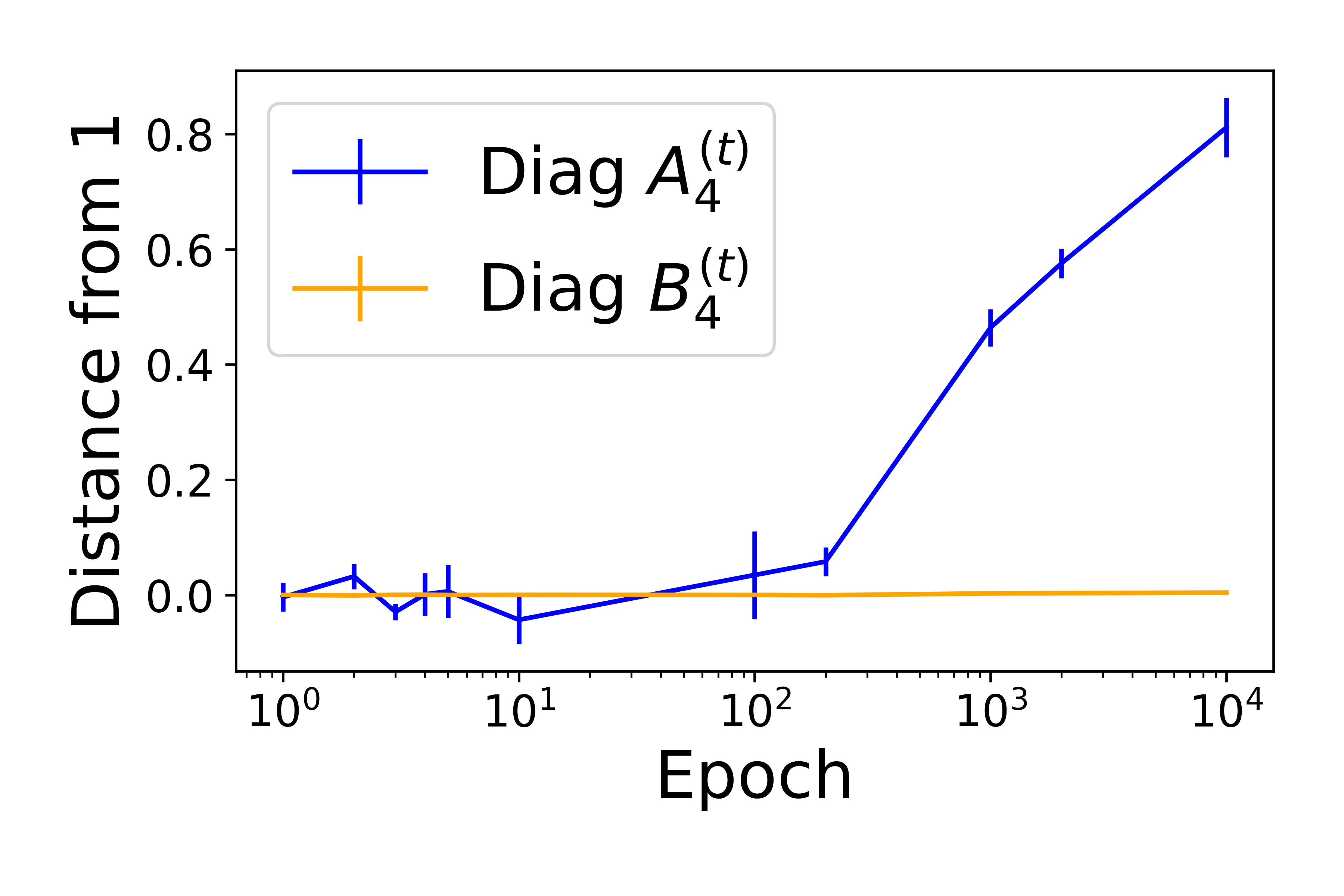}
 \caption{Diagonal, layer 4}
\end{subfigure}
\begin{subfigure}{.23\textwidth}
  \centering
  \includegraphics[width=1\linewidth]{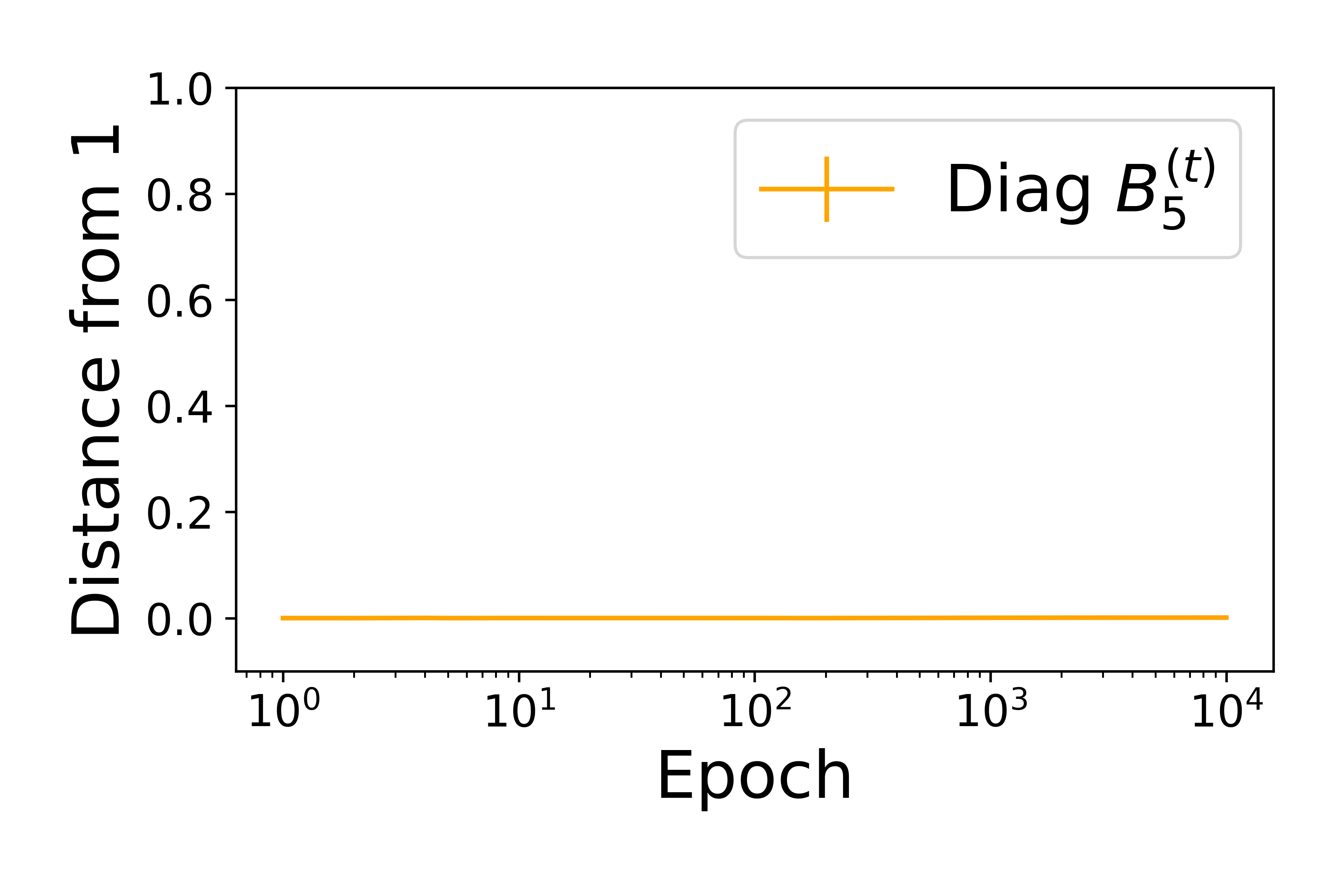}
 \caption{Diagonal, layer 5}
\end{subfigure}

\begin{subfigure}{.23\textwidth}
  \centering
  \includegraphics[width=1\linewidth]{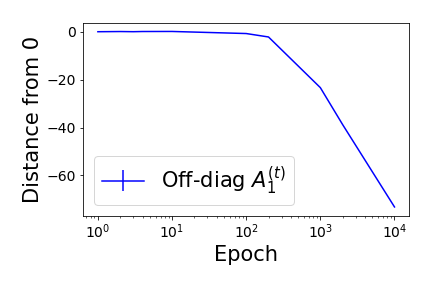}
 \caption{Off-diagonal, layer 1}
\end{subfigure}
\begin{subfigure}{.23\textwidth}
  \centering
  \includegraphics[width=1\linewidth]{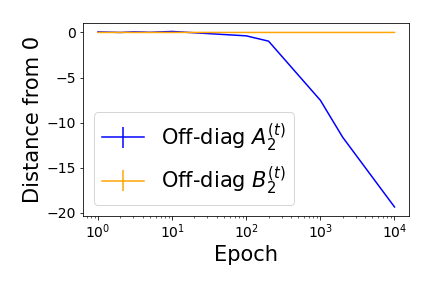}
 \caption{Off-diagonal, layer 2}
\end{subfigure}
\begin{subfigure}{.23\textwidth}
  \centering
  \includegraphics[width=1\linewidth]{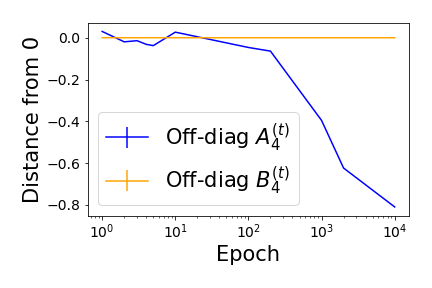}
 \caption{Off-diagonal, layer 4}
\end{subfigure}
\begin{subfigure}{.23\textwidth}
  \centering
  \includegraphics[width=1\linewidth]{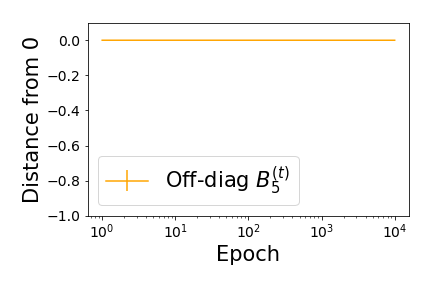}
 \caption{Off-diagonal, layer 5}
\end{subfigure}
\caption{Similar to Fig.~\ref{fig:gradient_scale_validation}, showing the dynamics of $A^{(t)}_l$ and $B_l^{(t)}$ in additional layers, when training $10$ $5$-layered linear networks on the small mammals dataset. (a-d) The empirical $L_2$-distance of the diagonal elements of $A^{(t)}_3$ and $B_3^{(t)}$ from their analytical value of $\alpha_l^{(t)}$ and $\beta_l^{(t)}$---$diag\left(A^{(t)}_l-\alpha_l^{(t)}I\right)$ and  $diag\left(B_l^{(t)} - \beta_l^{(t)}I\right)$---respectively. (e-h) The empirical $L_2$-distance of the off-diagonal elements of $A^{(t)}_l$ and $B_l^{(t)}$ from 0. The networks reach their maximal test accuracy in epoch $t=100$, before the divergence of $A^{(t)}_l$.}
\label{fig:app:gradient_scale_validation}
\end{figure}

\subsection{Empirical Validation of the PC-bias on Original Model}

\subsubsection[]{Models with $L_2$ Loss}
\label{app:sec:stds_square_loss}
In Section~\ref{sec:results}, we relaxed some of the theoretical assumptions while pursuing the empirical investigation, in order to match more commonly used models. Specifically, we changed the initialization to the commonly used Glorot initialization, replaced the $L_2$ loss with the cross-entropy loss, and employed SGD instead of the deterministic GD.
As can be expected from the theoretical results, without this relaxation the PC-bias becomes even more pronounced. To support this claim, we repeated the experiments of Section~\ref{sec:validation} and Fig.~\ref{fig:stds_and_distances_all_cases} with the assumptions of the theoretical analysis, showing the results in Fig.~\ref{app:fig:stds_cats_and_dogs_2_matrices_square_loss}. 

\begin{figure}[h!]
\begin{subfigure}{.16\textwidth}
      \centering
     \includegraphics[width=1\linewidth]{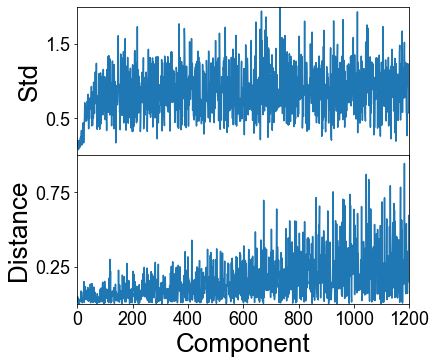}
    \end{subfigure}
    \begin{subfigure}{.16\textwidth}
      \centering
      \includegraphics[width=1\linewidth]{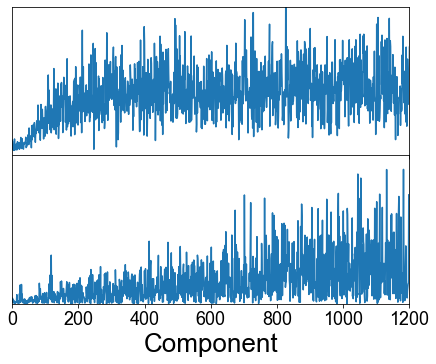}
    \end{subfigure}
    \begin{subfigure}{.16\textwidth}
      \centering
      \includegraphics[width=1\linewidth]{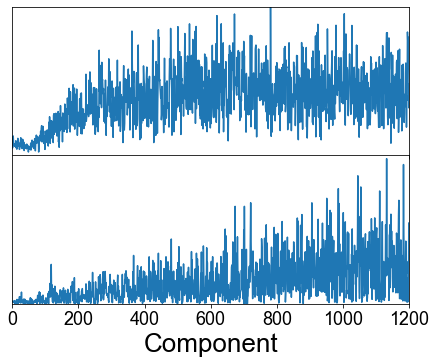}
    \end{subfigure}
    \begin{subfigure}{.16\textwidth}
      \centering
      \includegraphics[width=1\linewidth]{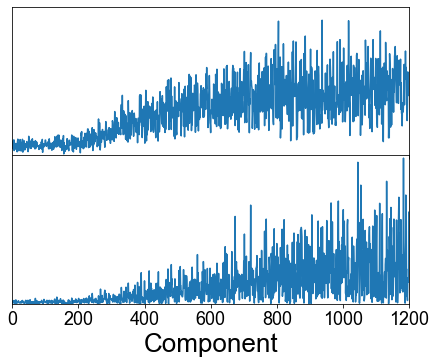}
    \end{subfigure}
    \begin{subfigure}{.16\textwidth}
      \centering
      \includegraphics[width=1\linewidth]{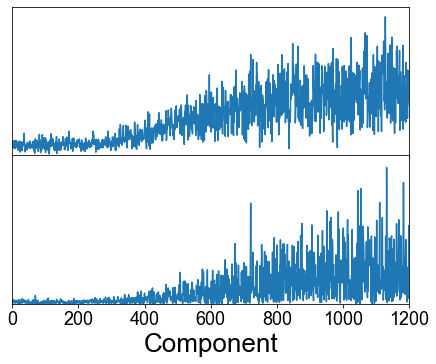}
    \end{subfigure}
    \begin{subfigure}{.16\textwidth}
      \centering
      \includegraphics[width=1\linewidth]{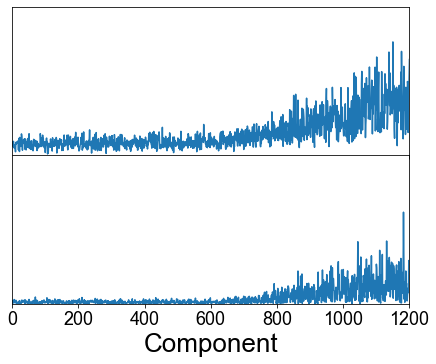}
    \end{subfigure}
\caption{Same as Fig.~\ref{fig:stds_and_distances_all_cases}, using the $L_2$ loss and the initialization scheme proposed in Def.~\ref{def:1}. Epochs plotted: $1,5,10,50,100,500$.}
\label{app:fig:stds_cats_and_dogs_2_matrices_square_loss}
\end{figure}

\subsubsection{Distance of Convergence in Each Principal Direction}
\label{app:sec:distance_to_convergence}
In Figs.~\ref{fig:stds_and_distances_all_cases}, \ref{app:fig:stds_cats_and_dogs_2_matrices_square_loss}, we see that weights in directions corresponding to larger principal components converge faster, in the sense that the std across different repetitions drops to zero faster along these directions. However, the optimal solution in each direction is drastically different. Directions corresponding to larger principal components tend to have lower values in their optimal solution. This could serve as a possible explanation for the convergence phenomenon---it might be possible that in all directions the distance of the current solution drops at the same speed, but as directions corresponding to higher principal values start nearer to the optimal solution, they seem to converge faster.

To test this hypothesis, we repeated the experiment from the previous section (Fig.~\ref{app:fig:stds_cats_and_dogs_2_matrices_square_loss}), and computed the distance in each direction from the optimal solution, normalized by the maximal difference achieved in each direction. We plot these results in Fig.~\ref{app:fig:rate_of_change_for_square_loss}. As suggested by our theory, we see that the normalized distance in each direction also drops faster in directions corresponding to larger principal components.

\begin{figure}[h!]
\begin{subfigure}{.19\textwidth}
      \centering
     \includegraphics[width=1\linewidth]{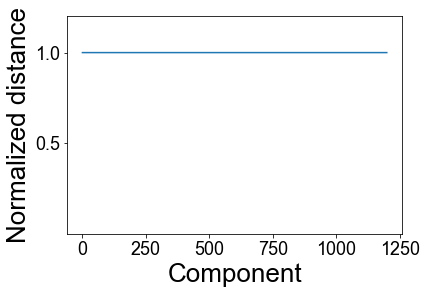}
    \end{subfigure}
    \begin{subfigure}{.19\textwidth}
      \centering
      \includegraphics[width=1\linewidth]{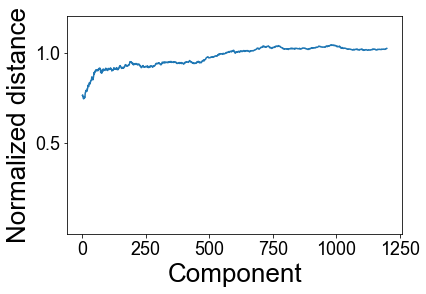}
    \end{subfigure}
    \begin{subfigure}{.19\textwidth}
      \centering
      \includegraphics[width=1\linewidth]{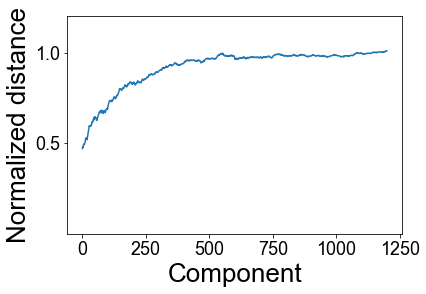}
    \end{subfigure}
    \begin{subfigure}{.19\textwidth}
      \centering
      \includegraphics[width=1\linewidth]{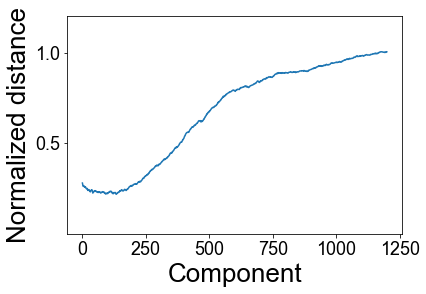}
    \end{subfigure}
    \begin{subfigure}{.19\textwidth}
      \centering
      \includegraphics[width=1\linewidth]{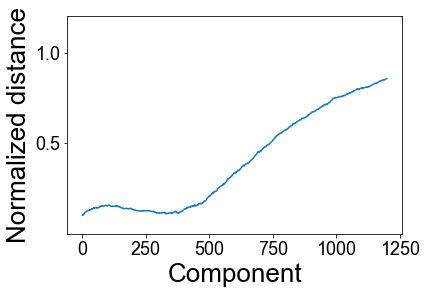}
    \end{subfigure}
\caption{Rate of convergence along the principal directions in different epochs. The value of the $X$-axis corresponds to the index of a principal eigenvalue, from the most significant to the least significant. The value of the $Y$-axis is the $L_2$ distance between the mean value of the weights and the optimal solution, normalized by the maximum distance across all epochs. The results are plotted for epochs: $1, 2, 10, 50, 200$, for $10$ $5$-layered linear networks trained on the cats and dogs dataset with the $L_2$ loss.}
\label{app:fig:rate_of_change_for_square_loss}
\end{figure}

\begin{figure}[h!]
\begin{subfigure}{.48\textwidth}
      \centering
     \includegraphics[width=1\linewidth]{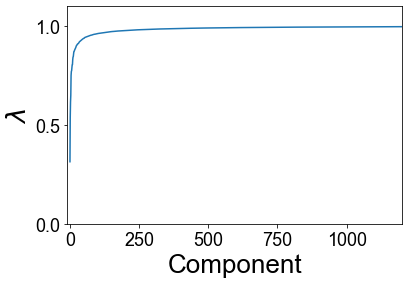}
     \caption{$\lambda_j$ for Fig.~\ref{fig:stds_cats_and_dogs_2_matrices}}
    \end{subfigure}
    \begin{subfigure}{.48\textwidth}
      \centering
      \includegraphics[width=1\linewidth]{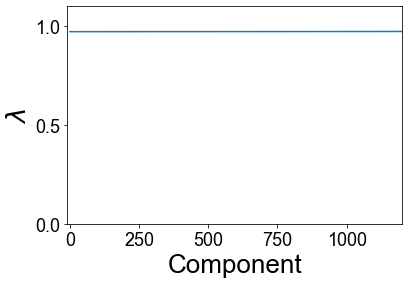}
      \caption{$\lambda_j$ for Fig.~\ref{fig:stds_cats_dogs_1_matrix_whitening}}
    \end{subfigure}
\caption{In the same experimental setup as in Fig.~\ref{fig:stds_and_distances_all_cases}, we show the evolution of $\lambda_j$ with epochs.}
\label{app:fig:lambda_j_figure}
\end{figure}

\subsection{Projection to Higher PC's}
\label{app:sec:project_to_higher_pc}

In Section~\ref{sec:pc_bias_non_linear_networks} we described an evaluation methodology, based on the creation of a modified \emph{test-set} by projecting each test example on the span of the first $P$ principal components. We repeat this experiment with VGG-19 networks on CIFAR-10, and plot the results in Fig.~\ref{app:fig:acc_on_pca_components}.

\subsection{Spectral Bias}
\label{app:sec:spectral_bias}

The \emph{spectral bias}, discussed in Section~\ref{sec:empricial-freq-bias}, can also induce a similar learning order in different networks. To support the discussion in Section~\ref{sec:empricial-freq-bias}, we analyze the relation between the \emph{spectral bias} and \emph{accessibility}, in order to clarify its relation to the \emph{Learning Order Constancy} and the \emph{PC-bias} (\S\ref{sec:spectral-bias-accessibility}) . First, however, we expand the scope of the empirical evidence for this effect to the classification scenario and real image data (\S\ref{sec:spectral-bias-in-classification}).

\subsubsection{Spectral Bias in Classification}
\label{sec:spectral-bias-in-classification}
\label{app:spectral}

\citet{rahaman2019spectral} showed that when regressing a 2D function by a neural network, the model seems to approximate the lower frequencies of the function before its higher frequencies. Here we extend this empirical observation to the classification framework. Thus, given frequencies $\kappa=(\kappa_1,\kappa_2,...,\kappa_m)$ with corresponding phases $\boldsymbol\varphi=(\varphi_1,\varphi_2,...,\varphi_m)$, we consider the mapping $\lambda:[-1,1]\rightarrow\mathbb{R}$ given by
\begin{equation}
\label{eq:spectral-bias}
    \lambda(z)=\sum_{i=1}^m \sin(2\pi \kappa_i z+\varphi_i):=\sum_{i=1}^m freq_i(z).
\end{equation}
Above $\kappa$ is strictly monotonically increasing, while $\boldsymbol\varphi$ is sampled uniformly. 

The classification rule is defined by $\lambda(z)\lessgtr 0$. We created a binary dataset whose points are fully separated by $\lambda(z)$, henceforth called the \emph{frequency dataset} (see the visualization in Fig.~\ref{fig:spectral_bias_classification_visualization} and details in \S\ref{app:datasets}). When training on this dataset, we observe that the frequency of the corresponding separator increases as learning proceeds, in agreement with the results of \citet{rahaman2019spectral}.

\begin{figure*}[htb]
\begin{subfigure}{1\textwidth}
    \begin{subfigure}{.98\textwidth}
      \centering
      \includegraphics[width=1\linewidth]{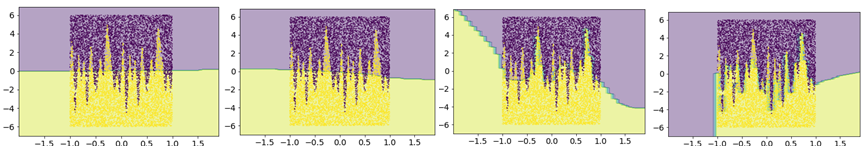}
    \end{subfigure}
\end{subfigure}

\caption{Visualization of the separator learned by st-VGG when trained on the frequency dataset, as captured in advancing epochs (from left to right): $1$, $100$, $1000$, $10000$. Each point represents a training example (yellow for one class and purple for the other). The background color represents the classification that the network predicts for points in that region. }
\label{fig:spectral_bias_classification_separating_line_visualization}
\end{figure*}
To visualize the decision boundary of an st-VGG network trained on this dataset as it evolves with time, we trained $N$=$100$ st-VGG networks. Since the data lies in $\R^2$, we can visualize it and the corresponding network's inter-class boundary at each epoch as shown in Fig.~\ref{fig:spectral_bias_classification_separating_line_visualization}
. We can see that the decision boundary incorporates low frequencies at the beginning of the learning, adding the higher frequencies only later on. The same qualitative results are achieved with other instances of st-VGG as well. We note that while the decision functions are very similar in the region where the training data is, at points outside this region they differ drastically across networks.

\subsubsection{Spectral Bias: Relation to Accessibility}
\label{sec:spectral-bias-accessibility}

\begin{figure}[thb!]
\begin{minipage}{.4\textwidth}
\begin{subfigure}{.87\textwidth}
    \begin{subfigure}{1\textwidth}
      \centering
      \includegraphics[width=1\linewidth]{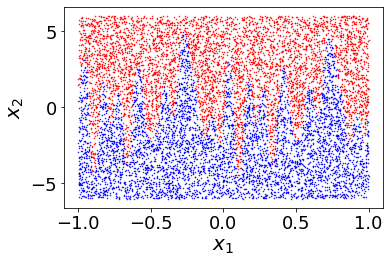}
    \end{subfigure}
\end{subfigure}

\caption{Visualization of the classification dataset used to extend \citet{rahaman2019spectral} to a classification framework.}
\label{fig:spectral_bias_classification_visualization}
\end{minipage}
\hspace{0.2cm}
\begin{minipage}{.6\textwidth}
\begin{subfigure}{1\textwidth}
  
    \begin{subfigure}{.48\textwidth}
      \centering
      \includegraphics[width=1\linewidth]{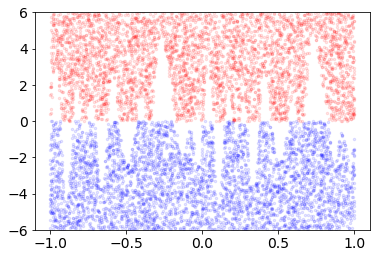}
      \caption{2 classes freq 0}
      \label{subfig:freq-first-2d-data-visual-freq0}
    \end{subfigure}
    \begin{subfigure}{.48\textwidth}
      \centering
      \includegraphics[width=1\linewidth]{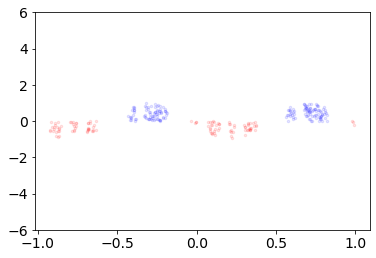}
      \caption{2 classes freq 1}
      \label{subfig:freq-first-2d-data-visual-freq1}
    \end{subfigure}

\end{subfigure}
\caption{Visualization of the \emph{critical frequency}, showing all the points in the 2D-frequency dataset with \emph{critical frequency} of (a) 0, and (b) 1.}
\label{fig:freq-first-2d-data-visual}
\end{minipage}
\end{figure}

To connect between the learning order, which is defined over examples, and the Fourier analysis of classifiers, we define for each example its \emph{critical frequency}, which characterizes the smallest number of frequencies needed to correctly classify the example. To illustrate, consider the \emph{frequency dataset} defined above. Here, the \emph{critical frequency} is defined as the smallest $j\in [m]$ such that $\lambda_j(z)\!=\!\sum_{i=1}^j freq_i(z)$ classifies the example correctly (see  Fig.~\ref{fig:freq-first-2d-data-visual}).

\begin{figure}[thp]
    \begin{subfigure}{1\textwidth}
        \begin{subfigure}{.49\textwidth}
          \centering
          \includegraphics[width=.85\linewidth]{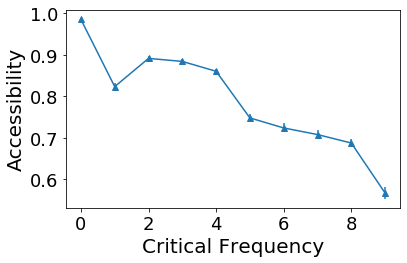}
          \caption{}
          \label{subfig:criticial_freq_accessiblity_correlation}
        \end{subfigure}
        \begin{subfigure}{.49\textwidth}
          \centering
          \includegraphics[width=.85\linewidth]{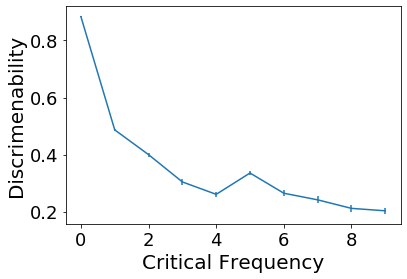}
          \caption{}
          \label{subfig:critical_freq_discremnability_correlation}
        \end{subfigure}
    \end{subfigure}
    \caption{(a) Correlation between \emph{critical frequency} and \emph{accessibility} score in the 2D-frequency dataset. (b) Correlation between \emph{discriminability} and \emph{critical frequency} in the 2D-frequency dataset.}
    \label{fig:accessiblity_first_freq_discrimination}
\end{figure}

In this binary classification task, we observe a strong connection between the order of learning and the \emph{critical frequency}. Specifically, we trained $N$=$100$ st-VGG networks on the \emph{frequency dataset}, and correlated the \emph{accessibility} scores with the \emph{critical frequency} of the examples (see \fig\ref{subfig:criticial_freq_accessiblity_correlation}). We see a strong negative correlation ($r=-0.93$, $p<10^{-2}$), suggesting that examples whose \emph{critical frequency} is high are learned last by the networks.

To see the effect of the \emph{spectral bias} in real classification tasks and extend the above analysis to natural images, we need to define a score that captures the notion of \emph{critical frequency}. To this end, we define the \emph{discriminability} measure of an example as the percentage out of its $k$ neighbors that share the same class as the example. Intuitively, an example has a low \emph{discriminability} score when it is surrounded by examples from other classes, which forces the learned boundary to incorporate high frequencies. In \fig\ref{subfig:critical_freq_discremnability_correlation} we plot the correlation between the \emph{discriminability} and the \emph{critical frequency} for the 2D frequency dataset. The high correlation ($r$=$-0.8$, $p<10^{-2}$) indicates that \emph{discriminability} indeed captures the notion of \emph{critical frequency}.

\paragraph{Discussion.} We find that the \emph{spectral bias} is connected to the LOC effect. Examples that can be classified correctly by a low-frequency function are also learned faster by the neural network. However, as the definition of such frequency is not trivial in higher dimensions, it is yet unclear if this result can be extended to image classification.

\section{Methodology}
\label{app:methods}

\subsection{Implementation Details and Hyperparameters}
\label{app:sec:hyper_params}

The results reported in Section \ref{sec:implication}represent the mean performance of $100$ st-VGG and linear st-VGG networks, trained on the small mammals dataset. The results reported in Section \ref{sec:implication} represent the mean performance of $10$ two-layered fully connected linear networks trained over the cats and dogs dataset. The results in Fig.~\ref{fig:learning_curves_with_label_change_all} represent the mean performance of the $100$ st-VGG network trained on the small mammals dataset. In every experimental setup, the network's hyper-parameters were coarsely grid-searched to achieve good performance over a validation set, for a fair comparison. Other hyper-parameters exhibit similar results.

\subsection{Generalization Gap}
\label{app:sec:noisy_images}

In Section~\ref{sec:implication} we discuss the evaluation of networks on datasets with amplified principal components. Examples of these images are shown in Fig.~\ref{fig:small_mammals_amplification_visualization}: the top row shows examples of the original images, the middle row shows what happens to each image when its $1.5\%$ most significant principal components are amplified, and the bottom row shows what happens when its $1.5\%$ least significant principal components are amplified. Amplification involved a factor of $10$, which is significantly smaller than the ratio between the values of the first and last principal components of the data. After amplification, all the images were re-normalized to have $0$ mean and std $1$ in every channel as customary.

\subsection{Architectures}
\label{app:architectures}

\paragraph{st-VGG.}
A stripped version of VGG which we used in many of the experiments. It is a convolutional neural network, containing 8 convolutional layers with 32, 32, 64, 64, 128, 128, 256, 256 filters respectively. The first 6 layers have filters of size $3\times3$, and the last 2 layers have filters of size $2\times2$. Every other layer is followed by a $2\times2$ max-pooling layer and a $0.25$ dropout layer. After the convolutional layers, the units are flattened, and there is a fully connected layer with 512 units followed by a $0.5$ dropout. The batch size we used was $100$. The output layer is a fully-connected layer with output units matching the number of classes in the dataset, followed by a softmax layer. We trained the network using the SGD optimizer, with cross-entropy loss. When training st-VGG, we used a learning rate of $0.05$.

\paragraph{Linear st-VGG.}
A linear version of the st-VGG network. In linear st-VGG, we change the activation function to the identity function, and replace max-pooling with average pooling with a similar stride. Similar to the non-linear case, a cross-entropy loss is being used. The network does not contain a softmax layer.

\paragraph{Linear fully connected network.}
An $L$-layered fully connected network. Each layer contains $1024$ weights, initialized with Glorot uniform initialization. $0.5$ dropout is used before the output layer. Networks are trained with an SGD optimizer, without momentum or $L_2$ regularization. Unless stated otherwise, a cross-entropy loss is being used. The network does not contain a softmax layer.

\subsection{Datasets}
\label{app:datasets}
In all the experiments and all the datasets, the data was always normalized to have $0$ mean and std $1$, in each channel separately.

\paragraph{Small Mammals.}
The small mammals dataset used in our experiments is the relevant super-class of the CIFAR-100 dataset. It contains $2500$ train images divided into 5 classes equally, and $500$ test images. Each image is of size $32\times 32\times 3$. This dataset was chosen due to its small size.

\paragraph{Cats and Dogs.}
The cats and dogs dataset is a subset of CIFAR-10. It uses only the 2 relevant classes, to create a binary problem. Each image is of size $32\times 32\times 3$. The dataset is divided to $20000$ train images ($10000$ per class) and $2000$ test images ($1000$ per class). This dataset is used when a binary problem is required.

\paragraph{ImageNet-20.}
The ImageNet-20 dataset \citep{russakovsky2015imagenet} is a subset of ImageNet containing $20$ classes. This data resembles ImageNet in terms of image resolution and data variability, but contains a smaller number of examples to reduce computation time. The dataset contains $26000$ train images ($1300$ per class) and $1000$ test images ($50$ per class). The choice of the $20$ classes was arbitrary, and contained the following classes: boa constrictor, jellyfish, American lobster, little blue heron, Shih-Tzu, scotch terrier, Chesapeake Bay retriever, komondor, snow leopard, tiger, long-horned beetle, warthog, cab, holster, remote control, toilet seat, pretzel, fig, burrito and toilet tissue.

\paragraph{Frequency dataset}
A binary 2D dataset, is used in Section~\ref{sec:empricial-freq-bias}, to examine the effects of spectral bias in classification. The data is define by the mapping $\lambda:[-1,1]\rightarrow\mathbb{R}$ given in (\ref{eq:spectral-bias}) by
\begin{equation*}
    \lambda(z)=\sum_{i=1}^m \sin(2\pi \kappa_i z+\varphi_i):=\sum_{i=1}^m freq_i(z),
\end{equation*}
with frequencies $\kappa=(\kappa_1,\kappa_2,...,\kappa_m)$ and corresponding phases $\boldsymbol\varphi=(\varphi_1,\varphi_2,...,\varphi_m)$. The classification rule is defined by $\lambda(z)\lessgtr 0$.

In our experiments, we chose $m=10$, with frequencies $\kappa_1=0,\kappa_2=1,\kappa_3=2,...,\kappa_{10}=9$. Other choices of $m$ yielded similar qualitative results. The phases were chosen randomly between $0$ and $2\pi$, and were set to be: $\varphi_1=0$, $\varphi_2=3.46$, $\varphi_3=5.08$, $\varphi_4=0.45$, $\varphi_5=2.10$, $\varphi_6=1.4$, $\varphi_7=5.36$, $\varphi_8=0.85$, $\varphi_9=5.9$, $\varphi_{10}=5.16$. As the first frequency is $\kappa_1=0$, the choice of $\varphi_0$ does not matter, and is set to $0$. The dataset contained $10000$ training points, and $1000$ test points, all uniformly distributed in the first dimension between $-1$ and $1$ and in the second dimension between $-2\pi$ and $2\pi$. The labels were set to be either $0$ or $1$, in order to achieve perfect separation with the classification rule $\lambda(z)$.

\end{document}